\documentclass[journal]{IEEEtran}
\IEEEoverridecommandlockouts
\usepackage{cite}
\usepackage{amsmath, amsthm, amssymb,amsfonts}
\usepackage{extpfeil}
\usepackage{bm}
\usepackage{algorithmic}
\usepackage[ruled,linesnumbered]{algorithm2e}
\usepackage{graphicx}
\usepackage{textcomp}
\usepackage{subfigure}
\usepackage{booktabs}
\usepackage{colortbl}
\usepackage{xcolor}
\usepackage{soul}
\usepackage{hyperref}

\sethlcolor{yellow!30}
\newcommand{\findwidth}{%
  \ifboolexpr{test {\ifnumcomp{\value{page}}{=}{1}} or test {\@twocolumnfalse}}%
    {\textwidth}%
    {\linewidth}%
}

\def\BibTeX{{\rm B\kern-.05em{\sc i\kern-.025em b}\kern-.08em
    T\kern-.1667em\lower.7ex\hbox{E}\kern-.125emX}}

\newtheorem{assumption}{Assumption}
\newtheorem{theorem}{Theorem}
\newtheorem{lemma}[theorem]{Lemma}
\newtheorem{proposition}[theorem]{Proposition}
\newtheorem{corollary}[theorem]{Corollary}
\newtheorem{definition}[theorem]{Definition}
\newtheorem{remark}[theorem]{Remark}
\DeclareMathOperator{\mR}{\mathbb{R}}
\DeclareMathOperator{\mE}{\mathbb{E}}
\DeclareMathOperator{\mP}{\mathbb{P}}
\DeclareMathOperator{\diag}{diag}
\DeclareMathOperator{\cov}{cov}

\begin{document}
\title{U-OBCA: Uncertainty-Aware Optimization-Based Collision Avoidance via Wasserstein Distributionally Robust Chance Constraints
\thanks{
The authors are with School of Automation and Intelligent Sensing, Shanghai Jiao Tong University and Institute of Medical Robotics, Shanghai Jiao Tong University, Shanghai 200240, China. And they are also with the Key Laboratory of System Control and Information Processing, Ministry of Education of China, and Shanghai Engineering Research Center of Intelligent Control and Management, Shanghai 200240, China. 

Han Zhang is the corresponding author (email: zhanghan\_tc@sjtu.edu.cn).
Zehao Wang and Yuxuan Tang have the equal contribution.

This work is supported by the National Natural Science Foundation of China (Grant 62573287), Natural Science Foundation of Shanghai under Grant 25ZR1401208 and the Science and Technology Commission of Shanghai Municipality (Grant 20DZ2220400).
}
}

\author{
\IEEEauthorblockN{Zehao Wang, Yuxuan Tang, Han Zhang, Jingchuan Wang and Weidong Chen}
}

\maketitle

\begin{abstract}
Uncertainties arising from localization error, trajectory prediction errors of the moving obstacles and environmental disturbances pose significant challenges to robot's safe navigation. Existing uncertainty-aware planners often approximate polygon-shaped robots and obstacles using simple geometric primitives such as circles or ellipses. Though computationally convenient, these approximations substantially shrink the feasible space, leading to overly conservative trajectories and even planning failure in narrow environments. In addition, many such methods rely on specific assumptions about noise distributions, which may not hold in practice and thus limit their performance guarantees.
To address these limitations, we extend the Optimization-Based Collision Avoidance (OBCA) framework to an uncertainty-aware formulation, termed \emph{U-OBCA}. The proposed method explicitly accounts for the collision risk between polygon-shaped robots and obstacles by formulating OBCA-based chance constraints, and hence avoiding geometric simplifications and reducing unnecessary conservatism. These probabilistic constraints are further tightened into deterministic nonlinear constraints under mild distributional assumptions, which can be solved efficiently by standard numerical optimization solvers.
The proposed approach is validated through theoretical analysis, numerical simulations and real-world experiments. The results demonstrate that U-OBCA significantly mitigates the conservatism in trajectory planning and achieves higher navigation efficiency compared to existing baseline methods, particularly in narrow and cluttered environments.
\end{abstract}

\def\abstractname{Note to Practitioners}
\begin{abstract}
Mobile robots that operates in confined spaces—such as parking garages, warehouses, hospital corridors, or assisted-living facilities—often face a trade-off between safety and efficiency. To stay safe under the existence of localization error, trajectory prediction errors of the moving obstacles and environmental disturbances, many existing planning systems intentionally keep robots far away from obstacles. While this strategy reduces the collision risk, it can also make robots overly cautious, slow, or even unable to maneuver in tight spaces.
The method presented in this paper is intended for practitioners who need to operate robots safely without sacrificing maneuverability in narrow environments. Instead of simplifying the robot and surrounding obstacles into overly conservative shapes, the proposed approach directly uses their actual geometric outlines while still accounting for uncertainty in perception, trajectory prediction and disturbances. This allows the robot to move closer to obstacles in a controlled and quantifiable manner, rather than relying on large safety margins chosen by trial and error.
From an implementation perspective, the method can be integrated into existing optimization-based planners and solved using standard numerical optimization tools. It does not require detailed tuning of noise distributions or specialized probabilistic models, making it suitable for systems whose perception noise, moving obstacles trajectory prediction error and environmental disturbances vary with environment, hardware, or operating conditions.
Although the proposed method incurs higher computational cost than planners based on simplified geometry, it remains feasible for trajectory planning on modern computing hardware. In practice, it is particularly well suited for low-speed navigation, precision maneuvers, and tasks performed in highly constrained environments, where safety and efficient use of limited space are more critical than achieving maximum speed.

\end{abstract}

\begin{IEEEkeywords}
Motion planning, collision avoidance, optimization and optimal control, autonomous vehicle navigation.
\end{IEEEkeywords}

\setlength{\arraycolsep}{2pt}

\section{Introduction}
Ensuring safety is one of the central challenges in mobile robot and autonomous vehicle navigation in real-world environments. 
A major source of difficulty arises from uncertainties induced by perception noise, prediction error and environmental disturbances, which inevitably degrade the reliability of collision avoidance. 
This challenge becomes particularly critical in narrow environments, such as parking garages or confined corridors, where the robot must operate in close proximity to obstacles. 
In such scenarios, even small uncertainties may significantly increase the collision risk or fail to plan any feasible trajectories (a.k.a freezing robot problem).

A variety of approaches have been proposed to address this issue, including dynamic window-based methods \cite{molinos2019dynamic}, robust collision avoidance schemes \cite{RCA}, and velocity obstacle formulations \cite{lopez2020obstacle}. 
More recently, uncertainty-aware trajectory planning methods have incorporated probabilistic tools—such as chance constraints \cite{blackmore2011chance, da2019collision, zhu2019chance, Castillo2020, nakka2022trajectory, theurkauf2023chance, jasour2023convex, xu2024distributionally, summers2018distributionally}, value-at-risk \cite{lew2023risk}, and conditional value-at-risk \cite{hakobyan2021wasserstein, hakobyan2019risk, yin2023risk, ryu2024integrating}—to provide safety guarantees in terms of collision probability. 
However, most of these planners \cite{zhu2019chance, Castillo2020, theurkauf2023chance, lew2023risk, nakka2022trajectory, yin2023risk, ryu2024integrating} approximate the robot and obstacles using circles or ellipses, even though many practical systems, such as vehicles, were more accurately modeled as polygons. 
While such approximations lead to low-dimensional nonlinear programs that can be efficiently solved using gradient-based methods, they inevitably introduce conservatism. 
As illustrated in Fig.~\ref{fig: The shortage of approximating the polygon-shaped vehicles and obstacles as circles or ellipses.}, this conservatism may result in overly cautious trajectories or even infeasibility in narrow environments. 
Therefore, explicitly accounting for polygonal shapes is essential for reliable navigation in constrained spaces.

To provide rigorous safety guarantees for polygon-shaped robots and obstacles, \cite{blackmore2011chance, da2019collision} model chance-constrained collision avoidance as a disjunction of linear inequalities. 
This formulation introduces binary variables, yielding a Mixed-Integer Nonlinear Programming (MINLP) problem whose computational complexity grows exponentially with the number of obstacles. 
Moreover, these methods only consider the uncertainty in obstacle position, while the uncertainty in obstacle orientation—which can significantly affect the occupied space—is neglected.

\begin{figure}[!htbp]
\centering
\subfigure[Polygon-shaped vehicle and obstacle are collision-free, while their circle or ellipse approximations violate collision constraints.]{
\includegraphics[width=0.5\hsize]{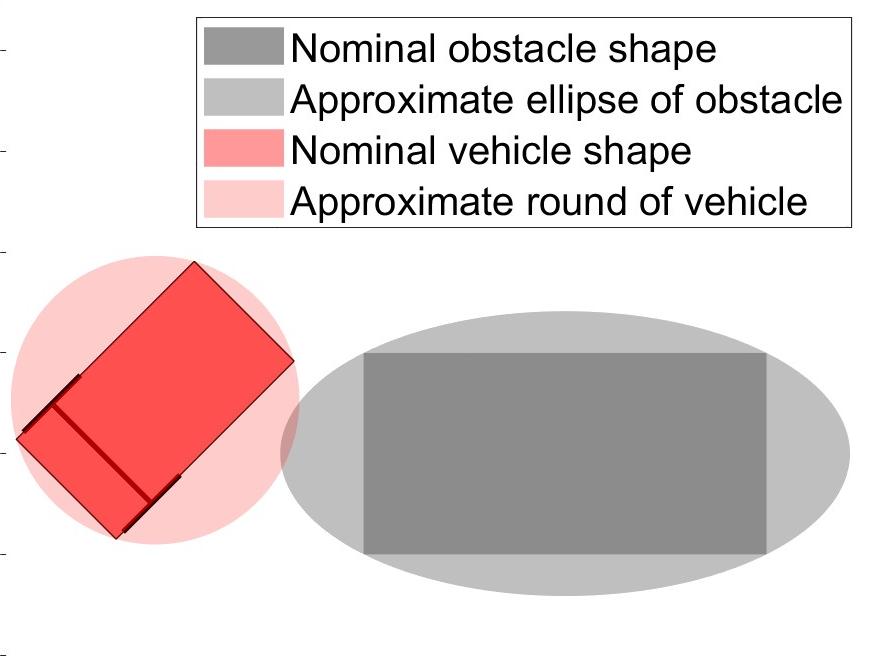}
\label{fig: The shortage of approximating the polygon-shaped vehicles and obstacles as circles or ellipses.}}
\subfigure[Obstacle shape with and without uncertainty in position and orientation.]{
\includegraphics[width=0.4\hsize]{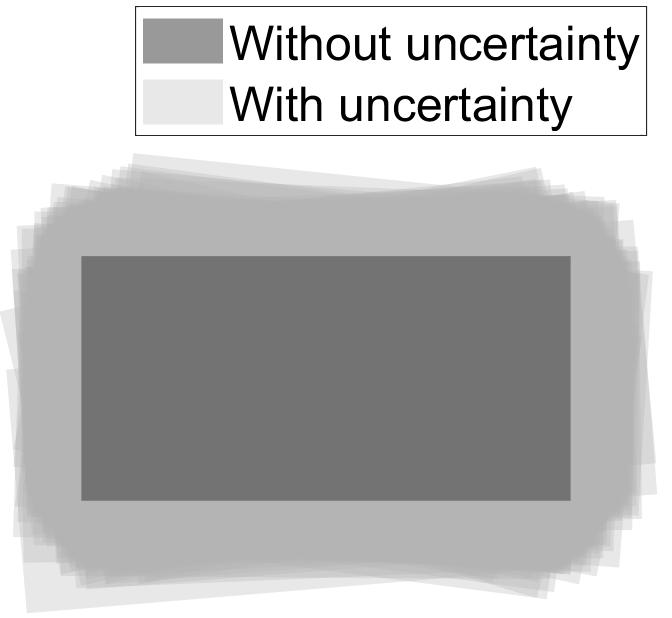}
\label{fig: The shape of the obstacle with/without the uncertainties in the position and the orientation angle.}}
\caption{Motivation: collision risk assessment between polygonal shapes under pose uncertainty.}
\label{fig: The motivation of this paper is to measure the collision risk between the polygons under the uncertainties.}
\end{figure}

On the other hand, several nonlinear constraints have been proposed for collision avoidance between polygons \cite{gerdts2012path, schulman2014motion, li2015unified, OBCA}. 
Although these methods guarantee safety in deterministic settings, they do not directly account for the uncertainty in obstacle pose. 
A common workaround is to enforce a minimum safety distance between the robot and obstacles; however, selecting such a distance is often heuristic, time-consuming, and leads to excessive conservatism. 
Furthermore, as shown in Fig.~\ref{fig: The shape of the obstacle with/without the uncertainties in the position and the orientation angle.}, the uncertainty in obstacle orientation induces highly irregular risk regions, making distance-based tuning particularly challenging. 
Consequently, the existing approaches fail to simultaneously achieve uncertainty-aware safety guarantees, preservation of feasible space, and computational efficiency for polygon-shaped robots and obstacles.

In this work, we propose an \emph{Uncertainty-aware Optimization-Based Collision Avoidance} (U-OBCA) framework. 
We introduce chance constraints to explicitly bound the collision probability between polygonal robots and obstacles. 
To enable efficient numerical optimization, these chance constraints are further tightened and reformulated as deterministic nonlinear constraints, which preserve feasible space while avoiding integer variables and maintaining low computational cost.
In particular, the main contribution of this paper is three-fold:
\begin{itemize}
    \item We generalize the Optimization-Based Collision Avoidance (OBCA) framework \cite{OBCA} to uncertainty-aware settings using chance constraints. 
    Unlike ellipse-based approximations, the proposed formulation directly handles polygonal shapes and significantly reduces conservatism, enabling safe navigation in narrow environments.
    \item We strengthen the chance constraints using Wasserstein distributionally robust formulations and further transform them into deterministic nonlinear constraints, making the consequent optimization problem suitable for being solved numerically.
    This reformulation does not require Gaussian assumptions on the uncertainty and relies only on mild assumptions on the distribution. 
    \item Through simulations and experiments, we demonstrate that the proposed method substantially reduces conservatism and improves the navigation efficiency compared with the state-of-the-art baselines.
\end{itemize}
\textit{Notations}: 
The $\ell_2$-norm on $\mR^n$ is denoted by $\|\cdot\|$. 
$\mathbf{I}_{n \times n} \in \mR^{n \times n}$ and $\mathbf{0}_{m \times n} \in \mR^{m \times n}$ denote the identity matrix and zero matrix, respectively. 
For integers $n_1$ and $n_2$, $n_1\!:\!n_2$ denotes the sequence $n_1, n_1+1, \ldots, n_2$. 
Bold italic symbols represent stochastic variables. 
$\mathcal{P}(\bm{\cdot})$ denotes probability, while $\mE[\bm{X}]$ and $\mathbb{P}_{\bm X}$ denote the expectation and distribution of the random variable $\bm{X}$, respectively.

\section{Problem Statement}
\label{sec: Problem Statement}
In this work, we consider the motion planning problem of a vehicle in a narrow environment with $N_r$ round-shaped and $N_p$ polygon-shaped obstacles under uncertainties caused by perception noise, prediction error and environmental disturbance. More specifically, we denote the index set of the round-shaped obstacles as $\mathcal{O}_r:=\{1:N_r\}$, and the index set of the polygon-shaped obstacles as $\mathcal{O}_p:=\{N_r+1:N_r+N_p\}$. Consequently, the index set of all obstacles is defined as $\mathcal{O}:=\{1:N_r+N_p\}$. The trajectory of the vehicle is discretized with
a sampling period $\Delta t = \frac{T}{N}$, where $T$ denotes the planning time horizon and $N$ denotes the sampling number. 

On the other hand, the nominal discrete-time kinematics of the vehicle takes the form
\begin{equation}
\label{eq: nominal kinematics}
\begin{aligned}
    &\bar{s}^v_{k+1} = \bar{s}^v_{k}+f(\bar{s}^v_{k}, u_k) \cdot 
 \Delta t, \quad k = 1: N, \\
\end{aligned}
\end{equation}
where $f(\cdot)$ is the nominal kinematic of the vehicle, $\bar{s}^v_{k} \in \mR^{n_v}$ is the vehicle's nominal state at the $k$-th time step, and $n_{v}$ is the dimension of the vehicle's state. 
Moreover, the vehicle's nominal state at time step $k$ must include the vehicle's position $(\bar{x}^v_{k},\bar{y}^v_{k})$ and orientation $\bar{\theta}^v_{k}$ in the \textbf{world frame}. 
Other state components, such as velocities, can be specified according to users' practical demands. 
$u_k \in \mR^{n_u}$ is the vehicle's control input at the $k$-th time step, where $n_u$ is the dimension of the control input. Similarly, we denote the nominal poses of the $j$-th obstacle at the $k$-th time step as the $[\bar{x}^{o_j}_{k},\bar{y}^{o_j}_{k}, \bar{\theta}^{o_j}_{k}]^T$, which includes the position and orientation information also in the \textbf{world frame}.

Based on the nominal poses, we further formulate the states under the uncertainties caused by perception and disturbance. In particular, 
let $\bm{w}^v_k, \bm{w}^{o_j}_k\in\mR^{3}$ be the stochastic disturbances and perception noise that affect the vehicle's pose at time step $k$.  
Consequently, we model the pose of the vehicle at the $k$-th time step as 
\begin{align*}
\begin{bmatrix}
\bm{x}^v_k \\
\bm{y}^v_k \\
\bm{\theta}^v_k
\end{bmatrix}\!=\!
\begin{bmatrix}
    \bar{x}^v_k\\
    \bar{y}^v_k\\
    \bar{\theta}^v_k
\end{bmatrix}
\!+\!
\underbrace{\begin{bmatrix}
    \bm w_{x,k}^v\\
    \bm w_{y,k}^v\\
    \bm w_{\theta,k}^v
\end{bmatrix}}_{\bm w_k^v},
\begin{bmatrix}
    \bm{x}^{o_j}_k\\
    \bm{y}^{o_j}_k\\
    \bm{\theta}^{o_j}_k
\end{bmatrix}\!=\!
\begin{bmatrix}
    \bar{x}_k^{o_j}\\
    \bar{y}_k^{o_j}\\
    \bar{\theta}_k^{o_j}
\end{bmatrix}\!+\!
\underbrace{\begin{bmatrix}
    \bm w_{x,k}^{o_j}\\
    \bm w_{y,k}^{o_j}\\
    \bm w_{\theta,k}^{o_j}
\end{bmatrix}}_{\bm w_k^{o_j}},
k\!=\!0:N.
\end{align*}
Notably, our goal is to plan the ``collision-free" future trajectory $\{\bar{s}_k^v\}_{k=1}^{N}$ (without losing the generality, let the current time step be $k=0$) based on the perception at the current time step $k=0$. Consequently, $\bm{w}_0^v$ and $\{\bm{w}_k^{o_j}\}_{k=0}^{N}$ in the above formulation shall be interpreted as the observation noise in pose perception and prediction, while $\{\bm{w}_k^v\}_{k=1}^{N}$ shall be interpreted as the environment disturbances.
Moreover, we make the following assumptions regarding the noises.
\begin{assumption}\label{ass: zero_mean_and_known_covariance}
    It is assumed that the expectations of the noises $\bm{w}^v_k$ and $\bm{w}^{o_j}_k$ are zeros and the covariance matrices of the noises are known a priori. Moreover, the noises in the positions and the orientations are assumed to be independent, and the noises of the vehicle $\bm{w}^v_k$ and the obstacles $\bm{w}^{o_j}_k$ are independent as well.
\end{assumption}
\begin{assumption}\label{ass: known_sin_cos}
    Let $\bm{w}^v_{\theta, k}$ and $\bm{w}^{o_j}_{\theta, k}$ denote the noises component that acts on the vehicle's and $j$-th obstacle's orientation at $k$-th time step, respectively. Their distributions are symmetric about zero. It is further assumed that we can off-line estimate or compute the expectations and covariances of the $\sin$ and $\cos$ functions of the noises, i.e.,
    $\big\{\sin(\bm{w}_{\theta, k}^v), \cos(\bm{w}_{\theta, k}^v), \sin(\bm{w}_{\theta, k}^{o_j}), \cos(\bm{w}_{\theta, k}^{o_j})\big\}$.
\end{assumption}
As we shall see later, based on the expectation and the covariance of these noises functions, we can directly compute the expectation and the covariance of the $\cos\bm{\theta}^v_k$, $\sin\bm{\theta}^v_k$, $ \cos\bm{\theta}^{o_j}_k$ and $\sin\bm{\theta}^{o_j}_k$.
\begin{assumption}\label{ass: bounded_wasserstein_distance}
    Given a specific measurable function $\bm{h}$, we assume the Wasserstein distance between the distribution $\mathbb{P}_{\bm h}$ and the Gaussian distribution $\mathbb{P}^R_{\bm h} = \mathcal{N}(\mE[\bm h], \Sigma_{\bm h})$ is smaller than a known constant bound $\theta_w$.
\end{assumption}
In particular, Wasserstein distance measures the ``distance" between two distributions. The definition of the Wasserstein distance will be explained in detail in Sec.~\ref{sec: Wasserstein Distributional Ambiguity}. 
\begin{assumption}\label{ass: deterministic_sizes_and_shape}
    The shapes and the sizes of the vehicle and the obstacles are deterministic and the uncertainties only exist in the poses of the vehicle and the obstacles.
\end{assumption}
Notably, the above assumptions are mild since: i) we do not require the noises to have any specific distributions (e.g. Gaussian distribution); ii) we can estimate the moments of these noises' functions through off-line identification experiments; and
iii) the assumption on the ``distance" between $\mathbb{P}_{\bm h}$ $\mathbb{P}^R_{\bm h}$ always holds as long as we choose a large enough $\theta_w$.

Next, we formulate the ``space" that is occupied by the vehicle and the obstacles. Suppose there are $N_r$ round-shaped obstacles and $N_p$ polygon-shaped obstacles. Let $\bm{t}^v_k = \begin{bmatrix} \bm{x}^v_k & \bm{y}^v_k\end{bmatrix}^T$ and $\bm{t}^{o_j}_k = \begin{bmatrix} \bm{x}^{o_j}_k & \bm{y}^{o_j}_k\end{bmatrix}^T$ denote
the positions of the vehicle and the $j$-th obstacle in the world coordinate frame at the $k$-th time step, respectively.
Then, denote $\bar{\mathbb{V}}$ and $\bar{\mathbb{O}}_j, j \in \mathcal{O}$ as the ``space'' that is occupied by the vehicle and $j$-th obstacle in their \textbf{ego frame}.
Furthermore, let $\mathbb{V}$ and $\mathbb{O}_j$ denote the ``space'' that is occupied by the vehicle and the $j$-th obstacle in the \textbf{world frame}. Our aim is to derive the explicit expression for $\mathbb{V}$ and $\mathbb{O}_j$.

To this end, for the vehicle and polygon-shaped obstacles, the occupied space in their ego frame take the form $\bar{\mathbb{V}}:= \{\bar{p} \in \mR^2: \bar{A}^v \bar{p} \preceq \bar{b}^v \}$ and $\bar{\mathbb{O}}_j:= \{\bar{p} \in \mR^2: \bar{A}^{o_j} \bar{p} \preceq \bar{b}^{o_j}\}, j \in \mathcal{O}_p$, where $\bar{A}^v \in \mR^{l_v\times 2}$, $\bar{b}^v \in \mR^{l_v}$, $\bar{A}^{o_j} \in \mR^{l_{o_j}\times 2}$, $\bar{b}^{o_j} \in \mR^{l_{o_j}}$ and $l_v,l_{o_j}$ are the number of the vehicle's and the $j$-th obstacle's edges.
Similarly, for round-shaped obstacles, the occupied space in their ego frame take the form $\bar{\mathbb{O}}_j:=\{\bar{p} \in \mR^2:  \|\bar{p}\| \leq r_{o_j}\}, j \in \mathcal{O}_r$, where $r_{o_j}$ is the radius of the $j$-th obstacle.

Consequently, for the round-shaped obstacles, the occupied space in world frame at the $k$-th time step shall take the form $\mathbb{O}^j_k:= \{p \in \mR^2:  \|p-\bm{t}^{o_j}_k\| \leq r_{o_j}\}, j \in \mathcal{O}_r$.
Moreover, for the vehicle and the polygon-shaped obstacles, it follows from coordinate transformations that their occupied space at the $k$-th time step in the world frame shall take the form $\mathbb{V}_k:=\{p \in \mR^2: \bm{A}^v_k p \preceq \bm{b}^v_k\}$ and $\mathbb{O}^j_k:= \{p \in \mR^2: \bm{A}^{o_j}_k p \preceq \bm{b}^{o_j}_k\}, j \in \mathcal{O}$, where
\begin{equation}
\label{eq: ego world frame transfromation}
\begin{aligned}
    \bm{A}^v_k \;= \;&\bar{A}^v \; R^{-1}(\bm{\theta}^v_k \;\; ),\\
    \bm{A}^{o_j}_k =\; &\bar{A}^{o_j}R^{-1}(\bm{\theta}^{o_j}_k),\\
    \bm{b}^v_k \;\; =\; &\bar{A}^v \; R^{-1}(\bm{\theta}^v_k \;\; )\bm{t}^v_k \: + \bar{b}^v,\\
    \bm{b}^{o_j}_k \; =\; &\bar{A}^{o_j}R^{-1}(\bm{\theta}^{o_j}_k)\bm{t}^{o_j}_k + \bar{b}^{o_j},\quad k = 1: N,
\end{aligned}
\end{equation}
and $R(\bm{\theta})$ is the rotation matrix that corresponds to rotation angle $\bm{\theta}$, namely, $R(\bm{\theta}) = \begin{bmatrix}
\cos\bm{\theta} & -\sin\bm{\theta}\\   
\sin\bm{\theta} & \;\;\;\cos\bm{\theta}\\  
\end{bmatrix}$.

Next, to guarantee the vehicle and the obstacles are collision-free, we consider the distance between the vehicle $\mathbb{V}_k$ and the $j$-th obstacle $\mathbb{O}^j_k$ at the $k$-th time step. As illustrated in Fig.~\ref{fig: The distance between the vehicle and the obstacle.}, the distance between the vehicle and the $j$-th obstacle at the $k$-th time step takes the form
\begin{equation}
\label{eq: distance between two polygons}
\operatorname{dist}(\mathbb{V}_k,\mathbb{O}^j_k) \!:=\!\min_t\{\|t\|\colon(\mathbb{V}_k+t)\cap\mathbb{O}^j_k\!\neq\!\emptyset\}.
\end{equation}
\begin{figure}[!htbp]
\centering
\subfigure[When the obstacle is round-shaped.]{\includegraphics[width=0.4\hsize]{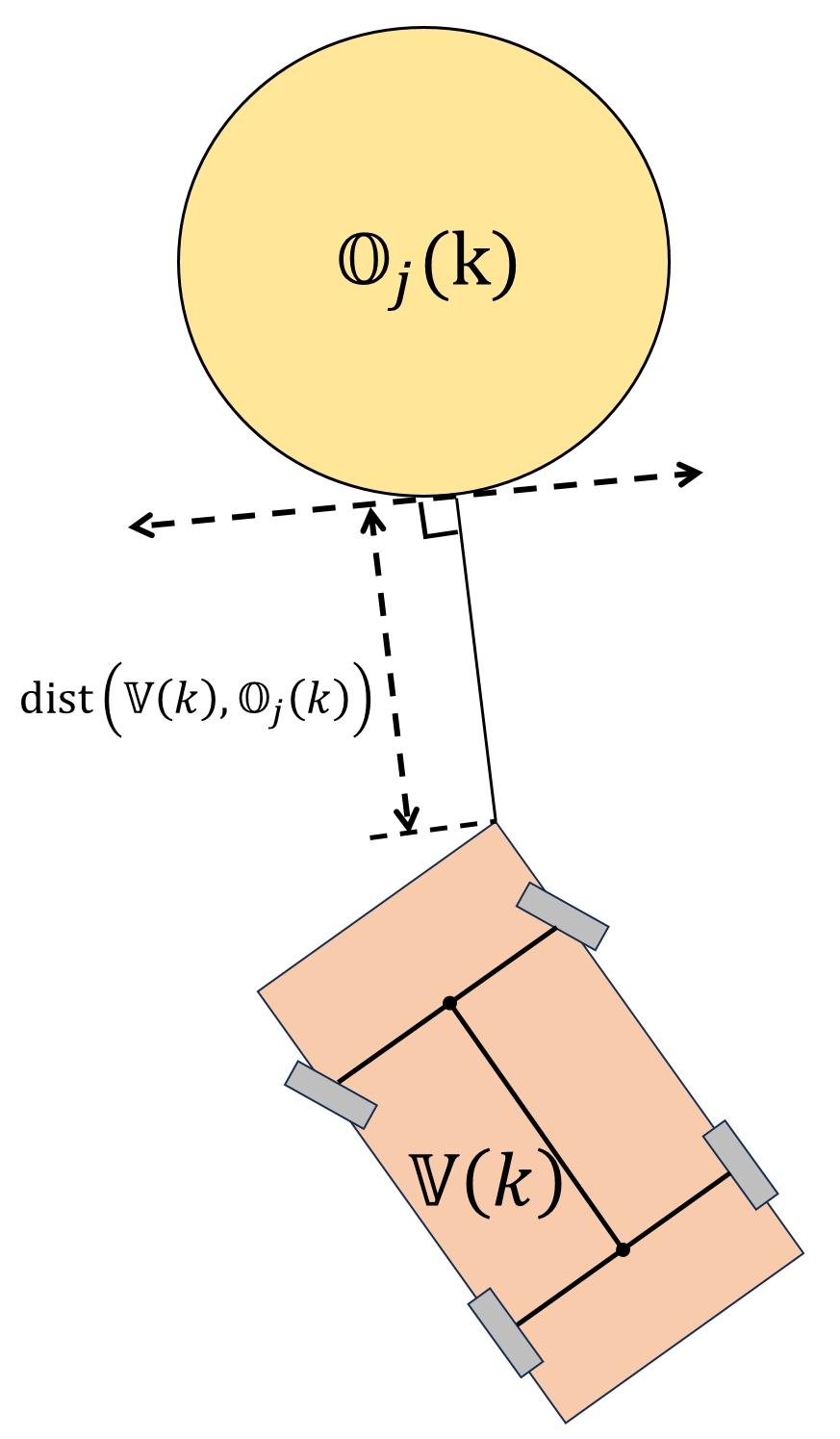}
\label{fig: When the obstacle is round-shaped.}}
\subfigure[When the obstacle is polygon-shaped.]
{\includegraphics[width=0.37\hsize]{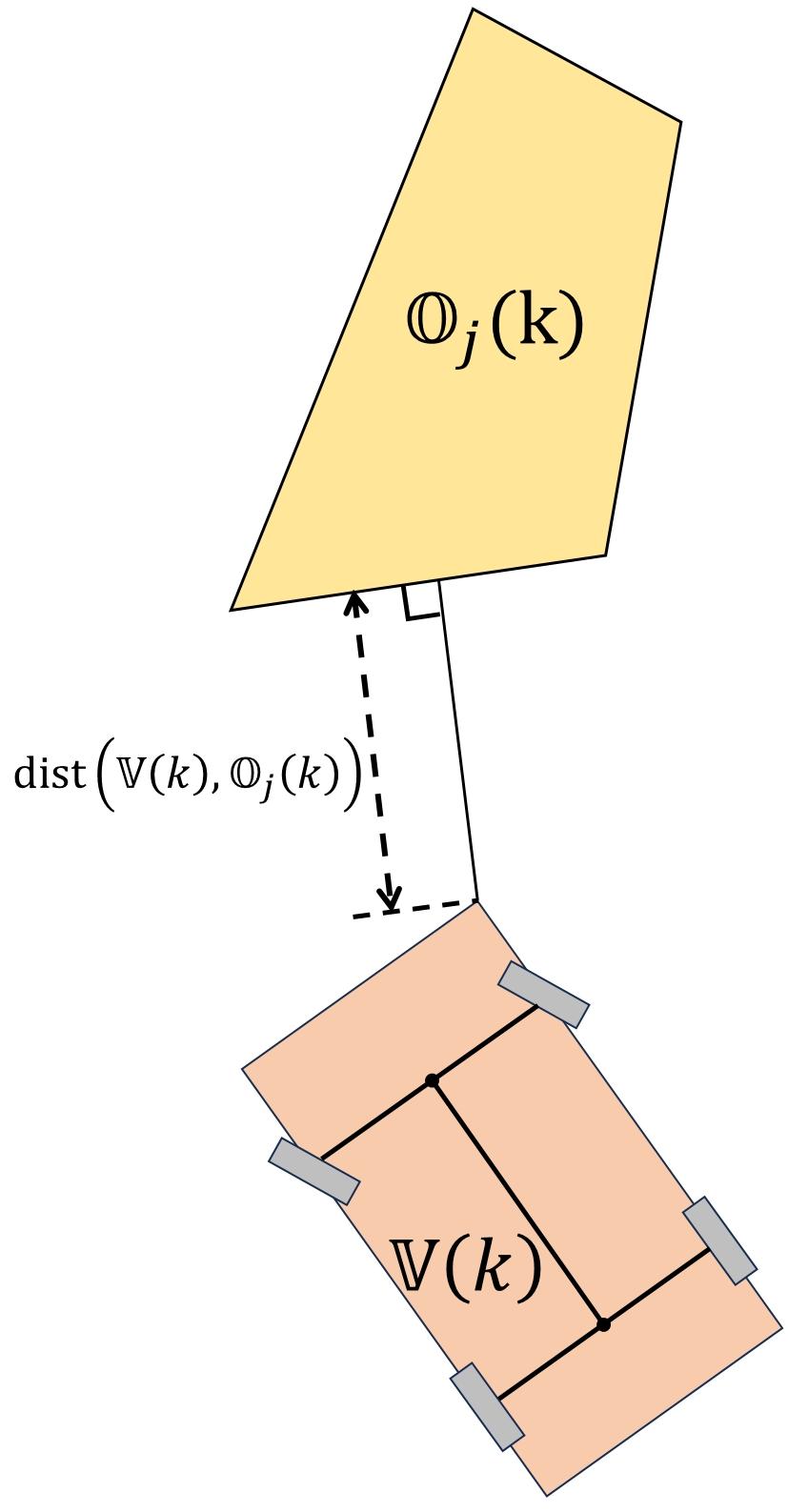}
\label{fig: When the obstacle is polygon-shaped.}}
\caption{The distance between the vehicle and the obstacle.}
\label{fig: The distance between the vehicle and the obstacle.}
\end{figure}
Therefore, we see generating a collision-free trajectory for the vehicle as always ensuring the distance \eqref{eq: distance between two polygons} greater than a minimum safety distance ${\rm d}_{\min}$, namely, 
$\operatorname{dist}(\mathbb{V}_k,\mathbb{O}^j_k)\geq{\rm d}_{\min}\ge \rm{d}_{\min}$, $k=1,\ldots,N-1$.
Nevertheless, under the uncertainties caused by $\bm{w}^v_k$ and $\bm{w}^{o_j}_k$, the occupied space $\mathbb{V}_k$ and $\mathbb{O}^j_k$ become also stochastic. To guarantee the safety under the uncertainties, we use the chance collision avoidance constraints to enforce the vehicle's trajectory collision-free up to a confidence level of $1-\alpha$, i.e.,
\begin{equation}
\label{eq: probabilistic collision avoidance constraint}
\begin{aligned}
&\mathcal{P}(\operatorname{dist}(\mathbb{V}_k,\mathbb{O}^j_k) \geq{\rm d}_{\min}) \geq 1 - \alpha, \\
&\forall k = 1: N, \quad \forall j \in \mathcal{O}. 
\end{aligned}
\end{equation}
And consequently, the overall trajectory planning problem shall take the form in probabilistic optimization as follows
\begin{subequations}
\label{eq: Overall Formulation}
\begin{align}
\min_{\substack{\bar{s}, u}}\;\;\;\; & J(\bar{s}, u),\\ 
\mbox{s.t.\;\;\;\;}& {\rm Nominal \ kinematic \ constraints} \ \eqref{eq: nominal kinematics}, \\
&{\rm \textbf{Collision avoidance constraints}}:\nonumber\\
&\mathcal{P}(\operatorname{dist}(\mathbb{V}_k,\mathbb{O}^j_k) \geq{\rm d}_{\min}) \geq 1 - \alpha, \nonumber\\
&\forall k = 1, \dots, N, \quad \forall j \in \mathcal{O}, \label{eq: chance_constraints}\\
&{\rm State \ and \ control \ constraints},
\end{align}
\end{subequations}
where $\bar{s}, u$ are short-hand notations for $\{\bar{s}^v_k\}_{k=0}^N$ and $\{u_k\}_{k=0}^{N-1}$. And $J(\bar{s}, u)$ is the user defined cost function. The state and control constraints include, but are not limited to, the boundary constraints, state and control bounds, etc. Note that we do not give the explicit expressions of the nominal kinematic constraints as well as the state and control constraints because they are not the focus of this work. In other words, our framework can be easily extended to a variety of robot systems by replacing the kinematics, the state and control constraints in \eqref{eq: Overall Formulation} with any user-defined form.

Notably, the key challenge in solving the optimization problem \eqref{eq: Overall Formulation} lies in evaluating the probability in the chance constraints \eqref{eq: probabilistic collision avoidance constraint}. This is mainly due to the fact that
\begin{itemize}
    \item The expression of the distance \eqref{eq: distance between two polygons} is intractable and;
    \item There is no explicit formulae to compute the chance constraints.
\end{itemize}
To address these difficulties, we need to reformulate \eqref{eq: probabilistic collision avoidance constraint} through dual variables and then approximate the chance constraints as nonlinear deterministic constraints, which are suitable for efficient computation. This will be explained in detail in the next section.

\section{The Preliminary Results}
In this section, we introduce the preliminary results to support further developments in Section~\ref{sec: Uncertainty-aware Optimization-Based Collision Avoidance}. More specifically, we first introduce the OBCA method \cite{OBCA} to reformulate the distance between the vehicle and the obstacles through the dual variables. Then, in order to deal with the uncertainties with the arbitrary distributions, we utilize the worst-case chance constraints with Wasserstein distributional ambiguity. Therefore, for the sake of self-containment, we state some preliminary theorems regarding the chance constraints and the Wasserstein distributional ambiguity set.
\subsection{Optimization-Based Collision Avoidance Without Uncertainties}
To proceed, we need to introduce the Optimization-Based Collision Avoidance (OBCA) \cite{OBCA} to reformulate \eqref{eq: distance between two polygons} into a tractable constraint. More precisely, when there are no uncertainties, the following theorem holds for the space occupied by the vehicle $\mathbb{V}_k$ and the $j$-th obstacle $\mathbb{O}^j_k$ at the $k$-th time step.
\begin{theorem}
\textit(OBCA \cite[Prop.~1, Prop.~3]{OBCA}):
On one hand, for the $j$-th round-shaped obstacle, the minimum distance constraint $\operatorname{dist}(\mathbb{V}_k,\mathbb{O}^j_k)\geq{\rm d}_{\min}$ is equivalent to
\begin{equation}
\label{eq: distance between polygon and point reformulation}
\begin{aligned}
&\exists\mu^j_k\succeq 0: \|A^{v,T}_k\mu^j_k\|\leq 1, \\
&(A^v_kt^{o_j}_k-b^v_k)^T\mu^j_k\geq{\rm d}_{\min} + r_{o_j},\forall k = 1:N, \quad \forall j \in \mathcal{O}_r,
\end{aligned}
\end{equation}
where $\mu^j_k \in \mR^{l_v}$ are dual variables. On the other hand, for the $j$-th polygon-shaped obstacle, the minimum distance constraints $\operatorname{dist}(\mathbb{V},\mathbb{O}_j)\geq{\rm d}_{\min}$ is equivalent to
\begin{equation}
\label{eq: distance between two polygons reformulation}
\begin{aligned}
&\exists\lambda^j_k,\mu^j_k\succeq 0: \|A^{o_j,T}_k\lambda^j_k\|\leq 1, \\
&-b^{v,T}_k\mu^j_k -b^{o_j, T}_k\lambda^j_k\geq{\rm d}_{\min},\\
&A^{v,T}_k\mu^j_k+A^{o_j,T}_k\lambda^j_k=\bm{0}_{2\times1},\forall k = 1:N, \quad \forall j \in \mathcal{O}_p,
\end{aligned}
\end{equation}
where $\lambda^j_k \in \mR^{l_{o_j}}, \mu^j_k \in \mR^{l_v}$ are dual variables.
\end{theorem}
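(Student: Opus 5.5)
The plan is to write both distance problems as convex programs and apply Lagrangian duality; Slater's condition holds, so strong duality gives the equivalences with the existential dual certificates.

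\textbf{Polygon case.} First rewrite \eqref{eq: distance between two polygons} in the equivalent form
\[
\operatorname{dist}(\mathbb{V}_k,\mathbb{O}^j_k)=\min_{x,y}\{\|x-y\| : A^v_k x\preceq b^v_k,\ A^{o_j}_k y\preceq b^{o_j}_k\}.
\]
To see this, take $y=x+t$ with $x\in\mathbb{V}_k$.

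Next form the Lagrangian with multipliers $\mu\succeq 0$ and $\lambda\succeq 0$:
\[
\|x-y\|+\mu^T(A^v_k x-b^v_k)+\lambda^T(A^{o_j}_k y-b^{o_j}_k).
\]
To minimize over $(x,y)$, substitute $w=x-y$ and minimize over $w$ and $y$ separately.
\begin{itemize}
\item The infimum over $y$ is finite only if $A^{v,T}_k\mu+A^{o_j,T}_k\lambda=0$.
\item The infimum over $w$ of $\|w\|+(A^{v,T}_k\mu)^Tw$ is $0$ when $\|A^{v,T}_k\mu\|\le 1$, and $-\infty$ otherwise.
\end{itemize}
Under the equality constraint, $\|A^{v,T}_k\mu\|=\|A^{o_j,T}_k\lambda\|$. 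So the dual problem is to maximize $-b^{v,T}_k\mu-b^{o_j,T}_k\lambda$ subject to $\|A^{o_j,T}_k\lambda\|\le1$, the equality constraint, and $\lambda,\mu\succeq0$.

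The primal is a convex problem with linear constraints and nonempty bounded feasible sets, so its optimum is attained and strong duality holds. Linear inequality constraints need only feasibility for Slater's condition. The dual optimum is attained because the dual is a feasible, bounded conic program.

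This gives both directions:
\begin{itemize}
\item If $\operatorname{dist}\ge{\rm d}_{\min}$, the optimal dual pair satisfies \eqref{eq: distance between two polygons reformulation}.
\item Conversely, any feasible dual pair with objective $\ge{\rm d}_{\min}$ lower-bounds the distance, by weak duality.
\end{itemize}

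\textbf{Round obstacle case.} Here the obstacle is a disc, so
\[
\operatorname{dist}(\mathbb{V}_k,\mathbb{O}^j_k)=\max\{0,\ \operatorname{dist}(\mathbb{V}_k,t^{o_j}_k)-r_{o_j}\}.
\]
Since ${\rm d}_{\min}\ge0$ is implicit, the constraint becomes $\operatorname{dist}(\mathbb{V}_k,t^{o_j}_k)\ge{\rm d}_{\min}+r_{o_j}$.

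The point-to-polygon distance is $\min_x\{\|x-t^{o_j}_k\| : A^v_kx\preceq b^v_k\}$. Its Lagrangian is
\[
\|x-t\|+\mu^T(A^v_kx-b^v_k).
\]
Substituting $w=x-t$ gives dual function $\mu^T(A^v_kt-b^v_k)$ when $\|A^{v,T}_k\mu\|\le1$, and $-\infty$ otherwise. Strong duality and attainment then give \eqref{eq: distance between polygon and point reformulation} by the same two-direction argument.

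\textbf{Main obstacle.} The step needing care is justifying exact equivalence rather than just sufficiency. That requires dual attainment and zero duality gap, which I will handle via polyhedral and Slater arguments.

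It also requires the conjugate computation of the norm: $\inf_w \|w\|+c^Tw$ equals $0$ iff $\|c\|\le1$, using Cauchy–Schwarz in one direction and $w=-sc$ with $s\to\infty$ in the other.

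One last point: the min over $t$ in \eqref{eq: distance between two polygons} is achieved because both sets are compact, which matches the primal form used above.
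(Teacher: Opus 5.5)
Your proposal is correct. It is essentially the argument behind this statement, which the paper does not prove itself but imports from the OBCA reference (Prop.~1 and Prop.~3 there): write each distance as a convex program with affine constraints, and obtain the dual certificates from strong duality with dual attainment.

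Two small repairs are needed. First, in the round-obstacle case, $\max\{0,a\}\geq {\rm d}_{\min}\Leftrightarrow a\geq {\rm d}_{\min}$ holds only for ${\rm d}_{\min}>0$, not for ${\rm d}_{\min}\geq 0$. At ${\rm d}_{\min}=0$ the distance condition is vacuous, while the dual condition still forces $\operatorname{dist}(\mathbb{V}_k,t^{o_j}_k)\geq r_{o_j}$, so the statement itself needs ${\rm d}_{\min}>0$. Second, credit dual attainment to the refined Slater theorem: all primal constraints are affine, the objective is finite everywhere, and the optimal value is finite. It does not follow from the dual being a feasible bounded conic program, since such programs need not attain their supremum.
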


\subsection{Bounds on Disjunctive Chance Constraints} 
Note that the chance constraints \eqref{eq: chance_constraints} has a conjunctive form. To further deal with such constraints later, we need the following theorem.
\begin{theorem}
\label{thm: Bounds on Disjunctive Chance Constraints}
(\textit{Bounds on Conjunctive Probability} \cite[Lem.~1]{ono2013}): 
For given events $E_i, i \in \{1:N_E\}$ and $\alpha \in[0,1]$, we have 
\begin{equation}
\label{eq: multiple prob}
\begin{aligned}
&\mathcal{P}\Big(\bigwedge_{i=1}^{N_E} E_i\Big) \geq 1 - \alpha \Leftarrow (\bigwedge_{i=1}^{N_E}\mathcal{P}(E_i) \geq 1 - \alpha_i) \\
&\wedge (0\leq \alpha_i \leq 1) \wedge (\sum_{i=1}^{N_E} \alpha_i \leq \alpha).
\end{aligned}
\end{equation}
\end{theorem}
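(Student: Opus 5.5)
The plan is to pass to complements and apply Boole's inequality (the union bound). Write $E_i^c$ for the complement of event $E_i$. By De Morgan's law, the complement of the conjunction $\bigwedge_{i=1}^{N_E} E_i$ is the disjunction $\bigvee_{i=1}^{N_E} E_i^c$. Therefore
\begin{equation*}
\mathcal{P}\Big(\bigwedge_{i=1}^{N_E} E_i\Big) = 1 - \mathcal{P}\Big(\bigvee_{i=1}^{N_E} E_i^c\Big).
\end{equation*}
It then suffices to show that $\mathcal{P}\big(\bigvee_{i} E_i^c\big)\le \alpha$ whenever the conditions on the right-hand side of \eqref{eq: multiple prob} hold.

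The steps, in order, are as follows.
\begin{enumerate}
\item Each hypothesis $\mathcal{P}(E_i)\ge 1-\alpha_i$ is equivalent to $\mathcal{P}(E_i^c)\le \alpha_i$, since $\mathcal{P}(E_i^c)=1-\mathcal{P}(E_i)$.
\item Apply subadditivity of probability measures:
\begin{equation*}
\mathcal{P}\Big(\bigvee_{i=1}^{N_E} E_i^c\Big)\le \sum_{i=1}^{N_E}\mathcal{P}(E_i^c).
\end{equation*}
For finitely many events this follows by induction on $N_E$ from $\mathcal{P}(A\cup B)=\mathcal{P}(A)+\mathcal{P}(B)-\mathcal{P}(A\cap B)\le \mathcal{P}(A)+\mathcal{P}(B)$.
\item Chain the bounds:
\begin{equation*}
\mathcal{P}\Big(\bigvee_{i=1}^{N_E} E_i^c\Big)\le \sum_{i=1}^{N_E}\alpha_i\le\alpha.
\end{equation*}
Substituting into the identity above gives $\mathcal{P}\big(\bigwedge_i E_i\big)\ge 1-\alpha$.
\end{enumerate}

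The condition $0\le\alpha_i\le 1$ is not used in the inequality chain. It only ensures that each individual constraint $\mathcal{P}(E_i)\ge 1-\alpha_i$ is a meaningful, non-vacuous probability requirement.

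There is no real obstacle in this argument. The only point needing care is that all events $E_i$ must be measurable with respect to the same probability space. In the later application they are the per-obstacle, per-time-step events $\{\operatorname{dist}(\mathbb{V}_k,\mathbb{O}^j_k)\ge {\rm d}_{\min}\}$, which are all determined by the same noise vectors $\bm w_k^v,\bm w_k^{o_j}$, so this holds. No independence between the events is required, which is what makes the bound applicable regardless of the correlation structure among the $E_i$.
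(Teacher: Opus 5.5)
Your proof is correct and is the standard argument. The paper gives no proof of its own and simply cites \cite[Lem.~1]{ono2013}, which is the usual complement-plus-Boole's-inequality risk-allocation bound you wrote out. One small correction to your closing remark: the paper does not apply this bound across obstacles and time steps. It uses it within a single pair $(j,k)$, in the proof of Proposition~\ref{proposition: fixing the value of dual variables}, to split one joint chance constraint into three individual ones with risks $\alpha_1,\alpha_2,\alpha_3$; your common-probability-space observation applies equally there.
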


\subsection{Worst-Case Chance Constraints with the Wasserstein Distributional Ambiguity}
\label{sec: Wasserstein Distributional Ambiguity}
Recall in Assumption~\ref{ass: zero_mean_and_known_covariance},~\ref{ass: known_sin_cos},~\ref{ass: bounded_wasserstein_distance} and~\ref{ass: deterministic_sizes_and_shape}, we do not assume any specific noises distributions. To handle these noises $\{\bm{w}_k^v\}$ and $\{\bm{w}_k^{o_j}\}$, we use the worst-case chance constraints with the Wasserstein distributional ambiguity. 
To this end, we first define the Wasserstein distance and the corresponding Wasserstein distributional ambiguity set, and then introduce the worst-case chance constraints based on the ambiguity set.
\subsubsection{Wasserstein distance and Wasserstein ambiguity set}
Intuitively speaking, the Wasserstein ambiguity set can be viewed as a ball in the space of distributions with respect to the Wasserstein distance. The exact definition of the Wasserstein ambiguity set is as follows.
\begin{definition}
\label{def: Wasserstein Distance}
(\textit{Wasserstein Distance}):
Equipped with a general norm $\|\cdot\|$ on $\mR^n$, the Wasserstein distance $d_w(\mathbb{P}_1,\mathbb{P}_2)$ between two distributions $\mathbb{P}_1$ and $\mathbb{P}_2$ on $\mR^n$ is defined as the minimal transportation cost of moving $\mathbb{P}_1$ to $\mathbb{P}_2$, under the premise that the cost of moving a unit mass from $\bm{m}_1$ to $\bm{m}_2$ with amounts $\|\bm{m}_1-\bm{m}_2\|$. Namely,
\begin{equation}
\label{eq: Wasserstein distance}
d_w(\mathbb{P}_1,\mathbb{P}_2) := \inf_{\mathbb{P} \in \mathbb{Q}(\mathbb{P}_1,\mathbb{P}_2)} \mathbb{E}_\mathbb{P}[\| \tilde{\bm{m}}_1 - \tilde{\bm{m}}_2\|],
\end{equation}
where $\tilde{\bm{m}}_1$ (respectively, $\Tilde{\bm{m}}_2$) follows the distribution $\mathbb{P}_1$ (respectively, $\mathbb{P}_2$) and $\mathbb{Q}(\mathbb{P}_1,\mathbb{P}_2)$ is
the set of all joint distributions on $\mR^n \times \mR^n$ with marginals $\mathbb{P}_1$ and $\mathbb{P}_2$. 
\end{definition}
\begin{definition}
(\textit{Wasserstein Ambiguity Set} \cite{blanchet2019quantifying}):
The Wasserstein ambiguity set $\mathcal{F}_w(\mathbb{P}_R,\theta_w)$ is defined as a ball of radius $\theta_w \geq 0$ with respect to the Wasserstein distance, around a prescribed reference distribution $\mathbb{P}_R$. Namely,
\begin{equation}
\label{eq: Wasserstein set}
\mathcal{F}_w(\mathbb{P}_R,\theta_w) := \{\mathbb{P}: d_w(\mathbb{P},\mathbb{P}_R) \leq \theta_w\},
\end{equation}  
where $\theta_w$ is called as Wasserstein radius.
\end{definition}

\subsubsection{Elliptical distributions}
We denote an elliptical distribution \cite{fang2018symmetric} as $\mathbb{P}_E(\mu, \Sigma, g)$ with a location parameter $\mu\in\mR^n$, a positive definite
matrix $\Sigma\in\mR^{n\times n}$, and a measurable and such that the normalization constant is finite function $g:\mR\mapsto\mR$, where its probability density function $f$ takes the form
\begin{equation*}
    f(\bm x) = k \cdot g(\frac{1}{2}(\bm{x}-\mu)^T\Sigma^{-1}(\bm{x}-\mu))
\end{equation*}
with a positive normalization scalar $k$.
Moreover, if the standard elliptical distribution $\mathbb{P}_0(0, 1, g)$ is generated by $g$, then its
probability density function of the standard elliptical distribution is $kg(z^2/2)$ and the corresponding cumulative distribution function is
\begin{equation}
\label{eq: cumulative distribution function}
    \Phi(a) = \int_{-\infty}^a kg(\frac{z^2}{2}) dz,
\end{equation}
where its inverse is denoted by $\Phi^{-1}(\cdot)$. 
Moreover, Gaussian distribution $\mathcal{N}(\mu, \Sigma, e^{-z})$ is a special case of the elliptical distribution.
In addition, we define the value-at-risk at confidence level $1-\alpha$ as
\begin{equation*}
    \mathbb{P}_{\bm x}\mbox{-}\operatorname{VaR}_{1-\alpha}(\bm{x}) = \inf_{a\in\mR}\{a|\mathcal{P}(\bm{x} \geq a) \leq \alpha\}.
\end{equation*}
Equipped with the above knowledge, the following theorem holds:
\begin{theorem}
\label{thm: WASSERSTEIN distributionally robust theorem}
(\textit{Dual Variables Reformulation}\cite[Thm.~4.8]{WASSERSTEIN}): 
Suppose the reference distribution $\mathbb{P}_R(\mu, \Sigma, g)$ is an $n$ dimensional elliptical distribution, where $\Sigma$ postive definite. Moreover, let the Wasserstein distance is defined through the Mahalanobis norm associated with $\Sigma$. Then for the given Wasserstein radius $\theta_w>0$ and the risk level $\alpha \in (0,0.5]$, the worst-case value-at-risk is given by
\begin{equation*}
\sup_{\mathbb{P}_{\bm{x}} \in \mathcal{F}_w(\mathbb{P}_R,\theta_w)} \mathbb{P}_{\bm{x}}\mbox{-}\operatorname{VaR}_{1-\alpha}(e_S^T\bm{x}) = e_S^T\mu + \eta^* \sqrt{e_S^T\Sigma e_S},
\end{equation*}
where $e_S$ is the indicator vector whose $i$-th entry takes value one if $i\in S\subseteq\{1,\ldots,n\}$.
And $\eta^*$ is the minimizer to the following optimization problem
\begin{equation}
\label{eq: eta*}
\begin{aligned}
    \min_{\eta} \; & \eta\\
    \mathrm{s.t.} \; &\eta \geq \Phi^{-1}(1-\alpha),\\ 
    & \eta\big(\Phi(\eta)\!-\!(1\!-\!\alpha)\big)\!-\!\int_{\frac12(\Phi^{-1}(1\!-\!\alpha))^2}^{\eta^2/2}k \!\cdot\! g(z)dz \!\geq\! \theta_w.\\
\end{aligned}
\end{equation}
\end{theorem}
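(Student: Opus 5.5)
The plan is to reduce the multivariate worst-case value-at-risk to a one-dimensional worst-case quantile problem for the scalar projection $\bm{z} = e_S^T\bm{x}$, and then solve that scalar problem explicitly.

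\textbf{Step 1: projection under the Mahalanobis cost.} Fix $\mathbb{P}_{\bm x} \in \mathcal{F}_w(\mathbb{P}_R,\theta_w)$ and write $\sigma = \sqrt{e_S^T\Sigma e_S}$. By Cauchy--Schwarz in the $\Sigma$-inner product,
\[
|e_S^T(\bm m_1 - \bm m_2)| \le \sigma\,\|\bm m_1-\bm m_2\|_{\Sigma^{-1}}.
\]
Hence any coupling of $\mathbb{P}_{\bm x}$ and $\mathbb{P}_R$ induces a coupling of the laws of the normalized projections $(e_S^T\bm x - e_S^T\mu)/\sigma$. Their transport cost in $|\cdot|$ is at most $\theta_w$.

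Under $\mathbb{P}_R$, the normalized projection is exactly the standard elliptical law $\mathbb{P}_0(0,1,g)$, because affine maps preserve the elliptical family with the same generator $g$.

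Conversely, any one-dimensional perturbation can be lifted. Transport mass only along the direction $\Sigma e_S/\sigma$; this attains equality in Cauchy--Schwarz, so the lifted distribution lies in the multivariate ball. The supremum of $\operatorname{VaR}$ over the multivariate ball therefore equals $e_S^T\mu + \sigma\,\eta^*$, where $\eta^*$ is the worst-case $(1-\alpha)$-quantile over the 1-D ball of radius $\theta_w$ around $\mathbb{P}_0$.

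\textbf{Step 2: the scalar problem.} I must show that $\eta^*$ is the minimizer of \eqref{eq: eta*}. To push the $(1-\alpha)$-quantile up to some level $\eta \ge \Phi^{-1}(1-\alpha)$, the adversary needs mass at least $\alpha$ at or above $\eta$. The cheapest way is to move all mass in $[\Phi^{-1}(1-\alpha), \eta)$ up to $\eta$; this is monotone rearrangement, and one checks it is optimal for the $W_1$ cost. Its cost is
\[
\int_{\Phi^{-1}(1-\alpha)}^{\eta} (\eta - z)\, k g(z^2/2)\,dz = \eta\big(\Phi(\eta)-(1-\alpha)\big) - \int_{\Phi^{-1}(1-\alpha)}^{\eta} z\, k g(z^2/2)\,dz .
\]
Substituting $u = z^2/2$, valid since the interval is nonnegative because $\alpha \le 0.5$, turns the last integral into $\int_{\frac12(\Phi^{-1}(1-\alpha))^2}^{\eta^2/2} k g(u)\,du$.

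This cost is continuous and nondecreasing in $\eta$. So the set of reachable quantile levels is governed by comparing this cost with $\theta_w$, and the worst-case VaR is the boundary value $\eta^*$ where the cost equals $\theta_w$. The inequality-constrained program \eqref{eq: eta*} encodes this boundary value.

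\textbf{Main obstacle.} The delicate part is the optimality and lower-bound argument in Step 2. I need to show that no distribution within distance $\theta_w$ achieves a larger quantile. The plan is to use the quantile-function (monotone coupling) representation of $W_1$ on $\mathbb{R}$:
\[
d_w = \int_0^1 |F^{-1}(t) - \Phi^{-1}(t)|\,dt .
\]
Any distribution with $(1-\alpha)$-quantile exceeding $\eta$ has $F^{-1}(t) \ge \eta$ for all $t \ge 1-\alpha$. This forces $d_w \ge \int_{1-\alpha}^{\Phi(\eta)} (\eta - \Phi^{-1}(t))\,dt$, which equals the cost above.

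The Step 1 reduction also needs care at one point: showing the supremum is attained, or approached, within the ball by constructing the lifted perturbation explicitly. If the equality case proves fiddly, I would fall back on citing \cite[Thm.~4.8]{WASSERSTEIN} for the multivariate-to-scalar reduction.
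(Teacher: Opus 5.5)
There is one false step, in Step 1. You claim the standardized projection $(e_S^T\bm{x}-e_S^T\mu)/\sigma$ under $\mathbb{P}_R$ has law $\mathbb{P}_0(0,1,g)$, because affine maps preserve the elliptical family ``with the same generator $g$''. Affine maps preserve the \emph{characteristic} generator. They do not preserve the \emph{density} generator $g$, and that is the generator defining $\Phi$ here. After integrating out $n-1$ coordinates, the standardized marginal has density proportional to $g_1(z^2/2)$, where $g_1(u)=\int_{\mathbb{R}^{n-1}} g(u+\tfrac12\|v\|^2)\,dv$. In general $g_1$ is not proportional to $g$; the exponential (Gaussian) generator is the exception. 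For example, with a bivariate $t_\nu$ reference the marginal is $t_\nu$, whereas $k\,g(z^2/2)$ is the density of a rescaled $t_{\nu+1}$. So as $\theta_w\downarrow 0$ the true worst-case VaR tends to the $t_\nu$ quantile, while the displayed formula tends to a strictly smaller value. What your argument actually proves is the formula with $\Phi$ taken as the CDF of the standardized one-dimensional marginal. That agrees with the statement as written only when $n=1$ or $g$ is Gaussian. To be fair, the paper only ever applies the result with $n=1$ (Corollary~\ref{corollary: chance constraint equivalent deterministic form} and the theorems built on it). You should either restrict to that case or redefine $\Phi$, rather than assert that the generator is unchanged.

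Apart from this, your route is sound and genuinely different from the paper's, which gives no argument and simply imports \cite[Thm.~4.8]{WASSERSTEIN} (your stated fallback). The Cauchy--Schwarz projection bound, the lift along $\Sigma e_S$ (gluing a one-dimensional coupling onto $\mathbb{P}_R$), and the quantile-coupling lower bound are all correct. The gain over the citation is a self-contained, elementary proof, and it makes visible that only the law of the standardized marginal matters. Two details in Step 2 need tightening; write $c(\eta)$ for the cost you computed.

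\begin{itemize}
\item \emph{Attainment.} The paper defines $\operatorname{VaR}_{1-\alpha}=\inf\{a:\mathcal{P}(\bm{x}\ge a)\le\alpha\}$, which is a lower quantile. Moving exactly the mass at levels $[1-\alpha,\Phi(\eta)]$ to $\eta$ leaves mass $1-\alpha$ below every $a\in(\Phi^{-1}(1-\alpha),\eta]$. Hence $\mathcal{P}(\bm{x}\ge a)=\alpha$ there, and that distribution's VaR is still $\Phi^{-1}(1-\alpha)$. You must instead move levels $[1-\alpha-\epsilon,\Phi(\eta')]$ with $\eta'<\eta^*$ and use continuity of $c$. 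The supremum is approached, not attained.
\item \emph{Identifying the minimizer.} To match $\sup\{\eta: c(\eta)\le\theta_w\}$ with the minimizer of \eqref{eq: eta*}, ``nondecreasing'' is not enough. Use $c'(\eta)=\Phi(\eta)-(1-\alpha)$, which is nondecreasing and tends to $\alpha>0$. Then $c$ is strictly increasing once positive and unbounded, so $c(\eta)=\theta_w$ has a unique root.
\end{itemize}
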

\begin{corollary}
\label{corollary: chance constraint equivalent deterministic form}
Under the assumption of Theorem~\ref{thm: WASSERSTEIN distributionally robust theorem}, for the one dimensional stochastic variable $\bm{x} \in \mR$ with the expectation $\mE[\bm{x}] = \mu$ and variance $\Sigma$, let the reference distribution $\mathbb{P}_R(\mu, \Sigma, g)$ be a one dimensional elliptical distribution. Then, the worst-case chance constraint has an equivalent deterministic form as
\begin{equation}
\label{eq: worst-case chance constraint equivalent deterministic form}
\begin{aligned}
\inf_{\mathbb{P}_{\bm{x}} \in \mathcal{F}_w(\mathbb{P}_R,\theta_w)}&\mathcal{P}(\bm{x} \geq a) \geq 1-\alpha 
\Leftrightarrow \mu \geq a + \eta^*\Sigma^{\frac{1}{2}},
\end{aligned}
\end{equation}
where $\eta^*$ is the minimizer to \eqref{eq: eta*}.
\end{corollary}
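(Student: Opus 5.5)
The idea is to reduce the worst-case chance constraint to a statement about the worst-case value-at-risk of $-\bm{x}$, and then apply Theorem~\ref{thm: WASSERSTEIN distributionally robust theorem} with $n=1$ and $S=\{1\}$, so that $e_S=1$.

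\textbf{Step 1: rewrite the constraint as a VaR bound.} The event $\{\bm{x}\geq a\}$ coincides with $\{-\bm{x}\leq -a\}$. Hence, for each $\mathbb{P}_{\bm x}$, the requirement $\mathcal{P}(\bm{x}\geq a)\geq 1-\alpha$ is equivalent to $\mathcal{P}(-\bm{x}> -a)\leq \alpha$. By the definition of value-at-risk, this is equivalent to $\mathbb{P}\mbox{-}\operatorname{VaR}_{1-\alpha}(-\bm{x})\leq -a$.

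A boundary issue arises here: the VaR definition uses the non-strict event $\mathcal{P}(\cdot\geq a)\leq\alpha$, while our event is strict. I would handle it as follows.
\begin{itemize}
\item For the reference elliptical law and for the worst-case distribution constructed in \cite{WASSERSTEIN}, the relevant distributions are continuous at the threshold. Alternatively, one can pass to infima and suprema, where the distinction disappears.
\item Accordingly, I would state the equivalence at the level of the infimum and supremum over the ambiguity set:
\begin{equation*}
\inf_{\mathbb{P}_{\bm x}\in\mathcal{F}_w}\mathcal{P}(\bm{x}\geq a)\geq 1-\alpha \;\Leftrightarrow\; \sup_{\mathbb{P}_{\bm x}\in\mathcal{F}_w}\mathbb{P}_{\bm x}\mbox{-}\operatorname{VaR}_{1-\alpha}(-\bm{x})\leq -a .
\end{equation*}
\end{itemize}

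\textbf{Step 2: transport the ambiguity set under $\bm{x}\mapsto-\bm{x}$.} This map is an isometry of $\mR$ under the Mahalanobis norm $|\cdot|/\Sigma^{1/2}$. It therefore maps $\mathcal{F}_w(\mathbb{P}_R(\mu,\Sigma,g),\theta_w)$ bijectively onto $\mathcal{F}_w(\mathbb{P}_R(-\mu,\Sigma,g),\theta_w)$. This holds because elliptical distributions are symmetric about their location: the reflection of $\mathbb{P}_R(\mu,\Sigma,g)$ is exactly $\mathbb{P}_R(-\mu,\Sigma,g)$.

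\textbf{Step 3: apply Theorem~\ref{thm: WASSERSTEIN distributionally robust theorem}.} With $n=1$ and reference distribution $\mathbb{P}_R(-\mu,\Sigma,g)$, the theorem gives
\begin{equation*}
\sup \operatorname{VaR}_{1-\alpha}(-\bm{x}) = -\mu+\eta^*\Sigma^{1/2},
\end{equation*}
where $\eta^*$ is the minimizer of \eqref{eq: eta*}. The problem \eqref{eq: eta*} depends only on $\alpha$, $\theta_w$ and $g$, not on $\mu$, so $\eta^*$ is unchanged by the reflection. Substituting into the condition from Step 1 gives $-\mu+\eta^*\Sigma^{1/2}\leq -a$, which rearranges to $\mu\geq a+\eta^*\Sigma^{1/2}$. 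This establishes both directions of \eqref{eq: worst-case chance constraint equivalent deterministic form}.

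\textbf{Main obstacle.} The delicate step is the exact equivalence in Step 1 between the worst-case probability statement and the worst-case VaR bound. The issues are strict versus non-strict inequalities and whether the supremum is attained.

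My first approach would be a direct argument for each direction.
\begin{itemize}
\item \emph{VaR bound implies the probability constraint.} If the supremum VaR is at most $-a$, then every distribution in the ball has VaR at most $-a$, which yields the probability constraint for each distribution and hence for the infimum.
\item \emph{Probability constraint implies the VaR bound.} Apply the definition of VaR as an infimum to each distribution in the ball, then take the supremum over the ball.
\end{itemize}
If the boundary case still causes trouble, I would rely on the continuity of the worst-case construction in \cite{WASSERSTEIN}, which places the equality case on an atomless distribution.
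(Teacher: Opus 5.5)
Your proposal is correct and follows the paper's proof. The paper likewise rewrites the worst-case chance constraint as a worst-case VaR bound, transports the Wasserstein ball through the isometric reflection $S(\bm x)=a-\bm x$, and applies Theorem~\ref{thm: WASSERSTEIN distributionally robust theorem} with $n=1$; your use of $\bm x\mapsto-\bm x$ with threshold $-a$ instead is immaterial. The boundary issue you flag is harmless: under the paper's VaR definition, $\mathbb{P}_{\bm y}\mbox{-}\operatorname{VaR}_{1-\alpha}(\bm y)\le t \Leftrightarrow \mathcal{P}(\bm y>t)\le\alpha$ holds exactly for every distribution, so no continuity or attainment argument is needed, and your strict-inequality bookkeeping is more careful than the paper's intermediate step, which writes $\mathcal{P}(a-\bm x\ge 0)$.
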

\begin{proof}
The left hand side of \eqref{eq: worst-case chance constraint equivalent deterministic form} can be rewritten as 
\begin{equation}
\label{eq: Wasserstein lower bound proof}
\begin{aligned}
&\inf_{\mathbb{P}_{\bm{x}} \in \mathcal{F}_w(\mathbb{P}_R,\theta_w)}\mathcal{P}(\bm{x} - a \geq 0) \geq 1-\alpha \\
\Leftrightarrow &\sup_{\mathbb{P}_{\bm{x}} \in \mathcal{F}_w(\mathbb{P}_R,\theta_w)}\mathcal{P}(a - \bm{x} \geq 0) \leq\alpha \\
\Leftrightarrow &\sup_{\mathbb{P}_{\bm{x}} \in \mathcal{F}_w(\mathbb{P}_R,\theta_w)} \mathbb{P}_{\bm{x}}\mbox{-}\operatorname{VaR}_{1-\alpha}(a - \bm{x}) \leq  0
\end{aligned}
\end{equation}
Furthermore, let $S:\mR\mapsto\mR$ be the coordinate transformation $\tilde{\bm x} = S(\bm x):=a-\bm x$. Since $S(\cdot)$ is isometry and $S=S^{-1}$, then it hold for the Wasserstein distance that $d_w(\mP_{\bm x}\circ S^{-1},\mP_{R})=d_w(\mP_{\bm{x}},\mP_{R}\circ S)=d_w(\mP_{\bm{x}},\mP_{R}\circ S^{-1})$. 
Hence it holds for the image $S(\mathcal{F}_w(\mP_R,\theta_w)) = \mathcal{F}_w(\mP_R\circ S^{-1},\theta_w)$. Since $\mP_R(\mu,\Sigma,g)$ is one dimensional elliptical distribution, the density of $\mP_R\circ S^{-1}$ is $k\cdot g(\frac{1}{2}(\tilde{\bm x}-(a-\mu))^T\Sigma^{-1}(\tilde{\bm x}-(a-\mu))$, and therefore by Thm.~\ref{thm: WASSERSTEIN distributionally robust theorem}
\begin{align*}
    &\sup_{\mathbb{P}_{\bm{x}} \in \mathcal{F}_w(\mathbb{P}_R,\theta_w)} \mathbb{P}_{\bm x}\mbox{-}\operatorname{VaR}_{1-\alpha}(a - \bm{x}) \leq  0\\
    &\Leftrightarrow\sup_{\mP_{\tilde{\bm x}}\in \mathcal{F}_w(\mP_R\circ S^{-1},\theta_w)} \mathbb{P}_{\bm \tilde{\bm x}}\mbox{-}\operatorname{VaR}_{1-\alpha}(\tilde{\bm{x}}) \leq  0 \\
    &\Leftrightarrow \quad a-\mu+\eta^*\Sigma^{\frac{1}{2}}\le 0
    \Leftrightarrow \mu \geq a + \eta^*\Sigma^{\frac{1}{2}}.
\end{align*}
\end{proof}

\section{Uncertainty-aware Optimization-Based Collision Avoidance}
\label{sec: Uncertainty-aware Optimization-Based Collision Avoidance}
Notably, 
when the uncertainties in the poses of the vehicle and the obstacles are further considered in the trajectory planning process, the constraints \eqref{eq: distance between polygon and point reformulation}, \eqref{eq: distance between two polygons reformulation} are no longer sufficient to guarantee the safety. Therefore, we use a probability framework to quantify the collision risk through chance constraints. More specifically, given a maximum collision probability $\alpha$ threshold, we generalize the OBCA constraints \eqref{eq: distance between polygon and point reformulation}, \eqref{eq: distance between two polygons reformulation} into the following uncertainty-aware form for the time instants $k = 1:N$
\begin{subequations}
\label{eq:U_OBCA_constraints}
\begin{align}
&\mbox{\textbf{Round-shaped obstacle chance constraints: }}
\label{eq: Round chance constraint} \\
&\mathcal{P}\Big(\exists\mu^j_k\succeq 0: \|\bm{A}^{v,T}_k\mu^j_k\|\leq 1, \nonumber\\
&\quad(\bm{A}^v_k\bm{t}^{o_j}_k-\bm{b}^v_k)^T\mu^j_k\geq{\rm d}_{\min}+r_{o_j}\Big) \geq 1 - \alpha, \forall j \in \mathcal{O}_r, \nonumber\\
& \mbox{\textbf{Polygon-shaped obstacle chance constraints: }}\label{eq: Polygon chance constraint}\\
& \mathcal{P}\Big(\exists\lambda^j_k,\mu^j_k\succeq 0: \|\bm{A}^{o_j, T}_k\lambda^j_k\|\leq 1, \nonumber\\
&\quad-\bm{b}^{v,T}_k\mu^j_k -\bm{b}^{o_j,T}_k\lambda^j_k\geq{\rm d}_{\min},\nonumber\\
&\quad\bm{A}^{v,T}_k\mu^j_k \!+\! \bm{A}^{o_j, T}_k\lambda^j_k \!=\! \bm{0}_{2\times1} \Big) \geq 1 - \alpha, \forall j \in \mathcal{O}_p\nonumber
\end{align}
\end{subequations}
Notably, $\bm{A}^v_k$, $\bm A^{o_j}_k$, $\bm b^v_k$,  $\bm b^{o_j}_k$, $\bm t^{o_j}_k$ in \eqref{eq:U_OBCA_constraints} are all stochastic matrices and vectors, and we guarantee the vehicle's safety by constraining the probability of satisfying the OBCA constraints to be greater than a threshold $1-\alpha$.
Next, to make the optimization with the chance constraints \eqref{eq:U_OBCA_constraints} numerically solvable,
we reformulate the constraints \eqref{eq:U_OBCA_constraints} into deterministic constraints. 

\subsection{Round-Shaped Obstacles}
To proceed, we first focus on the collision-avoidance with round-shaped obstacles $\mathbb{O}_j, j \in \{1:N_r\}$. 
The difficulties in quantifying the probability in \eqref{eq: Round chance constraint} lies in the fact that: 1) the value of dual variables $\mu^j_k$ are changing together with the value of the stochastic variables; 2) the distributions of stochastic variables (including the noises) in \eqref{eq: Round chance constraint} are unknown. 
To address these difficulties, we first fix the values of the dual variables to obtain a sufficient condition of \eqref{eq: Round chance constraint}.
Then, we tighen the sufficient condition into worst-case chance constraints and transform the worst-case chance constraints into deterministic nonlinear constraints so that it can be solved by optimization solvers. More precisely, we have the following theorem.
\begin{theorem}
\label{theorem: round sufficient condition}  
Suppose Assumption~\ref{ass: zero_mean_and_known_covariance}-\ref{ass: deterministic_sizes_and_shape} holds.
Denote $\bm{p}^j_k := \bar{A}^v R^{-1}(\bm{\theta}^v_k)\delta \bm{t}^j_k +  \bar{b}^v$, where $\delta \bm{t}^j_k := \bm{t}^v_k - \bm{t}^{o_j}_k$, and suppose it holds for the distribution $\mP_{\bm \phi_{k}^j}$ of the random variable $\bm{\phi}_{k}^j:=-\bm{p}^{j,T}_k\mu^j_k$ that $\mathbb{P}_{\bm{\phi}_{k}^j} \in \mathcal{F}_w(\mathbb{P}_{\bm{\phi}_{k}^j}^R, \theta_{w})$, where the reference distribution $\mathbb{P}_{\bm{\phi}_{k}^j}^R := \mathcal{N}(-\mE[\bm{p}^{j,T}_k]\mu^j_k, \mu^{j,T}_k \Sigma_{\bm{p}^j_k}\mu^j_k)$. Then the deterministic constraint
\begin{equation}
\label{eq: round sufficient condition theorem}
\begin{aligned}
&\exists\mu^j_k\succeq 0: \|\bar{A}^{v,T}\mu^j_k\|\leq 1,\\
&\mE[\bm{p}^j_k]^T \mu^j_k \!+\! {\rm d}_{\min} \!+\! r_{o_j} \!+\!\eta^*\sqrt{\mu^{j,T}_k \Sigma_{\bm{p}^j_k}\mu^j_k} \leq 0,\\
\end{aligned}
\end{equation}
is a sufficient condition to \eqref{eq: Round chance constraint} for $\forall j \in \mathcal{O}_r$,
where $\eta^*$ is computed off-line in a similar way to \eqref{eq: eta*} and the computation of $\mE[\bm{p}^{j}_k]$ and $\Sigma_{\bm{p}^j_k}$ is described in Appendix~\ref{app: expectation and covariance in round condition}. 
\end{theorem}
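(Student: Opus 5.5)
Fix a deterministic dual vector $\mu^j_k\succeq 0$ that satisfies \eqref{eq: round sufficient condition theorem}. We will show that, for this one fixed vector, the event inside \eqref{eq: Round chance constraint} has probability at least $1-\alpha$. Since the event in \eqref{eq: Round chance constraint} only asks for the existence of some $\mu^j_k$, exhibiting the same deterministic $\mu^j_k$ for every realization is enough. Freezing the dual variable this way removes the first difficulty: the dual no longer depends on the noise.

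\textbf{Step 1: the norm constraint holds surely.} Using \eqref{eq: ego world frame transfromation}, we have $\bm{A}^{v,T}_k\mu^j_k = R(\bm{\theta}^v_k)\bar{A}^{v,T}\mu^j_k$, because $R^{-1}(\cdot)^T=R(\cdot)$. Rotations preserve the $\ell_2$-norm, so $\|\bm{A}^{v,T}_k\mu^j_k\|=\|\bar{A}^{v,T}\mu^j_k\|\le 1$ for every realization of $\bm\theta^v_k$. Hence the probability of the whole event reduces to the probability of the scalar inequality alone.

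\textbf{Step 2: rewrite the scalar inequality.} Substitute $\bm b^v_k=\bar{A}^vR^{-1}(\bm\theta^v_k)\bm t^v_k+\bar b^v$. Then
\[
\bm A^v_k\bm t^{o_j}_k-\bm b^v_k=-\bar A^vR^{-1}(\bm\theta^v_k)\delta\bm t^j_k-\bar b^v=-\bm p^j_k .
\]
So the inequality becomes $\bm\phi^j_k=-\bm p^{j,T}_k\mu^j_k\ge {\rm d}_{\min}+r_{o_j}$, and it suffices to show $\mathcal P(\bm\phi^j_k\ge {\rm d}_{\min}+r_{o_j})\ge 1-\alpha$.

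\textbf{Step 3: pass to the worst case and apply Corollary~\ref{corollary: chance constraint equivalent deterministic form}.} By hypothesis $\mathbb P_{\bm\phi^j_k}\in\mathcal F_w(\mathbb P^R_{\bm\phi^j_k},\theta_w)$. Therefore
\[
\mathcal P(\bm\phi^j_k\ge a)\ \ge\ \inf_{\mathbb P\in\mathcal F_w}\mathcal P(\bm\phi\ge a),
\qquad a={\rm d}_{\min}+r_{o_j}.
\]
The reference distribution is the one-dimensional Gaussian with mean $-\mE[\bm p^j_k]^T\mu^j_k$ and variance $\mu^{j,T}_k\Sigma_{\bm p^j_k}\mu^j_k$. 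A Gaussian is elliptical with $g(z)=e^{-z}$, and the mean and variance are correct because $\mu^j_k$ is deterministic. Corollary~\ref{corollary: chance constraint equivalent deterministic form} then shows the worst-case constraint is equivalent to
\[
-\mE[\bm p^j_k]^T\mu^j_k\ \ge\ {\rm d}_{\min}+r_{o_j}+\eta^*\sqrt{\mu^{j,T}_k\Sigma_{\bm p^j_k}\mu^j_k},
\]
which is exactly the second line of \eqref{eq: round sufficient condition theorem}. Chaining Steps 1–3 gives \eqref{eq: Round chance constraint}.

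\textbf{Main obstacle.} The corollary and the cited theorem assume $\Sigma>0$ and $\alpha\in(0,0.5]$. The variance $\mu^{j,T}_k\Sigma_{\bm p^j_k}\mu^j_k$ may be zero, for example when $\mu^j_k=0$ or in degenerate directions. In that case I would argue separately: $\bm\phi^j_k$ is then almost surely equal to its mean, so the inequality holds with probability one, or I would take a limit with a small regularizing variance. The restriction $\alpha\le 0.5$ must be stated as an implicit assumption.

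A second subtlety is that the Mahalanobis-norm Wasserstein ball is what the corollary requires, so the hypothesis on $\mathbb P_{\bm\phi^j_k}$ should be read in that metric. Computing $\mE[\bm p^j_k]$ and $\Sigma_{\bm p^j_k}$ from Assumptions~\ref{ass: zero_mean_and_known_covariance}–\ref{ass: known_sin_cos} is deferred to the appendix. It uses the independence of $\bm\theta^v_k$ from $\delta\bm t^j_k$ and the moments of $\sin$ and $\cos$ of the noise; that is routine and not needed for the logical implication.
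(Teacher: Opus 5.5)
Your proposal is correct and follows the paper's proof essentially step for step. The paper likewise uses orthogonality of $R$ to make the norm constraint deterministic, rewrites $\bm{A}^v_k\bm{t}^{o_j}_k-\bm{b}^v_k=-\bm{p}^j_k$, pulls the dual variable outside the probability (your fixing of one deterministic $\mu^j_k$ is exactly its quantifier exchange), then lower-bounds by the Wasserstein worst case and invokes Corollary~\ref{corollary: chance constraint equivalent deterministic form}; your remarks on the degenerate case $\mu^{j,T}_k\Sigma_{\bm{p}^j_k}\mu^j_k=0$, the implicit requirement $\alpha\le 0.5$, and the Mahalanobis metric are valid refinements that the paper leaves unstated.
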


\begin{proof}
Note that the rotation matrix $R$ is orthogonal, i.e. $R^{-1}(\bm{\theta}) = R^T(\bm{\theta})$ even if $\bm{\theta}$ is stochastic, we first rewrite the term $\|\bm{A}^{v,T}_k\mu^j_k\|$ in \eqref{eq: Round chance constraint} into the following deterministic form
\begin{equation*}
\label{eq: A_v mu equation reformulation}
\begin{aligned}
&\|\bm{A}^{v,T}_k\mu^j_k\|
=  \sqrt{\mu^{j,T}_k\bm{A}^v_k\bm{A}^{v,T}_k\mu^j_k} \\
=&  \sqrt{\mu^{j,T}_k\bar{A}^vR^{-1}(\bm{\theta}^v_k)R(\bm{\theta}^v_k)\bar{A}^{v,T}\mu^j_k}\\
=&  \sqrt{\mu^{j,T}_k\bar{A}^v\bar{A}^{v,T}\mu^j_k}
=  \|\bar{A}^{v,T}\mu^j_k\|.
\end{aligned}
\end{equation*}
And we can further rewrite the term $\bm{A}^v_k\bm{t}^{o_j}_k-\bm{b}^v_k$ in \eqref{eq: Round chance constraint} as
\begin{equation*}
\label{eq: A_v R t_o equation reformulation}
\begin{aligned}
&\bm{A}^v_k\bm{t}^{o_j}_k-\bm{b}^v_k=\bar{A}^vR^{-1}(\bm{\theta}^v_k)\bm{t}^{o_j}_k - (\bar{A}^vR^{-1}(\bm{\theta}^v_k)\bm{t}^v_k + \bar{b}^v) \\
&= -\bar{A}^vR^{-1}(\bm{\theta}^v_k)\delta \bm{t}^j_k - \bar{b}^v = -\bm{p}^j_k.
\end{aligned}
\end{equation*}
Therefore, \eqref{eq: Round chance constraint} is equivalent to 
\begin{equation}
\label{eq: round equivalence}
\begin{aligned}
& \mathcal{P}\Big(\exists\mu^j_k\succeq 0: \|\bar{A}^{v,T}\mu^j_k\|\leq 1, \\
&\quad -\bm{p}^{j,T}_k\mu^j_k\geq{\rm d}_{\min} + r_{o_j}\Big) \geq 1 - \alpha,\quad \forall j\in\mathcal{O}_r.
\end{aligned}
\end{equation}
Next, we exchange the order of the existential quantifier and the probability in \eqref{eq: round equivalence}. This yields tightening of the constraints which is more conservative: if there exists a fixed $\mu_j^k$ satisfying the probabilistic constraint, then the original probabilistic statement with the existential quantifier inside is also satisfied. Namely, it holds that
\begin{align}
    \exists\mu^j_k\!\succeq\! 0:\! \|\bar{A}^{v,T}\mu^j_k\|\leq 1,\mathcal{P}(-\bm{p}^{j,T}_k\mu^j_k\!\geq\!{\rm d}_{\min}\! +\! r_{o_j})\! \geq\! 1\! -\! \alpha \Rightarrow \eqref{eq: round equivalence}
\end{align}
Moreover, we further tighten the constraint and apply the Corollary~\ref{corollary: chance constraint equivalent deterministic form}, which yields
\begin{equation*}
\label{eq: round sufficient condition}
\begin{aligned}
\eqref{eq: round equivalence}
\Leftarrow\; &\exists\mu^j_k\succeq 0: \|\bar{A}^{v,T}\mu^j_k\|\leq 1,\\
&\mathcal{P}(-\bm{p}^{j,T}_k\mu^j_k\geq{\rm d}_{\min} + r_{o_j}) \geq 1 - \alpha,\\
\Leftarrow\; &\exists\mu^j_k\succeq 0: \|\bar{A}^{v,T}\mu^j_k\|\leq 1,\\
&\inf_{\substack{\mathbb{P}_{\bm{\phi}_{k}^j}
\in \mathcal{F}_w(\mathbb{P}_{\bm{\phi}_{j,k}}^R, \theta_{w})}}\mathcal{P}(\bm{\phi}^j_k\geq{\rm d}_{\min} + r_{o_j}) \geq 1 - \alpha,\\
&\xLeftrightarrow{\text{Corollary }\ref{corollary: chance constraint equivalent deterministic form}} \eqref{eq: round sufficient condition theorem},
\end{aligned}
\end{equation*}
where 
the second sufficient condition holds as we consider the worst-case condition in $\mathbb{P}_{\bm{\phi}_{j,k}} \in \mathcal{F}_w(\mathbb{P}_{\bm{\phi}_{j,k}}^R, \theta_{w})$. Thus the statement is proved. 
\end{proof}

\subsection{Polygon-Shaped Obstacles}
In Theorem~\ref{theorem: round sufficient condition}, we have tighten the chance constraints for round-shaped obstacles into nonlinear deterministic constraints by fixing the dual variables and using the Wasserstein distributionally robust constraints. Next, we will deal with the chance constraints for polygon-shaped obstacles in a similar way.

Note that the difference between \eqref{eq: Round chance constraint} and \eqref{eq: Polygon chance constraint} is that there is an extra equality constraint in \eqref{eq: Polygon chance constraint}. For such equality constraint, we can not take the dual variables $\lambda^j_k$ and $\mu^j_k$ out of the probability measure $\mathcal{P}$ in the same way as \eqref{eq: round sufficient condition theorem}.
To address this issue, we eliminate the equality constraint in \eqref{eq: Polygon chance constraint} by finding the general solution of the equality constraint. Then, similarly to \eqref{eq: round sufficient condition theorem}, we find the sufficient condition for \eqref{eq: Polygon chance constraint} by fixing the value of dual variables, then tighten the sufficient condition into worst-case chance constraints and consequently transform the worst-case chance constraints into deterministic nonlinear constraints. 

To proceed, we present the following lemma, which indicates the general solution of the equality constraint in \eqref{eq: Polygon chance constraint} has a concise expression.
\begin{lemma}
\label{lemma: general solution lemma}
If the shape of the vehicle is rectangular, namely, $\bar{A}^v = \begin{bmatrix}
1 & 0 & -1 & 0\\ 0 & 1 & 0 & -1\\
\end{bmatrix}^T = \begin{bmatrix}
\mathbf{I}_{2 \times 2} \\
-\mathbf{I}_{2 \times 2} \\
\end{bmatrix}$, 
it holds from the equality constraint $\bm{A}^{v,T}_k\mu^j_k+\bm{A}^{o_j, T}_k\lambda^j_k=\bm{0}_{2\times1}$ that
\begin{equation}
\label{eq: general solution form}
\mu^j_k =  \xi^j_k - R^T_g(\delta \bm{\theta}^j_k)\bar{A}^{o_j, T}\lambda^j_k,
\end{equation}
where $ \xi^j_k := \begin{bmatrix}
\xi^{j,1}_k & \xi^{j,2}_k & \xi^{j,1}_k & \xi^{j,2}_k \\
\end{bmatrix}^T \in \mR^4 $, $R_g(\delta \bm{\theta}^j_k) := \begin{bmatrix}
R(\delta \bm{\theta}^j_k) &  \bm{0}_{2 \times 2}
\end{bmatrix} \in \mR^{2 \times 4}$ and $\delta \bm{\theta}^j_k := \bm{\theta}^v_k - \bm{\theta}^{o_j}_k$ is the difference between the orientation angle of the vehicle and the $j$-th obstacle. 
\end{lemma}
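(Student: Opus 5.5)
The plan is to substitute the world-frame matrices from \eqref{eq: ego world frame transfromation} into the equality constraint and then solve the resulting linear system in $\mu^j_k$, using the special structure $\bar{A}^v=[\mathbf{I}_{2\times2};-\mathbf{I}_{2\times2}]$.

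\emph{Step 1: reduce the constraint to a $2\times4$ linear system.} Since $\bm{A}^{v,T}_k = R(\bm{\theta}^v_k)\bar{A}^{v,T}$ and $\bm{A}^{o_j,T}_k=R(\bm{\theta}^{o_j}_k)\bar{A}^{o_j,T}$ (using $R^{-1}=R^T$), the equality $\bm{A}^{v,T}_k\mu^j_k+\bm{A}^{o_j,T}_k\lambda^j_k=\bm{0}$ reads $R(\bm{\theta}^v_k)\bar{A}^{v,T}\mu^j_k+R(\bm{\theta}^{o_j}_k)\bar{A}^{o_j,T}\lambda^j_k=\bm{0}$. Left-multiplying by $R^T(\bm{\theta}^v_k)$ and using $R^T(\alpha)R(\beta)=R(\beta-\alpha)$ gives
\begin{equation*}
\bar{A}^{v,T}\mu^j_k = -R(-\delta\bm{\theta}^j_k)\bar{A}^{o_j,T}\lambda^j_k = -R^T(\delta\bm{\theta}^j_k)\bar{A}^{o_j,T}\lambda^j_k .
\end{equation*}

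\emph{Step 2: general solution.} With the rectangular shape, $\bar{A}^{v,T}=[\mathbf{I}_{2\times2}\ \ -\mathbf{I}_{2\times2}]$, which has full row rank 2. Its null space is spanned by vectors of the form $[\xi^{j,1}_k,\xi^{j,2}_k,\xi^{j,1}_k,\xi^{j,2}_k]^T$, i.e.\ exactly the vectors $\xi^j_k$ in the statement. A particular solution of $\bar{A}^{v,T}\mu=c$ is $\mu=[c^T\ \ \bm{0}^T]^T$, because $[\mathbf{I}\ \ -\mathbf{I}][c;\bm{0}]=c$. Writing $c=-R^T(\delta\bm{\theta}^j_k)\bar{A}^{o_j,T}\lambda^j_k$, this particular solution is $-[R(\delta\bm{\theta}^j_k)\ \ \bm{0}_{2\times2}]^T\bar{A}^{o_j,T}\lambda^j_k$. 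Hence every solution has the form
\begin{equation*}
\mu^j_k=\xi^j_k-\begin{bmatrix}R(\delta\bm{\theta}^j_k) & \bm{0}_{2\times2}\end{bmatrix}^T\bar{A}^{o_j,T}\lambda^j_k .
\end{equation*}

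\emph{Step 3: match the notation of \eqref{eq: general solution form}.} The lemma writes the second term as $R_g^T(\delta\bm{\theta}^j_k)\bar{A}^{o_j,T}\lambda^j_k$, so one must confirm that the rotation block appearing after transposition is $R^T(\delta\bm\theta^j_k)$, consistent with Step 1. Since $R_g(\delta\bm{\theta}^j_k)^T=[R(\delta\bm{\theta}^j_k)\ \ \bm{0}_{2\times2}]^T$ stacks $R^T(\delta\bm{\theta}^j_k)$ above a zero block, this transpose supplies exactly $R^T(\delta\bm{\theta}^j_k)$ in the first two rows, as required. The point needing care is the sign convention of $\delta\bm{\theta}=\bm{\theta}^v-\bm{\theta}^{o_j}$, which determines whether the rotation block is $R$ or $R^T$. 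I would check it on a simple case, such as $\lambda^j_k$ a unit vector with nonzero $\delta\bm\theta^j_k$. Conversely, any $\mu^j_k$ of this form satisfies the equality, since $\bar{A}^{v,T}\xi^j_k=\bm{0}$. The sign constraint $\mu^j_k\succeq0$ is not part of the lemma and is handled later by restricting $\xi^j_k$.
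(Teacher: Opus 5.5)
Your proof is correct and follows essentially the same route as the paper. Both rotate the equality into the vehicle frame to obtain $\bar{A}^{v,T}\mu^j_k=-R(-\delta\bm{\theta}^j_k)\bar{A}^{o_j,T}\lambda^j_k$, then write the general solution as a null-space element of $\bar{A}^{v,T}$ plus the particular solution supported on the first two coordinates. The sign check you flag in Step 3 is already settled by your own algebra, since $R_g^T(\delta\bm{\theta}^j_k)$ stacks $R^T(\delta\bm{\theta}^j_k)=R(-\delta\bm{\theta}^j_k)$ over a zero block, exactly matching Step 1; you also correctly refer to the null space of $\bar{A}^{v,T}$ where the paper loosely writes $\bar{A}^v$.
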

\begin{proof} 
    Note that $R^{-1}(\theta) = R^T(\theta) = R(-\theta)$, the equation $\bm{A}^{v,T}_k\mu^j_k+\bm{A}^{o_j, T}_k\lambda^j_k=\bm{0}_{2\times1}$ can be reformulated as
    \begin{equation}
    \label{rewrite equation}
    \begin{aligned}    \;&\bm{A}^{v,T}_k\mu^j_k+\bm{A}^{o_j, T}_k\lambda^j_k=\bm{0}_{2\times1}\\  
    \Leftrightarrow\; & R(\bm{\theta}^v_k)\bar{A}^{v,T}\mu^j_k+R(\bm{\theta}^{o_j}_k)\bar{A}^{o_j, T}\lambda^j_k=\bm{0}_{2\times1}  \\  
    \Leftrightarrow\; & \bar{A}^{v,T}\mu^j_k+R^{-1}(\bm{\theta}^v_k)R(\bm{\theta}^{o_j}_k)\bar{A}^{o_j, T}\lambda^j_k=\bm{0}_{2\times1}\\
    \Leftrightarrow\; & \bar{A}^{v,T}\mu^j_k+R(\bm{\theta}^{o_j}_k-\bm{\theta}^v_k)\bar{A}^{o_j, T}\lambda^j_k=\bm{0}_{2\times1}\\
    \Leftrightarrow\; & \bar{A}^{v,T}\mu^j_k = -R(-\delta \bm{\theta}^j_k)\bar{A}^{o_j, T}\lambda^j_k.\\
    \end{aligned}
    \end{equation}
Since the null space of $\bar{A}^v$ is spanned by $\begin{bmatrix} 1 & 0 & 1 & 0 \end{bmatrix}^T$ and $\begin{bmatrix} 0 & 1 & 0 & 1 \end{bmatrix}^T$. 
By substituting $\bar{A}^v$ into the last equation in \eqref{rewrite equation}, we can write $\mu_k^j$ as
\begin{equation*}
\begin{aligned}
\mu^j_k = &\xi^{j,1}_k \begin{bmatrix}
1 \\ 0 \\ 1 \\ 0\\
\end{bmatrix} \!+\! 
\xi^{j,2}_k \begin{bmatrix}
0 \\ 1 \\ 0 \\ 1\\
\end{bmatrix} \!-\! 
\begin{bmatrix}
R(\!-\!\delta \bm{\theta}^j_k) \\  \bm{0}_{2 \times 2}
\end{bmatrix}\bar{A}^{o_j, T}\lambda^j_k\\
= &\xi^j_k - R^T_g(\delta \bm{\theta}^j_k)\bar{A}^{o_j, T}\lambda^j_k.
\end{aligned}
\end{equation*}
Hence the lemma is proved.
\end{proof}

Through Lemma~\ref{lemma: general solution lemma}, we can eliminate the equality constraint in \eqref{eq: Polygon chance constraint} by plugging in \eqref{eq: general solution form}. Consequently, the following proposition holds.
\begin{proposition}
\label{proposition: fixing the value of dual variables}   
If the assumption in Lemma~\ref{lemma: general solution lemma} holds, given any $\alpha_1, \alpha_2, \alpha_3\in[0,1]$ such that $0 \leq \alpha_1 + \alpha_2 + \alpha_3 \leq \alpha$, the following
\begin{equation}
\label{eq: polygon sufficient condition 1}
\begin{aligned}
&\exists\lambda^j_k\succeq 0, \xi^{j,1}_k, \xi^{j,2}_k \geq 0: \|\bar{A}^{o_j, T}\lambda^j_k\|\leq 1,\\
&\mathcal{P}(\xi^{j,1}_k - \bm{q}^{j,1,T}_k\lambda^j_k \geq 0) \geq 1 - \alpha_1,\\
&\mathcal{P}(\xi^{j,2}_k - \bm{q}^{j,2,T}_k\lambda^j_k \geq 0) \geq 1 - \alpha_2,\\
&\mathcal{P}(\bm{r}^{j,T}_k\lambda^j_k -\bm{b}^{v,T}_k\xi^j_k \geq{\rm d}_{\min})\!\geq\! 1 - \alpha_3, \\
\end{aligned}
\end{equation}
is a sufficient condition to \eqref{eq: Polygon chance constraint} for $\forall j \in \mathcal{O}_p$,
where 
\begin{equation}
\label{p_j, q_j}
\begin{aligned}
 \bm{q}^{j,1}_k &:= \bar{A}^{o_j} \begin{bmatrix}
\cos\delta\bm{\theta}^j_k & \sin\delta\bm{\theta}^j_k \end{bmatrix}^T,\\
 \bm{q}^{j,2}_k &:= \bar{A}^{o_j} \begin{bmatrix}
-\sin\delta\bm{\theta}^j_k & \cos\delta\bm{\theta}^j_k \end{bmatrix}^T,\\
 \bm{r}^j_k &:= \bar{A}^{o_j} R_g(\delta \bm{\theta}^j_k)\bm{b}^v_k \!-\! \bm{b}^{o_j}_k.
\end{aligned}
\end{equation}
and $ \xi^j_k := \begin{bmatrix}
\xi^{j,1}_k & \xi^{j,2}_k & \xi^{j,1}_k & \xi^{j,2}_k \end{bmatrix}^T$.
\end{proposition}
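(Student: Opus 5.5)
The plan is to build, from deterministic $\lambda^j_k,\xi^{j,1}_k,\xi^{j,2}_k$ satisfying \eqref{eq: polygon sufficient condition 1}, a \emph{random} dual variable $\bm\mu^j_k$ through the general solution \eqref{eq: general solution form}. We then show that whenever three events hold simultaneously, the pair $(\lambda^j_k,\bm\mu^j_k)$ is a feasible witness for the existential statement inside the probability in \eqref{eq: Polygon chance constraint}. Finally we bound the probability of that intersection by Theorem~\ref{thm: Bounds on Disjunctive Chance Constraints}. Because the quantifier $\exists$ sits inside $\mathcal{P}$ in \eqref{eq: Polygon chance constraint}, the witness may depend on the realization of the noise. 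This is exactly what makes a random $\bm\mu^j_k$ admissible, even though $\lambda^j_k$ and $\xi^j_k$ are fixed.

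\textbf{Step 1 (equality and norm constraints hold surely).} Define $\bm\mu^j_k:=\xi^j_k-R_g^T(\delta\bm\theta^j_k)\bar A^{o_j,T}\lambda^j_k$.
\begin{itemize}
\item Run the chain \eqref{rewrite equation} of Lemma~\ref{lemma: general solution lemma} backwards. Since $\bar A^{v,T}\xi^j_k=\bm 0$ (because $\xi^j_k$ lies in the null space of $\bar A^{v,T}$) and $\bar A^{v,T}[R(-\delta\bm\theta^j_k);\bm 0]=R(-\delta\bm\theta^j_k)$, we get $\bm{A}^{v,T}_k\bm\mu^j_k+\bm{A}^{o_j,T}_k\lambda^j_k=\bm 0_{2\times1}$ for every realization.
\item As in the proof of Theorem~\ref{theorem: round sufficient condition}, orthogonality of $R$ gives $\|\bm A^{o_j,T}_k\lambda^j_k\|=\|R(\bm\theta^{o_j}_k)\bar A^{o_j,T}\lambda^j_k\|=\|\bar A^{o_j,T}\lambda^j_k\|\le1$ surely.
\end{itemize}

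\textbf{Step 2 (reduce the remaining conditions to the three events).}
\begin{itemize}
\item \emph{Nonnegativity of $\bm\mu^j_k$.} Write out $R_g^T(\delta\bm\theta^j_k)=[R(-\delta\bm\theta^j_k);\bm 0_{2\times2}]$. The first entry of $R(-\delta\bm\theta^j_k)\bar A^{o_j,T}\lambda^j_k$ is $\lambda^{j,T}_k\bar A^{o_j}[\cos\delta\bm\theta^j_k,\ \sin\delta\bm\theta^j_k]^T=\bm q^{j,1,T}_k\lambda^j_k$, and the second entry is $\bm q^{j,2,T}_k\lambda^j_k$. Hence
\[
\bm\mu^j_k=\big[\xi^{j,1}_k-\bm q^{j,1,T}_k\lambda^j_k,\ \xi^{j,2}_k-\bm q^{j,2,T}_k\lambda^j_k,\ \xi^{j,1}_k,\ \xi^{j,2}_k\big]^T .
\]
The last two entries are nonnegative by assumption, so $\bm\mu^j_k\succeq0$ exactly on $E_1\cap E_2$, where $E_i:=\{\xi^{j,i}_k-\bm q^{j,i,T}_k\lambda^j_k\ge0\}$.
\item \emph{Distance inequality.} Substitute $\bm\mu^j_k$ and use $\bm b^{v,T}_kR_g^T\bar A^{o_j,T}\lambda^j_k=(\bar A^{o_j}R_g\bm b^v_k)^T\lambda^j_k$. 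This gives
\[
-\bm b^{v,T}_k\bm\mu^j_k-\bm b^{o_j,T}_k\lambda^j_k=\bm r^{j,T}_k\lambda^j_k-\bm b^{v,T}_k\xi^j_k ,
\]
so the distance inequality is exactly $E_3:=\{\bm r^{j,T}_k\lambda^j_k-\bm b^{v,T}_k\xi^j_k\ge{\rm d}_{\min}\}$.
\end{itemize}

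\textbf{Step 3 (bound the intersection).} On $E_1\cap E_2\cap E_3$, the event inside $\mathcal{P}$ in \eqref{eq: Polygon chance constraint} holds with witnesses $(\lambda^j_k,\bm\mu^j_k)$. By monotonicity of probability, the left-hand side of \eqref{eq: Polygon chance constraint} is at least $\mathcal{P}(E_1\wedge E_2\wedge E_3)$. Theorem~\ref{thm: Bounds on Disjunctive Chance Constraints}, applied with $\mathcal{P}(E_i)\ge1-\alpha_i$ and $\sum_i\alpha_i\le\alpha$, gives $\mathcal{P}(E_1\wedge E_2\wedge E_3)\ge1-\alpha$, which proves the claim for every $j\in\mathcal{O}_p$.

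The main obstacle is Step 1 together with the justification that a noise-dependent $\bm\mu^j_k$ is legitimate. One must check that the direct substitution of \eqref{eq: general solution form} satisfies the equality constraint; this is the converse direction of Lemma~\ref{lemma: general solution lemma}, which is a short computation. The remaining steps are bookkeeping with the rotation-matrix entries.
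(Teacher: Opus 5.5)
Your proof is correct and follows essentially the same route as the paper. Both arguments substitute the general solution \eqref{eq: general solution form}, identify the first two entries of $\bm\mu^j_k$ with $\xi^{j,i}_k-\bm q^{j,i,T}_k\lambda^j_k$, rewrite the distance term as $\bm r^{j,T}_k\lambda^j_k-\bm b^{v,T}_k\xi^j_k$, fix the dual variables outside $\mathcal{P}$, and apply Theorem~\ref{thm: Bounds on Disjunctive Chance Constraints}. The difference is only presentational: the paper writes a chain of equivalences, which uses both directions of Lemma~\ref{lemma: general solution lemma}, and then pulls the quantifier out, whereas you build the noise-dependent witness $\bm\mu^j_k$ directly and so need only the converse direction you verify in Step~1.
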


\begin{proof}
Similar to Sec.~\ref{eq: A_v mu equation reformulation}, $\|\bm{A}^{o_j, T}_k\lambda^j_k\|$ can be reformulated as the following deterministic form:
\begin{equation}
\label{eq: A_o lambda equation reformulation}
\|\bm{A}^{o_j, T}_k\lambda^j_k\|= \|\bar{A}^{o_j,T}\lambda^j_k\|,    
\end{equation}
and note that by substituting~\eqref{eq: general solution form} into the term $-\bm{b}^{v,T}_k \mu^j_k - \bm{b}^{o_j,T}_k \lambda^j_k$ in the chance constraints, it holds that
\begin{equation}
    \label{eq: r^j_k}
    \begin{aligned}
    &-\bm{b}^{v,T}_k \mu^j_k - \bm{b}^{o_j,T}_k \lambda^j_k\\
    =& -\bm{b}^{v,T}_k\!\left(\xi^j_k - R_g^{T}(\delta \bm{\theta}^j_k)\bar{A}^{o_j,T}\lambda^j_k\right) - \bm{b}^{o_j,T}_k \lambda^j_k \\
    =& -\bm{b}^{v,T}_k \xi^j_k + \left(\bar{A}^{o_j} R_g(\delta \bm{\theta}^j_k)\bm{b}^v_k - \bm{b}^{o_j}_k\right)^{\!T}\lambda^j_k\\
    =& \quad \bm{r}^{j,T}_k\lambda^j_k -\bm{b}^{v,T}_k\xi^j_k.
    \end{aligned}
\end{equation}
Hence the following equivalence holds
\begin{equation}
\label{eq: polygon sufficient condition1 proof part1}
\begin{aligned}
\eqref{eq: Polygon chance constraint}\Leftrightarrow\; & \mathcal{P}(\exists\lambda^j_k,\mu^j_k\succeq 0: \|\bar{A}^{o_j, T}\lambda^j_k\|\leq 1,\\
&\quad -\bm{b}^{v,T}_k\mu^j_k -\bm{b}^{o_j,T}_k\lambda^j_k \geq {\rm d}_{\min},\\
&\quad \bm{A}^{v,T}_k\mu^j_k + \bm{A}^{o_j, T}_k\lambda^j_k = \bm{0}_{2\times1} ) \geq 1 - \alpha. \\
\Leftrightarrow\; & \mathcal{P}(\exists\lambda^j_k\succeq 0, \xi^{j,1}_k, \xi^{j,2}_k \in \mR: \|\bar{A}^{o_j, T}\lambda^j_k\|\leq 1,\\
&\quad \xi^j_k - R^T_g(\delta \bm{\theta}^j_k)\bar{A}^{o_j, T}\lambda^j_k \succeq 0,\\
&\quad \bm{r}^{j,T}_k\lambda^j_k -\bm{b}^{v,T}_k\xi^j_k \geq{\rm d}_{\min}) \geq 1 - \alpha,\\
\Leftrightarrow\; & \mathcal{P}(\exists\lambda^j_k\succeq 0, \xi^{j,1}_k, \xi^{j,2}_k \geq 0: \|\bar{A}^{o_j, T}\lambda^j_k\|\leq 1,\\
&\quad \xi^{j,1}_k - \bm{q}^{j,1,T}_k\lambda^j_k \geq 0, \xi^{j,2}_k - \bm{q}^{j,1,T}_k\lambda^j_k \geq 0,\\
&\quad \bm{r}^{j,T}_k\lambda^j_k -\bm{b}^{v,T}_k\xi^j_k \geq{\rm d}_{\min}) \geq 1 - \alpha,
\end{aligned}
\end{equation} 
where 
the third equivalence results from separating each rows of the inequality constraint.
Then, by extracting $\lambda^j_k$, $\xi^{j,1}_k$ and $\xi^{j,2}_k$ from the probability measure $\mathcal{P}(\cdot)$ similar to \eqref{eq: round equivalence},
and applying Theorem~\ref{thm: Bounds on Disjunctive Chance Constraints}, we get the following sufficient condition form for \eqref{eq: polygon sufficient condition1 proof part1} and hence for \eqref{eq: Polygon chance constraint}
\begin{equation}
\label{eq: polygon sufficient condition1 proof part2}
\begin{aligned}
\eqref{eq: Polygon chance constraint}\Leftarrow \; & \exists\lambda^j_k\succeq 0, \xi^{j,1}_k, \xi^{j,2}_k \geq 0: \|\bar{A}^{o_j, T}\lambda^j_k\|\leq 1,\\
&\mathcal{P}(\xi^{j,1}_k - \bm{q}^{j,1,T}_k\lambda^j_k \geq 0, \; \xi^{j,2}_k - \bm{q}^{j,2,T}_k\lambda^j_k \geq 0,\\
&\quad \bm{r}^{j,T}_k\lambda^j_k -\bm{b}^{v,T}_k\xi^j_k \geq{\rm d}_{\min}) \geq 1 - \alpha,\Leftarrow\eqref{eq: polygon sufficient condition 1}
\end{aligned}
\end{equation}    
\end{proof}
So far, we have eliminated the equality constraint and tighten the constraint \eqref{eq: Polygon chance constraint} with its sufficient condition \eqref{eq: polygon sufficient condition 1}. Nevertheless, as the distribution of stochastic variables $\bm{r}^j_k$, $\bm{q}^{j,1}_k$ and $\bm{q}^{j,2}_k$ in \eqref{eq: polygon sufficient condition 1} are unknown, the chance constraints in \eqref{eq: polygon sufficient condition 1} are still intractable. 
To address this issue, similar to Theorem~\ref{theorem: round sufficient condition}, we further tighten the sufficient condition into worst-case chance constraints through Wasserstein ambiguity set and then transform the worst-case chance constraints into deterministic nonlinear constraints.
More specifically, under the Assumptions in Section~\ref{sec: Problem Statement}, the following theorem holds.
\begin{theorem}
\label{theorem: polygon obstacle deterministic nonlinear constraints}
Suppose Assumption~\ref{ass: zero_mean_and_known_covariance}-\ref{ass: deterministic_sizes_and_shape} holds.
Denote $\bm{\psi}_{j,k,1}:=-\bm{q}^{j,1,T}_k\lambda^j_k$, $\bm{\psi}_{j,k,2}:=-\bm{q}^{j,2,T}_k\lambda^j_k$ and $\bm{\psi}_{j,k,3} := \bm{r}^{j,T}_k\lambda^j_k -\bm{b}^{v,T}_k\xi^j_k$, and suppose it holds that the distribution $\mathbb{P}_{\bm{\psi}_{j,k,i}} \in \mathcal{F}_w(\mathbb{P}_{\bm{\psi}_{j,k,i}}^R, \theta_{w_i}), i \in \{1,2,3\}$, where the reference distribution  $\mathbb{P}_{\bm{\psi}_{j,k,i}}^R = \mathcal{N}(\mE[\bm{\psi}_{j,k,i}], \Sigma_{\bm{\psi}_{j,k,i}}), i \in \{1,2,3\}$. Then
\begin{equation}
\label{sufficient condition2}
\begin{aligned}
& \exists\lambda^j_k\succeq 0, \xi^{j,1}_k, \xi^{j,2}_k \geq 0: \|\bar{A}^{o_j, T}\lambda^j_k\|\leq 1,\\
&\xi^{j,1}_k \!-\! \mE[\bm{q}^{j,1}_k]^T\lambda^j_k \!\geq\! \eta_1^*\sqrt{\lambda^{j,T}_k \Sigma_{\bm{q}^{j,1}_k}\lambda^j_k}, \\
&\xi^{j,2}_k \!-\! \mE[\bm{q}^{j,2}_k]^T\lambda^j_k \!\geq\! \eta_2^*\sqrt{\lambda^{j,T}_k \Sigma_{\bm{q}^{j,2}_k}\lambda^j_k},\\
&\mE[\bm{r}^j_k]^T\lambda^j_k -\mE[\bm{b}^v_k]^T\xi^j_k \geq {\rm d}_{\min} + \eta_3^*\Sigma_{\bm{\psi}{j,k,3}}^{\frac{1}{2}}, \\
\end{aligned}
\end{equation}
is a sufficient deterministic condition for \eqref{eq: polygon sufficient condition 1},
where $\eta_1^*$, $\eta_2^*$ and $\eta_3^*$ are computed in a similar way to \eqref{eq: eta*} and the expectation $\mE[\bm{q}^{j,1}_k]$, $\mE[\bm{q}^{j,2}_k]$, $\mE[\bm{r}^j_k]$, $\mE[\bm{b}^v_k]$, the covariance matrices $\Sigma_{\bm{q}^{j,1}_k}$, $\Sigma_{\bm{q}^{j,2}_k}$, $\Sigma_{\bm{\psi}_{j,k,3}}$ are computed in detail in Appendix~\ref{{app: expectation and covariance in polygon condition}}.  
\end{theorem}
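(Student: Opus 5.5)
The plan mirrors the proof of Theorem~\ref{theorem: round sufficient condition}. We treat each of the three chance constraints in \eqref{eq: polygon sufficient condition 1} separately. Once $\lambda^j_k$ and $\xi^{j,1}_k,\xi^{j,2}_k$ are fixed outside the probability (as already arranged in Proposition~\ref{proposition: fixing the value of dual variables}), each constraint is a scalar chance constraint of the form $\mathcal{P}(\bm{x}\geq a)\geq 1-\alpha_i$, with $\bm{x}$ a scalar random variable that is affine in the deterministic decision variables.

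Concretely, the first constraint $\mathcal{P}(\xi^{j,1}_k-\bm{q}^{j,1,T}_k\lambda^j_k\geq 0)\geq 1-\alpha_1$ is rewritten as $\mathcal{P}(\bm{\psi}_{j,k,1}\geq -\xi^{j,1}_k)\geq 1-\alpha_1$. The second is rewritten analogously with $\bm{\psi}_{j,k,2}$ and threshold $-\xi^{j,2}_k$. The third is exactly $\mathcal{P}(\bm{\psi}_{j,k,3}\geq {\rm d}_{\min})\geq 1-\alpha_3$.

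By hypothesis the true law of $\bm{\psi}_{j,k,i}$ lies in $\mathcal{F}_w(\mathbb{P}^R_{\bm{\psi}_{j,k,i}},\theta_{w_i})$. Therefore
\begin{equation*}
\inf_{\mathbb{P}\in\mathcal{F}_w(\mathbb{P}^R_{\bm{\psi}_{j,k,i}},\theta_{w_i})}\mathcal{P}(\bm{\psi}_{j,k,i}\geq a_i)\geq 1-\alpha_i
\end{equation*}
implies the corresponding constraint in \eqref{eq: polygon sufficient condition 1}. The reference distributions are Gaussian, a special case of an elliptical distribution, so Corollary~\ref{corollary: chance constraint equivalent deterministic form} applies with $\eta_i^*$ computed from \eqref{eq: eta*} using $\alpha_i$ and $\theta_{w_i}$. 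It gives the equivalence
\begin{equation*}
\mE[\bm{\psi}_{j,k,i}]\geq a_i+\eta_i^*\Sigma_{\bm{\psi}_{j,k,i}}^{1/2}.
\end{equation*}

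It remains to compute the means and variances. By linearity, $\mE[\bm{\psi}_{j,k,1}]=-\mE[\bm{q}^{j,1}_k]^T\lambda^j_k$ and $\Sigma_{\bm{\psi}_{j,k,1}}=\lambda^{j,T}_k\Sigma_{\bm{q}^{j,1}_k}\lambda^j_k$, because $\lambda^j_k$ is deterministic. Substituting these yields the first inequality of \eqref{sufficient condition2}, and the second follows in the same way. For the third, $\mE[\bm{\psi}_{j,k,3}]=\mE[\bm{r}^j_k]^T\lambda^j_k-\mE[\bm{b}^v_k]^T\xi^j_k$, again by linearity. Its variance $\Sigma_{\bm{\psi}_{j,k,3}}$ involves the joint covariance of $\bm{r}^j_k$ and $\bm{b}^v_k$, which are correlated through $\bm{\theta}^v_k$ and $\bm{t}^v_k$. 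We keep it in this aggregated form, as the statement does, and defer its explicit formula to the appendix.

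The step that needs the most care is checking that these moments are finite and computable under Assumptions~\ref{ass: zero_mean_and_known_covariance}--\ref{ass: known_sin_cos}. The vector $\bm{q}^{j,i}_k$ depends only on $\cos\delta\bm{\theta}^j_k$ and $\sin\delta\bm{\theta}^j_k$. These expand, via angle-difference formulas, into products of the independent quantities $\cos\bm{w}_\theta$, $\sin\bm{w}_\theta$ and the nominal angles, so their moments follow from the moments assumed known. The vector $\bm{r}^j_k$ additionally contains products of trigonometric terms with the translations $\bm{t}$. The assumed independence of the position and orientation noises, and of the vehicle and obstacle noises, lets the expectations of such products factorize.

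Two further points are needed for Corollary~\ref{corollary: chance constraint equivalent deterministic form} to apply. First, the $\alpha_i$ must lie in $(0,0.5]$, which can be assumed. Second, each variance must be positive, as the corollary requires $\Sigma$ positive definite. When a variance is zero the constraint is deterministic and the inequality holds trivially.

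Finally, since \eqref{sufficient condition2} implies each of the three worst-case constraints, and hence each chance constraint in \eqref{eq: polygon sufficient condition 1} with the same $\lambda^j_k,\xi^{j,1}_k,\xi^{j,2}_k$, sufficiency follows.
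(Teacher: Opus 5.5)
Your proposal is correct and follows the paper's own argument: the paper likewise replaces each of the three chance constraints in \eqref{eq: polygon sufficient condition 1} by its worst case over the Wasserstein ball (valid because the true law lies in that ball) and then applies Corollary~\ref{corollary: chance constraint equivalent deterministic form} to each one. Your explicit thresholds $a_1=-\xi^{j,1}_k$, $a_2=-\xi^{j,2}_k$, $a_3={\rm d}_{\min}$, the linearity computation of the means and variances of $\bm{\psi}_{j,k,i}$, and your remarks on $\alpha_i\in(0,0.5]$ and on degenerate variances fill in details that the paper leaves implicit.
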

\begin{proof}
Similar to the proof of Theorem~\ref{theorem: round sufficient condition}, it holds that 
\begin{equation*}
\label{sufficient condition2 proof}
\begin{aligned}
\eqref{eq: polygon sufficient condition 1}\Leftarrow\; & \exists\lambda^j_k\succeq 0, \xi^{j,1}_k, \xi^{j,2}_k \geq 0: \|\bar{A}^{o_j, T}\lambda^j_k\|\leq 1,\\
&\inf_{\mathbb{P}_{\bm{\psi}_{j,k,1}}}
\mathcal{P}(\xi^{j,1}_k \!-\! \bm{q}^{j,1,T}_k\lambda^j_k \geq 0) \geq 1 \!-\! \alpha_1,\\
&\inf_{\mathbb{P}_{\bm{\psi}_{j,k,2}}}\mathcal{P}(\xi^{j,2}_k \!-\! \bm{q}^{j,2,T}_k\lambda^j_k \geq 0) \geq 1 \!-\! \alpha_2,\\
&\inf_{\mathbb{P}_{\bm{\psi}_{j,k,3}}}\mathcal{P}(\bm{r}^{j,T}_k\lambda^j_k \!-\!\bm{b}^{v,T}_k\xi^j_k \geq{\rm d}_{\min})\!\geq\! 1 \!-\! \alpha_3, \\
&\xLeftrightarrow{\text{Corollary }\ref{corollary: chance constraint equivalent deterministic form}} \eqref{sufficient condition2}.
\end{aligned}
\end{equation*}
\end{proof}
Through Theorem~\ref{theorem: round sufficient condition} and Theorem~\ref{theorem: polygon obstacle deterministic nonlinear constraints}, we tighten the chance constraints \eqref{eq:U_OBCA_constraints} into deterministic ones, i.e., 
\begin{align}
&\mbox{\textbf{Round-shaped obstacle deterministic constraints: }}
\label{Round deterministic constraint}\\
&\mu^j_k\succeq 0, \|\bar{A}^{v,T}\mu^j_k\|\leq 1,\nonumber\\
&\mE[\bm{p}^j_k]^T \mu^j_k \!+\! {\rm d}_{\min} \!+\! r_{o_j} \!+\!\eta^*\sqrt{\mu^{j,T}_k \Sigma_{\bm{p}^j_k}\mu^j_k} \leq 0, \nonumber\\
&\forall k = 1, \dots, N, \quad\forall j \in \mathcal{O}_r, \nonumber\\
&\mbox{\textbf{Polygon-shaped obstacle deterministic constraints:} }
\label{Polygon deterministic constraint}\\
&\lambda^j_k\succeq 0, \xi_k^{j,1}\geq 0, \xi_k^{j,2}\geq 0, \|\bar{A}^{o_j, T}\lambda^j_k\|\leq 1,\nonumber\\
&\xi^{j,1}_k \!-\! \mE[\bm{q}^{j,1}_k]^T\lambda^j_k \!\geq\! \eta_1^*\sqrt{\lambda^{j,T}_k \Sigma_{\bm{q}^{j,1}_k}\lambda^j_k}, \nonumber\\
&\xi^{j,2}_k \!-\! \mE[\bm{q}^{j,2}_k]^T\lambda^j_k \!\geq\! \eta_2^*\sqrt{\lambda^{j,T}_k \Sigma_{\bm{q}^{j,2}_k}\lambda^j_k}, \nonumber\\
&\mE[\bm{r}^j_k]^T\lambda^j_k -\mE[\bm{b}^v_k]^T\xi^j_k \geq {\rm d}_{\min} + \eta_3^*\Sigma_{\bm{\psi}{j,k,3}}^{\frac{1}{2}}\nonumber\\
&\forall k = 1: N, \quad\forall j \in \mathcal{O}_p. \nonumber
\end{align}
And consequently, we transform the probabilistic motion planning problem \eqref{eq: Overall Formulation} into the  following deterministic nonlinear optimization problem
\begin{equation}
\label{eq: deterministic nonlinear optimization problem}
\begin{aligned}
\min_{\substack{\bar{s}, u, \lambda, \mu, \xi}}\;\; &J(\bar{s}, u),\\ 
\mbox{s.t.\quad \quad} &{\rm Nominal \ kinematic \ constraints} \ \eqref{eq: nominal kinematics}, \\
\qquad&{\rm Deterministic \ safety \ constraints} \ \eqref{Round deterministic constraint}, \eqref{Polygon deterministic constraint},\\
\qquad&{\rm State \ and \ control \ constraints},\\ 
\end{aligned}
\end{equation}
where $\lambda, \mu, \xi$ are short hand notations for all of the dual variables. Through such reformulation, \eqref{eq: deterministic nonlinear optimization problem} can be solved efficiently by gradient-based nonlinear solvers.
\begin{remark}
In practice, we estimate the covariance matrix $\Sigma_{\bm{p}^j_k}$, $\Sigma_{\bm{q}^{j,1}_k}$, $\Sigma_{\bm{q}^{j,2}_k}$ and $\Sigma_{\bm{q}^{j,3}_k}$ in Theorem~\ref{theorem: round sufficient condition} and Theorem~\ref{theorem: polygon obstacle deterministic nonlinear constraints} by replacing the variables $\bar{x}^v_k, \bar{y}^v_k, \bar{\theta}^v_k, k =1:N$ and $\bar{x}^{o_j}_k, \bar{y}^{o_j}_k, \bar{\theta}^{o_j}_k, j \in \{1,\ldots,N_r+N_p\}, k =1:N$ in Appendix~\ref{app: expectation and covariance in round condition} and Appendix~\ref{{app: expectation and covariance in polygon condition}} with the initial guess of these decision variables. This would further accelerate the numerical computation as the covariance matrix can be computed through the initial guess before the numerical solving process. With such approximation, the OCP \eqref{eq: deterministic nonlinear optimization problem} can be efficiently solved. However, the trade-off of the computational advantage is that there exists deviations between the estimated uncertainties and true uncertainties, yet we claim the deviation is tolerable as long as we can guarantee the assumptions in Theorem~\ref{theorem: round sufficient condition} and Theorem~\ref{theorem: polygon obstacle deterministic nonlinear constraints} are satisfied by choosing large enough $\theta_w$ in \eqref{eq: round sufficient condition theorem} and $\theta_{w_1}$, $\theta_{w_2}$, $\theta_{w_3}$ in \eqref{sufficient condition2}.
\end{remark}
\section{Simulation Results}

We implement the proposed algorithm in MATLAB R2022b and execute it on a laptop equipped with an Intel Core i5-12500H CPU and 16\,GB RAM. 
Two scenarios—the parallel parking scenario and the narrow corridor navigation scenario—are considered to evaluate the performance of the proposed method in narrow environments and to compare it against benchmark approaches.

\subsection{Global Planning: Parallel Parking Trajectory Planning for a Four Wheel Steering Vehicle}\label{sec:parallel_parking_sim}
\subsubsection{The Simulation Settings}
\label{sec: The Simulation Settings}
As shown in Fig.~\ref{fig: The simulation results in the parallel parking scenario.}, in the parallel parking lot scenario, a $4.8$m $\times$ $1.962$m four-wheel-steering vehicle needs to park into a $8.4$m $\times$ $2.4$m parking space while avoiding collision with the static and moving obstacles. The uncertainties exist in the perception of the obstacles, motion of the moving obstacles and the tracking control errors caused by environmental disturbances. Moreover, to minimize the total parking time, we let the sampling period $\Delta t$ also be the decision variable in the optimization and choose the number of the time step as $N = 100$, the risk level as $\alpha_1 = \alpha_2 = 0.2\alpha$, $\alpha_3 = 0.6\alpha$, and Wasserstein radius $\theta_w = \theta_{w_1} = \theta_{w_2} = \theta_{w_3} = 0.001$. For the numerical solver, we choose the interior-point solver IPOPT \cite{IPOPT} and set the linear solver as ``MA27" \cite{MA27} in AMPL \cite{AMPL}.

In the parking scenario, the state and control inputs of the four-wheel-steering vehicle are defined as $\bm{s}^v = [\bm{x}_v, \bm{y}_v, \bm{\theta}_v, \varphi_r, \varphi_f, v]^T$ and $u = [w_r, w_f, a]^T$, where $\varphi_r, \varphi_f$ and $v$ are the rear steering angle, front steering angle and the velocity of the vehicle, respectively. $w_r, w_f$ and $a$ are the angular velocities of the steering angles as well as the acceleration of the vehicle.
Thus, the nominal vehicle kinematics are modeled as
\begin{equation*}
\label{eq: nominal vehicle kinematics parking scenario}
\begin{aligned}
    &\bar{x}^v_{k+1} = \bar{x}^v_k + v_k\cos\bar{\theta}^v_k \cdot \Delta t , \\
    &\bar{y}^v_{k+1} = \bar{y}^v_k + v_k\sin\bar{\theta}^v_k \cdot \Delta t , \\
    &\bar{\theta}^v_{k+1} = \bar{\theta}^v_k + v_k \frac{\tan\varphi_{f,k}\cos\varphi_{r,k} - \sin\varphi_{r,k}}{L} \cdot \Delta t,\\
    &\varphi_{r,k+1} = \varphi_{r,k} + w_{r,k}\cdot \Delta t , \\
    &\varphi_{f,k+1} = \varphi_{f,k} + w_{f,k}\cdot \Delta t , \\
    &v_{k+1} = v_k + a_k\cdot \Delta t ,\quad k = 0:N-1,\\
\end{aligned}
\end{equation*}
where $L=2.8m$ is the length between the axles. Moreover, we assume to have a low-level controller to track the planned trajectory, and the tracking errors caused by the environmental disturbances as well as the pose perception error shall have the distribution $[\bm{w}_{x,k}^v, \bm{w}_{y,k}^v, \bm{w}_{\theta, k}^v]^T \sim \mathcal{N}\Big(\bm{0}_{3\times1}, \diag(10^{-3}\mbox{ m}^2, 10^{-3}\mbox{ m}^2, 0.1\mbox{ deg}^2)\Big),k =1: N$. 

Furthermore, we assume there are four static polygon-shaped obstacles in the scenario. We simulate their observations noises as $[\bm{w}^{o_j}_{x}, \bm{w}^{o_j}_{y}, \bm{w}^{o_j}_{\theta}]^T \sim \mathcal{N}\Big(\bm{0}_{3\times1}, \diag(10^{-2}\mbox{ m}^2, 10^{-2}\mbox{ m}^2, 1\mbox{ deg}^2)\Big)$, $j =1:3$ and the moving polygon-shaped obstacle shall have a horizontal constant velocity $v_{\rm obs}=0.8\mbox{m/s}$. For the bottom obstacle $(j=4)$, we let it be a deterministic obstacle without any noises to make sure that the collision avoidance constraints could be satisfied at the goal position. 
To model the uncertainty growth in the predicted positions of the moving obstacles over the planning horizon, the trajectory prediction error is simulated via a discretized random-walk process. Namely,
$[\bm{w}^{o_j}_{x,k}, \bm{w}^{o_j}_{y,k}, \bm{w}^{o_j}_{\theta,k}]^T \sim \mathcal{N}\Big(\bm{0}_{3\times1}, \diag(k \times 10^{-3}\mbox{ m}^2, k \times 10^{-3}\mbox{ m}^2$, $k \times 0.1\mbox{ deg}^2)\Big)$, $ k =1:N, j = 5$. 

We penalize the parking time, terminal error and control effort in the optimization problem. In particular, we take the cost function as
\begin{equation*}
\label{eq: cost function corridor}
\begin{aligned}
J(\bar{s}, u, \Delta t)
\!=\! (N\!-\!1)\Delta t \!+ \!\|\bar{s}^v_N\!-\!\bar{s}_g\|_Q^2 \! + \!\sum_{k=0}^{N-1}\|u_k\|_R^2,
\end{aligned}
\end{equation*}
where $\bar{s}_g$ is the goal state of the parking lot, $Q=\diag(10,10,10)$ and $R=\diag(0.01, 0.01)$ are weighting matrices.

\subsubsection{The Simulation Results}
In this simulation, we compare our method with the uncertainty-aware methods: Robust Collision Avoidance (RCA) \cite{RCA}, Linearized Chance Constraints (LCC) \cite{zhu2019chance}, Ellipsoid Chance Constraints (ECC) \cite{Castillo2020}. Since they approximate the obstacles and the vehicle with ellipses, they fail to find a feasible solution due to over-conservatism. Therefore, we only compare our method against the nominal OBCA method \cite{OBCA} to validate the ability of guaranteeing the safety under uncertainties. The results of OBCA and U-OBCA are shown in Fig.\ref{fig: Trajectory of OBCA.},~\ref{fig: Trajectory of U-OBCA (alpha=0.1).}. More specifically, as shown in Fig.\ref{fig: Trajectory of OBCA.}, OBCA chooses to get into the parking space before the moving obstacle passes by the parking space. The trajectory is more efficient when the pose of the obstacles are deterministic, however, when there exists uncertainties, the trajectory is too aggressive. In contrast, our method adapts to the collision probability uncertainties, resulting in the trajectory that avoids the obstacle first, and then parks into the parking spaces after the moving obstacle leaves. 

Moreover, to further quantitatively verify the effectiveness of U-OBCA, we evaluate its robustness against collision via a Monte-Carlo simulation with 1000 trials.
In each trial, for a given nominal trajectory that consists $N = 100$ discrete time steps, random perturbations are independently sampled and added to the nominal poses of both the vehicle and the obstacles according to the noises distributions described in Sec. ~\ref{sec: The Simulation Settings}. A collision check is performed at every time step by calculating the distance between the vehicle and obstacles under the perturbed poses. 
If a collision occurs at any time step within a trial, that trial is counted as a trajectory-level collision.
A trajectory is regarded as successful in a Monte-Carlo trial if no collision is detected at all time steps in that trial. The results of the collision test are shown in Table~\ref{table: Comparison of different methods in parking scenario}. From Table~\ref{table: Comparison of different methods in parking scenario}, under a confidence level of $\alpha = 0.01$, U-OBCA reduces the number of collisions by 99.0\% compared with the OBCA method and achieves a success rate of 97.1\%. Moreover, under a risk level of $\alpha = 0.1$, U-OBCA selects trajectories similar to the OBCA method, but still reduces the collision count by 94.4\% and attains a success rate of 87.0\%. In contrast, the OBCA method results in 3210 collisions and achieves a success rate of only 1.4\%.
However, the trade-off is that U-OBCA takes longer time to finish the task than OBCA. As a summary, U-OBCA guarantees the safety by reducing the number of collisions and improving the success rate significantly under the existence of the uncertainties.

\begin{figure}[!htbp]
\centering
\subfigure[Trajectory of OBCA.]{\includegraphics[width=0.45\hsize]{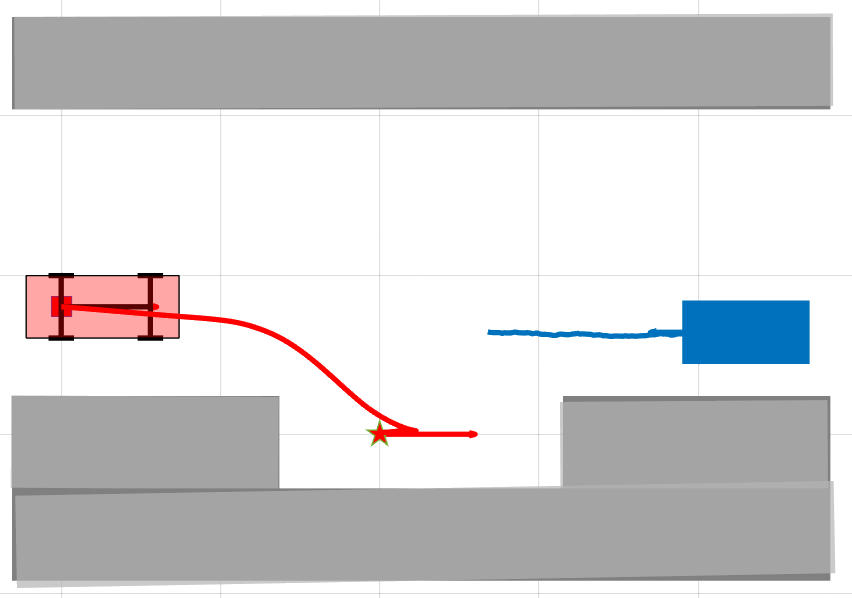}
\label{fig: Trajectory of OBCA.}}
\subfigure[Legend.]
{\includegraphics[width=0.45\hsize]{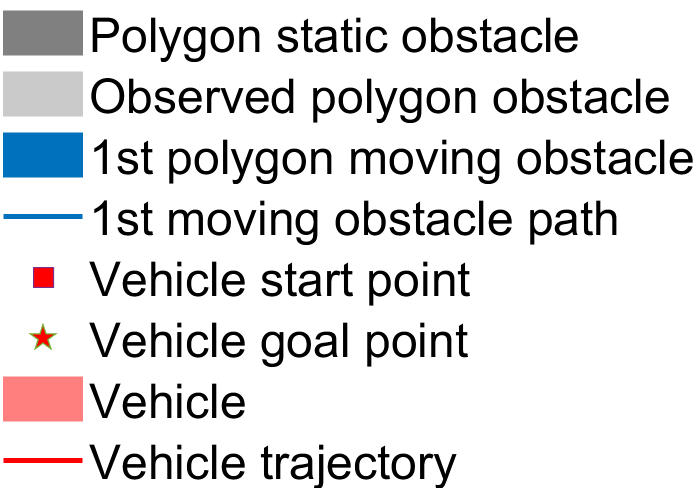}
\label{fig: Legend parking lot.}}
\subfigure[Trajectory of U-OBCA $(\alpha=0.1)$.]{\includegraphics[width=0.43\hsize]{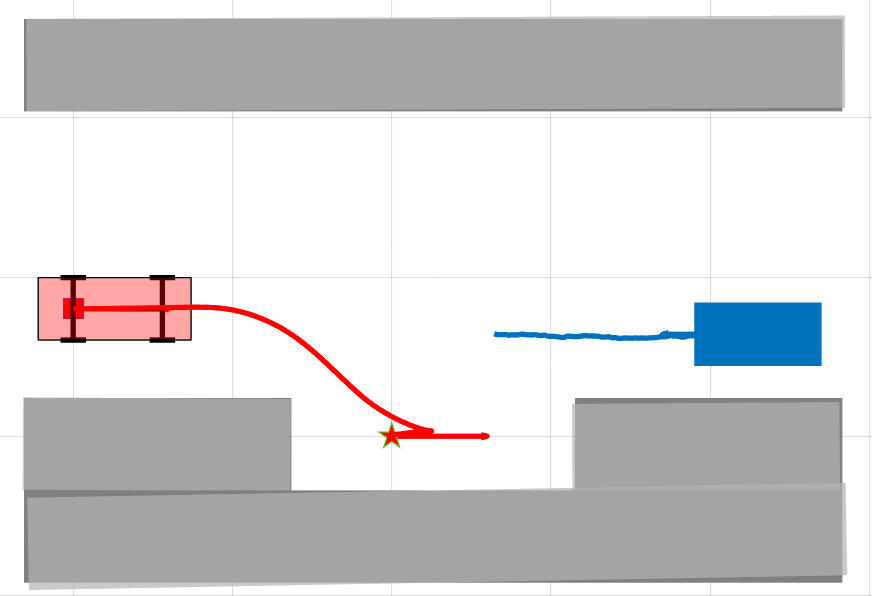}
\label{fig: Trajectory of U-OBCA (alpha=0.1).}}
\subfigure[Trajectory of U-OBCA $(\alpha=0.01)$.]{\includegraphics[width=0.43\hsize]{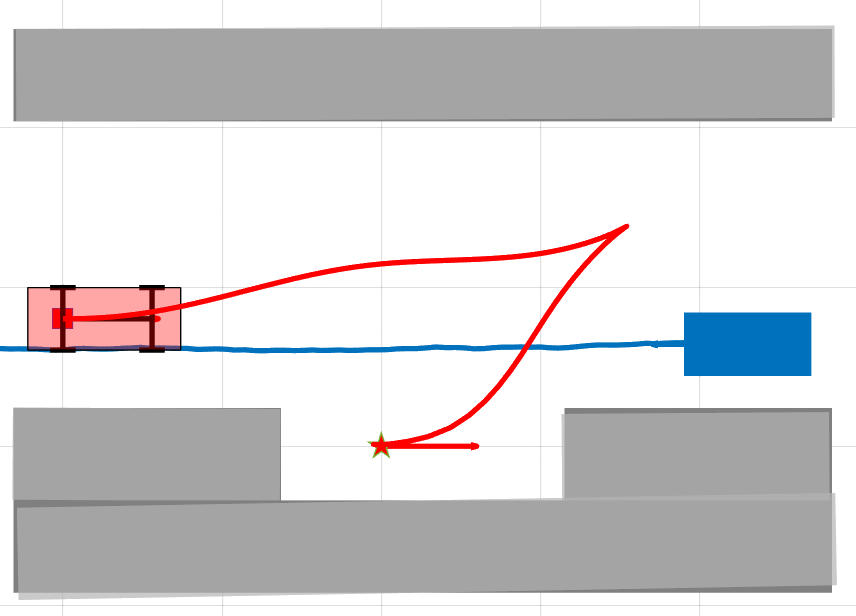}
\label{fig: Trajectory of U-OBCA (alpha=0.01).}}
\caption{The simulation results in the parallel parking scenario.}
\label{fig: The simulation results in the parallel parking scenario.}
\end{figure}

\begin{table}[!htbp]
\caption{Simulation results comparison of different methods in parking scenario}
\vspace{-10pt}
\label{table: Comparison of different methods in parking scenario}
\rowcolors{2}{white}{gray!30}
\begin{center}
\begin{tabular}{l|l|l|l}
\rowcolor{blue!10}
\textbf{Methods} & \textbf{Cost} & \textbf{Collision Num} &\textbf{Success Rate}\\
\hline
U-OBCA $(\alpha=0.01)$ & 37.18 & \textbf{32} & \textbf{97.1\%} \\
U-OBCA $(\alpha=0.1)$ & 10.54 & 178 & 87.0\% \\
OBCA \cite{OBCA} & 10.32 & 3210 & 1.4\% \\
\end{tabular}
\end{center}
\end{table}

\subsection{Local Planning: Navigation in a Narrow Corridor}\label{sec:sim_corridor_nav}

\begin{table*}[!htbp]
\caption{Simulation result comparison of the different methods in random corridor scenarios}
\vspace{-10pt}
\label{table: Comparison of different methods in random corridor scenarios}
\rowcolors{2}{white}{gray!30}
\begin{center}
\begin{tabular}{c|c|c|c|c|c}
\rowcolor{blue!10}
\textbf{Methods} & \textbf{Success Rate (\%)} & \textbf{Finishing Time (s)} & \textbf{Mean Cost Value} & \textbf{Min Distance (m)} & \textbf{Mean Computation Time (s)}\\
\hline
U-OBCA & \textbf{97} & 11.887$\pm$0.956 & 197.423$\pm$49.518 & 0.174$\pm$0.069 & 0.114$\pm$0.071 \\
OBCA \cite{OBCA} & 35 & 11.679$\pm$0.839 & 181.757$\pm$41.100 & 0.043$\pm$0.045 &  0.109$\pm$0.049 \\
RCA \cite{RCA} & 84 & 13.446$\pm$2.994 & 232.375$\pm$85.929 & 0.558$\pm$0.503 & \textbf{0.066$\pm$0.044} \\
LCC \cite{zhu2019chance} & 82 & 13.338$\pm$3.061 & 232.247$\pm$109.238 & 0.316$\pm$0.177 & 0.089$\pm$0.059 \\
ECC \cite{Castillo2020} & 82 & 13.616$\pm$3.212 & 246.505$\pm$110.087 & 0.561$\pm$0.251 & 0.085$\pm$0.063 \\
RC \cite{jasour2023convex} & 84 & 12.884$\pm$2.966 & 226.178$\pm$99.326 & 0.367$\pm$0.114 & 0.088$\pm$0.138 \\
DRCC \cite{ryu2024integrating} & 82 & 13.058$\pm$2.632 & 224.066$\pm$81.373 & 0.527$\pm$0.226 & 0.082$\pm$0.053
\end{tabular}
\end{center}
\end{table*}

\begin{figure*}[!htbp]
\centering
\subfigure[The trajectory of U-OBCA.]{\includegraphics[width=0.121\hsize]{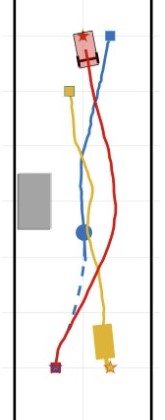}
\label{fig: Navigation trajectory of U-OBCA.}}
\subfigure[The trajectory of OBCA.]
{\includegraphics[width=0.12\hsize]{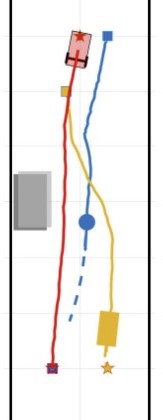}
\label{fig: Navigation trajectory of OBCA.}}
\subfigure[The trajectory of RCA.]{\includegraphics[width=0.12\hsize]{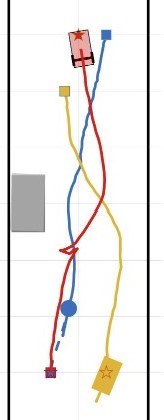}
\label{fig: Trajectory of RCA.}}
\subfigure[The trajectory of LCC.]{\includegraphics[width=0.12\hsize]{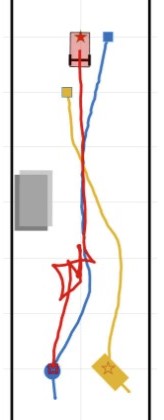}
\label{fig: Trajectory of LCC.}}
\subfigure[The trajectory of ECC.]{\includegraphics[width=0.12\hsize]{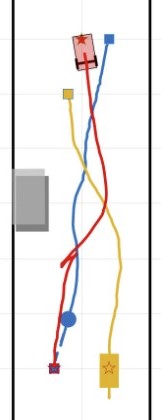}
\label{fig: Trajectory of ECC.}}
\subfigure[The trajectory of RC.]{\includegraphics[width=0.12\hsize]{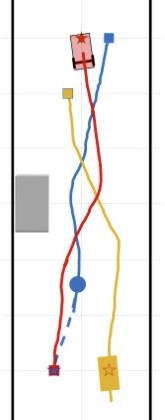}
\label{fig: Trajectory of RC.}}
\subfigure[The trajectory of DRCC.]
{\includegraphics[width=0.12\hsize]{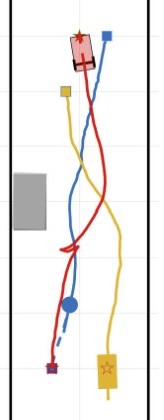}
\label{fig: Trajectory of DRCC.}}
\subfigure[The figure legend.]{\includegraphics[width=1\hsize]{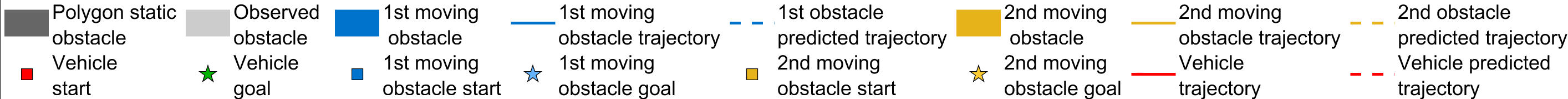}
\label{fig: Legend.}}
\caption{The simulation results in the narrow corridor scenario.}
\label{fig: The simulation results in the narrow corridor scenario.}
\end{figure*}

\subsubsection{The Simulation Settings}
As shown in Fig.~\ref{fig: The simulation results in the narrow corridor scenario.} , we further consider the navigation task for a $1.25\mathrm{m}$ $\times$ $0.7\mathrm{m}$ smart wheelchair in a $5 \mathrm{m}$-wide narrow corridor. In particular, the wheelchair needs to reach the goal while avoiding collision with static and moving obstacles. The uncertainties exist in the perception of the obstacles' and vehicle's pose, trajectory prediction of the moving obstacles and the tracking control errors caused by environmental disturbances. Moreover, we fix the sampling period $\Delta t = 0.25s$ and choose the number of the time steps as $N = 20$. We then plan a local trajectory in a receding horizon way. The risk level is set as $\alpha=0.01$, $\alpha_1 = \alpha_2 = 0.002$, $\alpha_3 = 0.006$ and the Wasserstein radius is set as $\theta_w = \theta_{w_1} = \theta_{w_2} = \theta_{w_3} = 0.001$. For the numerical solver, we choose the interior-point solver IPOPT \cite{IPOPT} and set the linear solver as ``mumps" \cite{mumps} in CasADi \cite{casadi}.

The state and control inputs of the  wheelchair are defined as $\bm{s}^v_k := [\bm{x}^v_k, \bm{y}^v_k, \bm{\theta}^v_k, v_k, \omega_k]^T$ and $u_k ;= [a^v_k, a^\omega_k]^T$, where $v_k$, $\omega_k$ are the velocity and angular velocity, $a^v_k, a^\omega_k$ are the acceleration and angular acceleration respectively.
Thus, the nominal vehicle kinematics take the form
\begin{equation}
\label{eq: nominal vehicle kinematics corridor scenario}
\begin{aligned}
    &\bar{x}^v_{k+1} = \bar{x}^v_k + v_k\cos\bar{\theta}^v_k \cdot \Delta t , \\
    &\bar{y}^v_{k+1} = \bar{y}^v_k + v_k\sin\bar{\theta}^v_k \cdot \Delta t , \\
    &\bar{\theta}^v_{k+1} = \bar{\theta}^v_k + w_k \cdot \Delta t,\\
    &v_{k+1} = v_k + a^v_k \cdot \Delta t,\\
    &w_{k+1} = w_k + a^\omega_k \cdot \Delta t,\; k = 0: N-1.
\end{aligned}
\end{equation}
Specifically, we set one static obstacle, one moving round-shaped obstacle and one moving polygon-shaped (rectangular) obstacle in the scenario.
The distributions of the random variables that corresponds to the pose perception noise of the smart wheelchair, the static and moving obstacles, as well as the environmental disturbances are identical to those in Sec.~\ref{sec:parallel_parking_sim}.
The nominal trajectories of the moving obstacle are generated by social force model \cite{helbing1995social}.

Moreover, we penalize the terminal error and control effort in the corridor navigation scenario. Namely, the cost function takes the form
\begin{equation*}
\label{eq: cost function parking lot}
\begin{aligned}
J(\bar{s}, u)=  \|\bar{s}^v_N-\bar{s}_g\|_{Q_N}^2 + \sum_{k=1}^{N-1}\|\bar{s}^v_k-\bar{s}_g\|_Q^2 + \sum_{k=0}^{N-1}\|u_k\|_R^2,
\end{aligned}
\end{equation*}
where $\bar{s}_g$ is the navigation goal state, $Q=\diag(0.1, 0.1, 1)$, $Q_N=100Q$ and $R=\diag(0.1, 0.1)$ are positive definite weighting matrices.

\subsubsection{The Simulation Results}
In this simulation, we compare U-OBCA with the nominal OBCA method \cite{OBCA} and other optimization-based uncertainty-aware methods to validate the effectiveness of the proposed method. In particular, we choose  RCA \cite{RCA}, LCC \cite{zhu2019chance}, ECC \cite{Castillo2020}, Risk Contours (RC) \cite{jasour2023convex}, and Distributionally Robust Chance Constraints (DRCC) \cite{ryu2024integrating} as baselines. 
These methods all approximate the shape of the vehicle and the obstacles with ellipses. To make the comparison fair, we test our method and the baselines under the same receding horizon setting, weighting matrices, risk level and noise distributions. 
The trajectories of a typical narrow corridor navigation scenario are shown in
Fig.~\ref{fig: The simulation results in the narrow corridor scenario.}.
As illustrated in Fig.~\ref{fig: Navigation trajectory of U-OBCA.} and \ref{fig: Navigation trajectory of OBCA.}, U-OBCA enables the smart wheelchair to safely navigate through the narrow corridor while avoiding the static and moving obstacles. However, OBCA chooses a more aggressive trajectory when interacting with the obstacles, however the trajectory is too close to the obstacles, which results in safety issues. Moreover, the trajectories in Fig.\ref{fig: Trajectory of LCC.} - \ref{fig: Trajectory of RC.} are more conservative due to the elliptical approximation. They stuck for a few seconds in front of the static obstacles to wait for the yellow moving obstacle passing by. Hence, they reach the navigation goal slower than U-OBCA.

Furthermore, to quantitatively compare the proposed U-OBCA with the baselines, we further generate 100 random corridor scenarios, where the number, starting and goal positions, size and shape of the obstacles are randomly generated. In these scenarios, we compare the success rate, mean finishing time, minimum distance to the obstacles as well as the mean computation time of these methods. The results are summarized in Table.\ref{table: Comparison of different methods in random corridor scenarios}. In particular, the finishing time and the mean cost value only take the successful navigation trials into account. 
From Table.\ref{table: Comparison of different methods in random corridor scenarios}, we conclude that U-OBCA has the highest navigation success rate since it considers the uncertainties in a more precise manner, while OBCA has the lowest success rate as it does not consider the uncertainties. Moreover, our method improves the navigation efficiency as it has shorter finishing time and lower cost value than other uncertainty-aware methods. The minimum distance to obstacles and the mean cost value of U-OBCA are also smaller than that of the other uncertainty-aware methods, which indicates that it avoids over-conservatism yet still attains a 97\% success rate. 
The trade-off of this method is that it needs more computation time compared to others. This is not surprising as we introduce the dual variables into the optimization and the expression of the collision avoidance constraints in U-OBCA is more complex than others.  Nevertheless, the average computation time of U-OBCA is about $0.11$s, which is promising to be implemented in a real-time re-planning framework.

\section{Experiments}

In this section, we present real-world experimental results in two representative scenarios—\emph{narrow corridor navigation} and \emph{parking}—to demonstrate the effectiveness of the proposed motion planning algorithm.

\subsection{Experimental Platform Setup}

All experiments are conducted on a smart wheelchair developed by our research group. The experimental platform is equipped with the following hardware components (see Fig.~\ref{fig:wheelchair}):

\begin{figure}
    \centering
    \includegraphics[width=1\linewidth]{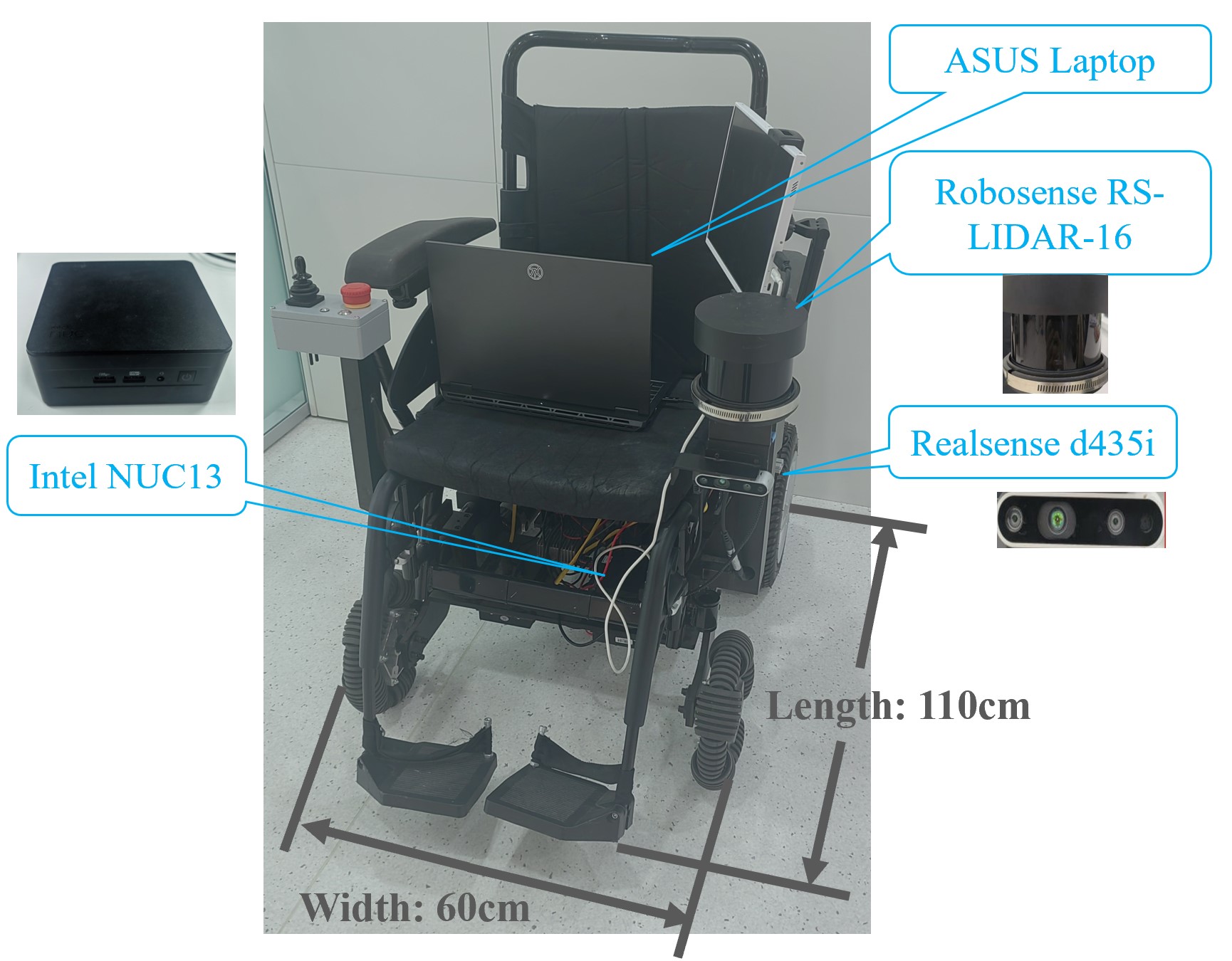}
    \caption{The smart wheelchair platform.}
    \label{fig:wheelchair}
\end{figure}

\begin{enumerate}
    \item \textbf{Main computation unit}: An Intel NUC~13 equipped with an i7-1370P CPU, 64~GB RAM, and a maximum clock frequency of 5.2~GHz.
    
    \item \textbf{Sensor data processing unit}:  
    Since the experiments involve dynamic obstacle avoidance (e.g., bicycles and pedestrians), an additional computing unit is deployed to execute the moving obstacle detection and trajectory prediction modules. More specifically, an ASUS TUF Gaming A16 laptop with an AMD Ryzen~7~H260 CPU, 16~GB RAM, a maximum clock frequency of 5.6~GHz, and an RTX~5060 GPU is used.  
    This unit is connected to the main computation unit via a Thunderbolt~4 interface.
    
    \item \textbf{RGB-D camera}: An Intel RealSense D435i with a resolution of $1920 \times 1080$, a frame rate of 30~FPS, and a Field of View (FoV) of $69.4^\circ \times 42.5^\circ$. The camera is also equipped with an onboard Bosch BMI055 IMU operating at 200~Hz.
    
    \item \textbf{LiDAR}: A Robosense RS-LiDAR-16 featuring a $360^\circ$ horizontal FoV, a $\pm 15^\circ$ vertical FoV, a frame rate of 20~Hz, and an angular resolution of $0.4^\circ \times 2.0^\circ$.
\end{enumerate}

The proposed algorithm is implemented in C++ using the Robot Operating System (ROS). The overall software architecture of the experimental platform is illustrated in Fig.~\ref{fig:software}.

\begin{figure}[!htpb]
    \centering
    \includegraphics[width=1\hsize]{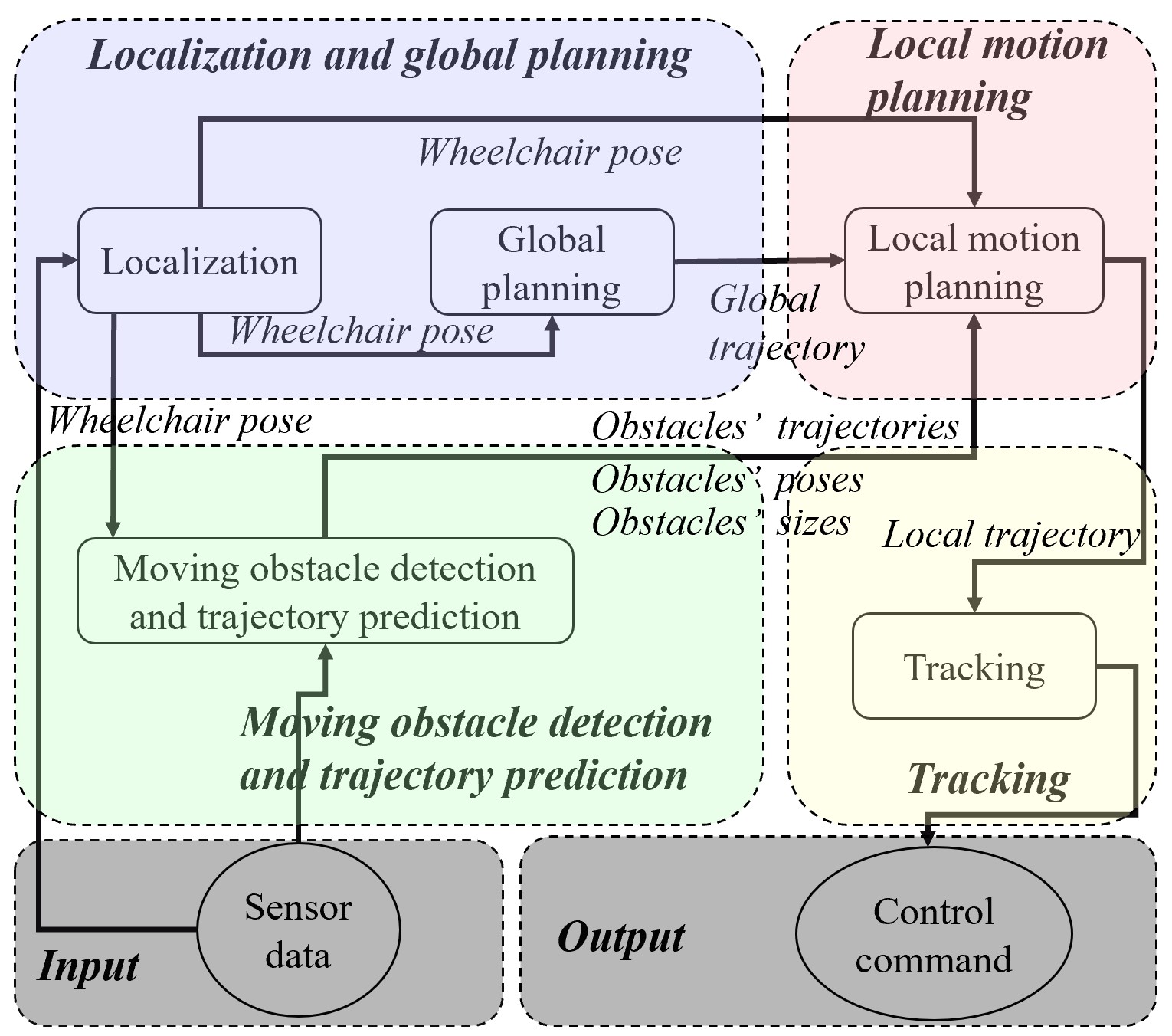}
    \caption{Software architecture of the experimental platform.}
    \label{fig:software}
\end{figure}

The system consists of the following software modules:

\begin{enumerate}
    \item \textbf{Localization and global planning module}:  
    This module runs on the main computation unit. It subscribes to sensor data from the LiDAR, RGB-D camera, IMU, and odometer to localize the wheelchair, and generates a global path using the A$^\star$ algorithm. The localization algorithm follows the method proposed in~\cite{localization}.  
    The estimated wheelchair's pose and the planned global path are published for downstream modules.

    \item \textbf{Moving obstacle detection and trajectory prediction module}:  
    This module subscribes to sensor data and detects dynamic obstacles (i.e., pedestrians and bicycles) using the algorithm proposed in~\cite{predicttrajectory}. Since this algorithm is GPU-intensive, it is deployed on the sensor data processing unit.  
    The module offers the obstacles' poses and predicts their trajectories in ego-vehicle frame over a horizon of 12~steps with a time interval of 0.2~s. Example detection and prediction results are shown in Fig.~\ref{fig:detection1.}. The module publishes obstacle information at a frequency of 7~Hz, including their sizes, poses, motion states, and predicted trajectories. Pedestrians are modeled as circular obstacles, while bicycles are represented as rectangular polygonal obstacles.

    \begin{figure}
        \centering
        \includegraphics[width=\hsize]{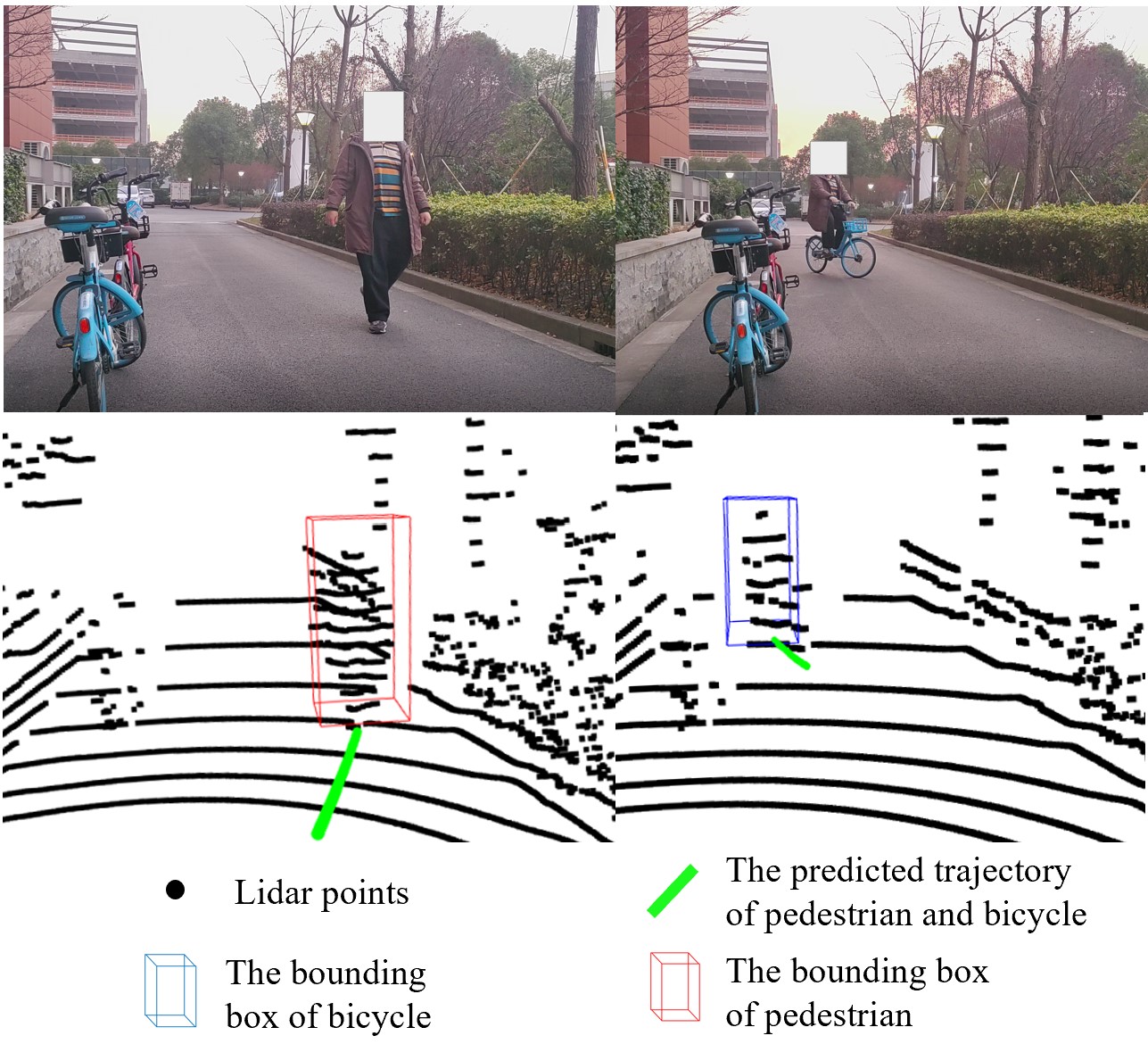}
        \caption{Detection and trajectory prediction results using the algorithm in~\cite{predicttrajectory}.}
        \label{fig:detection1.}
    \end{figure}

    \item \textbf{Local motion planning module}:  
    This module runs on the main computation unit and corresponds to the proposed U-OBCA planner. It subscribes to the global path, predicted trajectories of moving obstacles, and the current wheelchair pose, and generates a locally feasible trajectory in real time.

    \item \textbf{Tracking module}:  
    This module also runs on the main computation unit. It subscribes to the wheelchair pose and the planned local trajectory, and computes the corresponding linear and angular velocity commands.  
    The controller described in~\cite[Eq.~(3)]{gustavi2008stable} is employed to track the planned trajectory.
\end{enumerate}

\subsection{Estimating the Statistics of the Uncertainties}
\label{sec:uncertainty calibration}

The safety constraints derived in Theorem~\ref{theorem: round sufficient condition} and Theorem~\ref{theorem: polygon obstacle deterministic nonlinear constraints} rely on the statistical properties of the underlying random variables. In particular, the expectations and covariance matrices of the uncertainties are required to construct deterministic approximations of the chance constraints. 
Since the smart wheelchair operates on dry terrain at relatively low velocities, wheel slippage is negligible. Consequently, environmental disturbances are not explicitly considered in the uncertainty modeling. 
However, neither the moving obstacle detection and trajectory prediction module~\cite{predicttrajectory} nor the localization and global planning module~\cite{localization} provides explicit statistical information on these uncertainties. Therefore, we estimate the following quantities offline:
\begin{enumerate}
    \item the localization error of the smart wheelchair,
    \item the pose perception errors of obstacles,
    \item the predicted pose errors of moving obstacles.
\end{enumerate}

\subsubsection{Smart Wheelchair's Localization Error}
\label{sec:localization_calibration}

To estimate the expectation and variance of the smart wheelchair localization error, we temporarily equip the wheelchair with a high-precision RTK-GPS module to obtain ground-truth poses. The ground-truth and estimated poses are synchronized according to their timestamps, resulting in 200 localization error samples. The expectations and variances of the $x$-, $y$-, and $\theta$-components are then computed. The results are summarized in Table~\ref{table: Varience of wheelchair}.

From the table, the expectation of the localization error is observed to be close to zero. Therefore, consistent with Assumption~\ref{ass: zero_mean_and_known_covariance}, we set $\mathbb{E}([\bm{w}_{x,k}^v, \bm{w}_{y,k}^v, \bm{w}_{\theta,k}^v]^T)=\bm{0}_{3\times1}$. Since U-OBCA incorporates Wasserstein ambiguity sets, this zero-mean approximation does not compromise robustness.
Consequently, the covariance of the smart wheelchair localization error is set as $\cov([\bm{w}_{x,k}^v, \bm{w}_{y,k}^v, \bm{w}_{\theta,k}^v]^T)=\diag(7.28\times10^{-4}~\mathrm{m}^2,\,3.17\times10^{-4}~\mathrm{m}^2,\,5.89\times10^{-2}~\mathrm{deg}^2)$, $k\in\{1,\dots,N\}$.

\begin{table}[!htbp]
\caption{Estimated expectation and variance of the smart wheelchair's localization error}
\vspace{-10pt}
\label{table: Varience of wheelchair}
\rowcolors{2}{white}{gray!30}
\begin{center}
\begin{tabular}{c|c|c|c|c}
\rowcolor{blue!10}
& \textbf{Expectation} & \textbf{Variance} \\
\hline
$x$    & $-5.27\times 10^{-3}~\mathrm{m}$ & $7.28\times 10^{-4}~\mathrm{m}^2$\\
$y$    & $-3.22\times 10^{-3}~\mathrm{m}$ & $3.17\times 10^{-4}~\mathrm{m}^2$\\
$\theta$  & $3.70\times 10^{-2}~\mathrm{deg}$ & $5.89\times 10^{-2}~\mathrm{deg}^2$
\end{tabular}
\end{center}
\end{table}

\subsubsection{Pose Perception Error of Static Obstacles}
\label{sec:obstalce_calibration}

To identify the pose perception error of obstacles, we first fix the smart wheelchair at Point~A (illustrated in Fig.~\ref{fig:draw1.}), with its orientation aligned with the bicycle lane, corresponding to the experimental scenario shown in Fig.~\ref{fig:real1.}. The obstacles are then placed at eight predefined poses. At each pose, the obstacle poses are measured by the perception module, and the corresponding noise variances are estimated.
More specifically, the obstacles are placed in front of the wheelchair at distances of $0.5$~m, $1.0$~m, $1.5$~m, and $2.0$~m (Poses~1--4), with their orientations facing toward the wheelchair. Poses~5--8 share the same positions as Poses~1--4, respectively, but with the obstacle orientation rotated clockwise by $90^\circ$.

Recall that in Sec.~\ref{sec: Problem Statement}, obstacle poses are represented in the world frame. Therefore, the error statistics of both obstacle poses and their predicted trajectories must be estimated in the world frame. However, the perception module provides obstacle poses and predicted trajectories in the ego-vehicle frame. Consequently, localization errors of the smart wheelchair propagate into the obstacle pose uncertainties when transformed to the world frame.
To account for this effect, we approximate the distribution of the localization error obtained in Sec.~\ref{sec:localization_calibration} using histograms and randomly sample perturbation vectors from it. These perturbations are injected into the coordinate transformation from the ego frame to the world frame, generating world-frame obstacle pose samples. The expectations and variances of the resulting estimation errors are then computed from these samples.
On the other hand, since ground-truth obstacle poses are difficult to obtain, we assume that the obstacle detection and trajectory prediction module~\cite{predicttrajectory} provides unbiased pose estimates. Under this assumption, the expectations of both the static obstacle pose perception errors and the predicted pose errors of moving obstacles are set to zero, and the estimation procedure focuses solely on the covariance matrices.
Moreover, as stated in Assumption~\ref{ass: zero_mean_and_known_covariance}, the components are assumed to be mutually independent. Therefore, it suffices to estimate the variances of each component independently. For each pose, 100 samples of obstacle poses are collected, and the resulting variances are summarized in Table~\ref{table: Varience of obstacles}.

\begin{table*}[!htbp]
\caption{Estimated variances of static obstacle pose perception error}
\vspace{-10pt}
\label{table: Varience of obstacles}
\rowcolors{2}{white}{gray!30}
\begin{center}
\begin{tabular}{c|c|c|c|c|c|c|c|c}
\rowcolor{blue!10}
\textbf{Variance} & \textbf{Pose 1} & \textbf{Pose 2} & \textbf{Pose 3} & \textbf{Pose 4} & \textbf{Pose 5} & \textbf{Pose 6} & \textbf{Pose 7} & \textbf{Pose 8} \\
\hline
Round obstacle $x$ ($\times 10^{-4}~\mathrm{m}^2$)   & 10.37 & 8.75 & 9.38 & \textbf{11.33} & 11.05 & 9.29 & 8.93 & 10.02 \\
Round obstacle $y$ ($\times 10^{-4}~\mathrm{m}^2$)   & \textbf{5.22} & 4.70 & 4.23 & 4.78 & 4.95 & 4.44 & 4.02 & 4.68 \\
Polygon obstacle $x$ ($\times 10^{-4}~\mathrm{m}^2$) & 15.85 & 13.21 & 11.03 & 13.68 & 14.45 & 13.89 & 14.58 & \textbf{16.67}\\
Polygon obstacle $y$ ($\times 10^{-4}~\mathrm{m}^2$) & 4.07 & 4.19 & 4.22 & 4.72 & 5.11 & 4.80 & 5.02 & \textbf{5.78} \\
Polygon obstacle $\theta$ ($\times 10^{-2}~\mathrm{deg}^2$) & 10.22 & 8.31 & 11.67 & 10.38 & \textbf{15.92} & 11.28 & 9.73 & 10.69
\end{tabular}
\end{center}
\end{table*}
To ensure safety during navigation, the covariance matrices are constructed by selecting, for each state component, the largest variance observed among the eight poses. Specifically, for round-shaped obstacles, we set $\cov([\bm{w}^{o_j}_{x}, \bm{w}^{o_j}_{y}]^T)=\diag(11.33\times10^{-4}~\mathrm{m}^2,\,5.22\times10^{-4}~\mathrm{m}^2)$. For polygon-shaped obstacles, we set $\cov([\bm{w}^{o_j}_{x}, \bm{w}^{o_j}_{y}, \bm{w}^{o_j}_{\theta}]^T)=\diag(16.67\times10^{-4}~\mathrm{m}^2,\,5.78\times10^{-4}~\mathrm{m}^2,\,15.92\times10^{-2}~\mathrm{deg}^2)$.

\subsubsection{Predicted Pose Error of Moving Obstacles}

The smart wheelchair is fixed as in Sec.~\ref{sec:obstalce_calibration}, while the moving obstacles (pedestrians and bicycles) repeatedly follow a predefined trajectory passing consecutively from Pose~4 to Pose~1. This process is repeated 20 times.
Each time an obstacle reaches a milestone pose, its predicted future poses are recorded. For each prediction step, the variance of the predicted pose errors in the world frame is computed using the same perturbation-based procedure described in Sec.~\ref{sec:obstalce_calibration}. The results are illustrated in Fig.~\ref{fig:rand_walk.}.

In particular, the variances of the $x$- and $y$-components exhibit approximately linear growth with respect to the prediction step. Therefore, the slope of the fitted line is adopted as the prediction error growth rate.
In contrast, as shown in Fig.~\ref{fig:rand_walk.}, the $\theta$-variance of the polygon-shaped obstacle does not exhibit significant growth as the prediction horizon increases. To ensure safety, the largest observed value is therefore adopted as the variance of the $\theta$-component.
\begin{figure}[!htpb]
    \centering
    \includegraphics[width=\hsize]{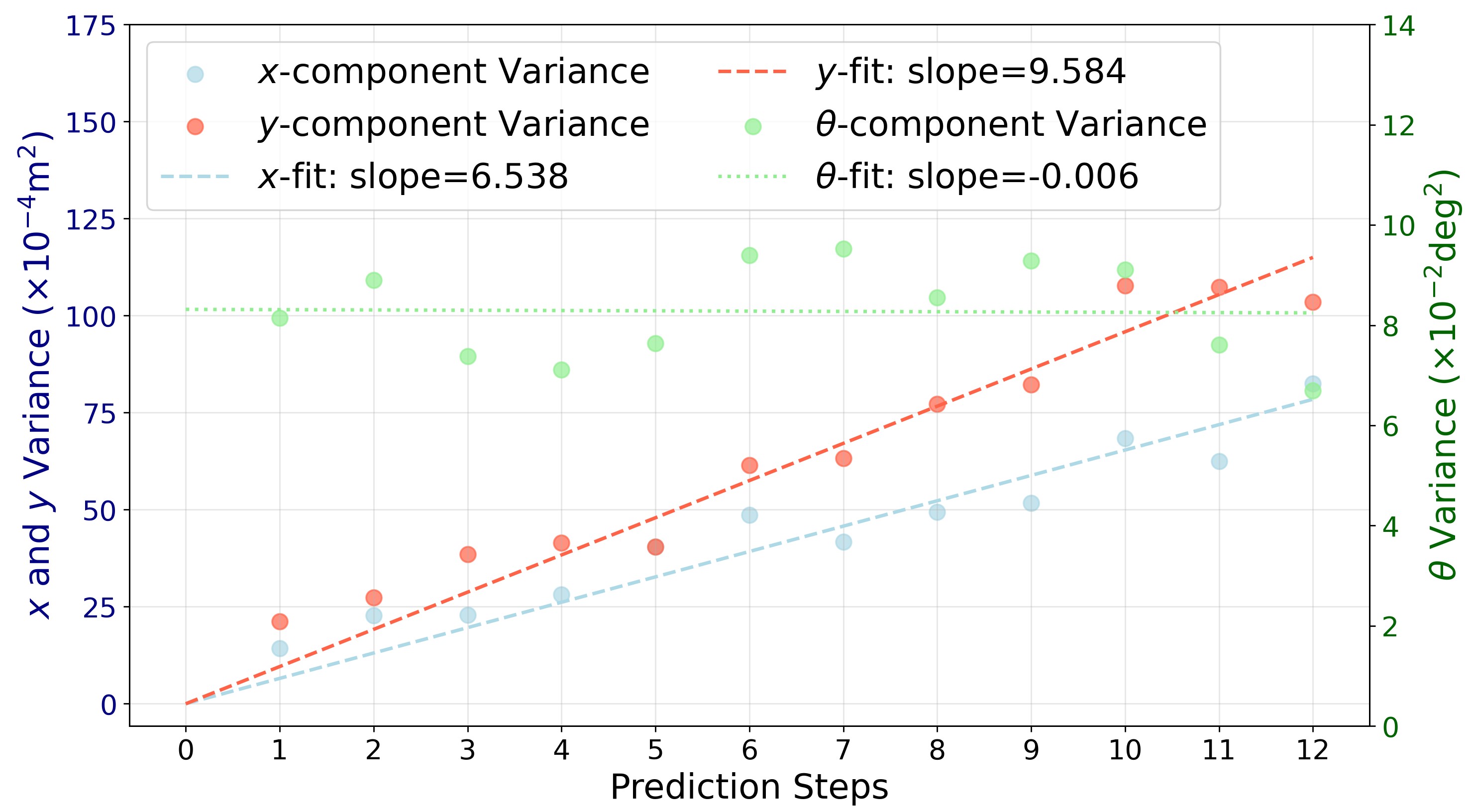}
    \caption{An example of predicted pose error for a polygon-shaped obstacle at pose~3.}
    \label{fig:rand_walk.}
\end{figure}
Consequently, the estimated variances at different poses are summarized in Table~\ref{table: Varience of prediction}.
\begin{table}[!htbp]
\caption{Estimated variances of moving obstacle predicted pose error}
\vspace{-10pt}
\label{table: Varience of prediction}
\rowcolors{2}{white}{gray!30}
\begin{center}
\begin{tabular}{c|c|c|c|c}
\rowcolor{blue!10}
\textbf{Variance} & \textbf{Pose 1} & \textbf{Pose 2} & \textbf{Pose 3} & \textbf{Pose 4} \\
\hline
$x$ (Round) ($\times k \times 10^{-4}~\mathrm{m}^2$) & \textbf{6.72} & 6.37 & 5.63 & 5.45 \\
$y$ (Round) ($\times k \times 10^{-4}~\mathrm{m}^2$) & 8.28 & \textbf{9.17} & 8.35 & 8.97 \\
$x$ (Polygon) ($\times k \times 10^{-4}~\mathrm{m}^2$) & 6.92 & 6.60 & 6.54 & \textbf{7.20} \\
$y$ (Polygon) ($\times k \times 10^{-4}~\mathrm{m}^2$) & \textbf{12.32} & 11.64 & 9.58 & 10.62 \\
$\theta$ (Polygon) ($\times 10^{-2}~\mathrm{deg}^2$) & \textbf{10.17} & 8.37 & 9.52 & 9.21
\end{tabular}
\end{center}
\end{table}

When constructing the covariance matrices of the predicted pose errors, the largest value across different poses is selected for each component. Specifically, for moving round obstacles, we set $\cov([\bm{w}^{o_j}_{x,k}, \bm{w}^{o_j}_{y,k}]^T)=\diag(6.72\times k\times10^{-4}~\mathrm{m}^2,\,9.17\times k\times10^{-4}~\mathrm{m}^2)$, $k\in\{1,\dots,N\}$. For moving polygon-shaped obstacles, we set $\cov([\bm{w}^{o_j}_{x,k}, \bm{w}^{o_j}_{y,k}, \bm{w}^{o_j}_{\theta,k}]^T)=\diag(7.20\times k\times10^{-4}~\mathrm{m}^2,\,12.32\times k\times10^{-4}~\mathrm{m}^2,\,10.17\times10^{-2}~\mathrm{deg}^2)$, $k\in\{1,\dots,N\}$, where $k$ denotes the prediction step.

\begin{figure}[!htbp]
\centering
\subfigure[Real world scenario of narrow corridor.]{\includegraphics[width=1\hsize]{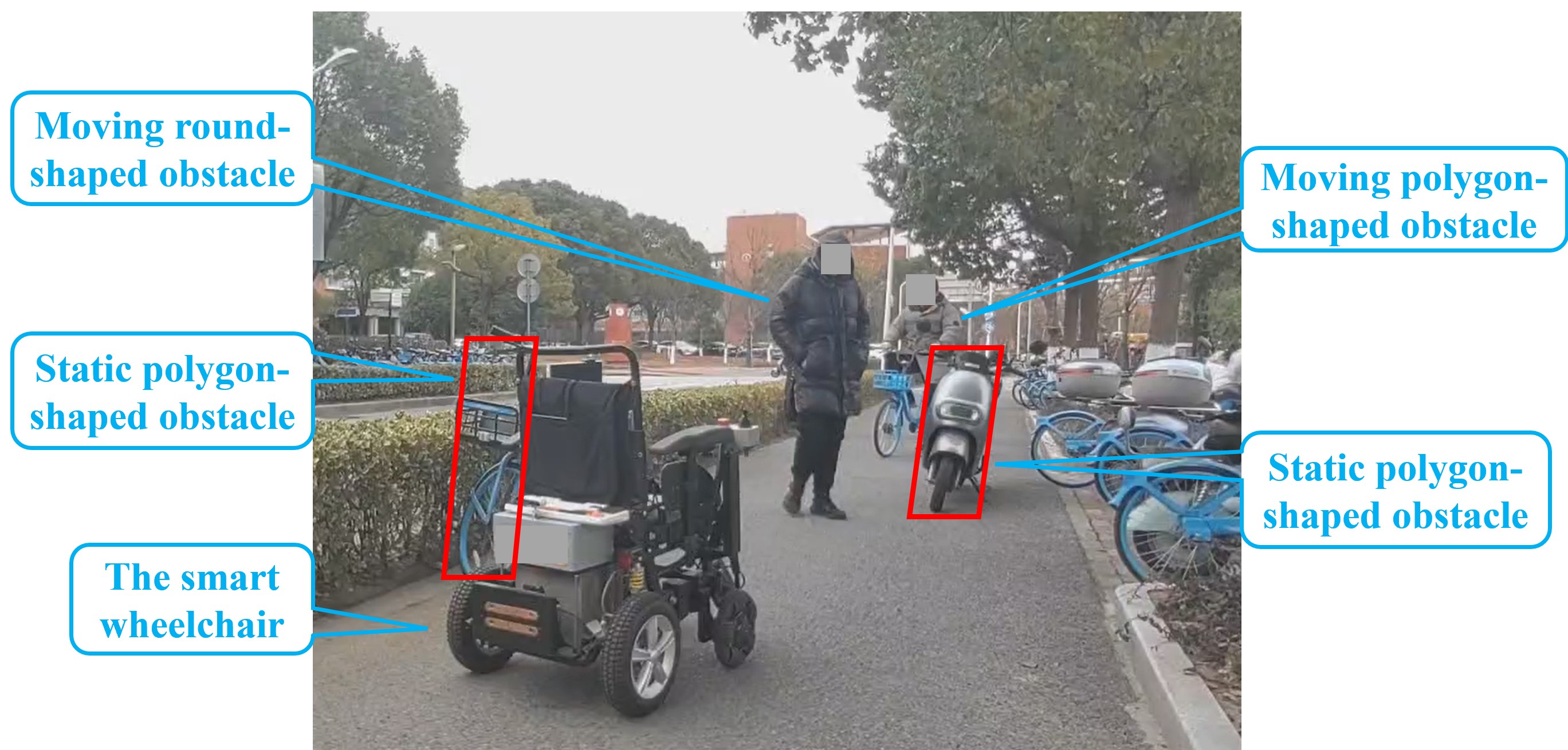}
\label{fig:real1.}}
\subfigure[The approximate trajectories and target points of bicycle, pedestrian and wheelchair.]
{\includegraphics[width=1\hsize]{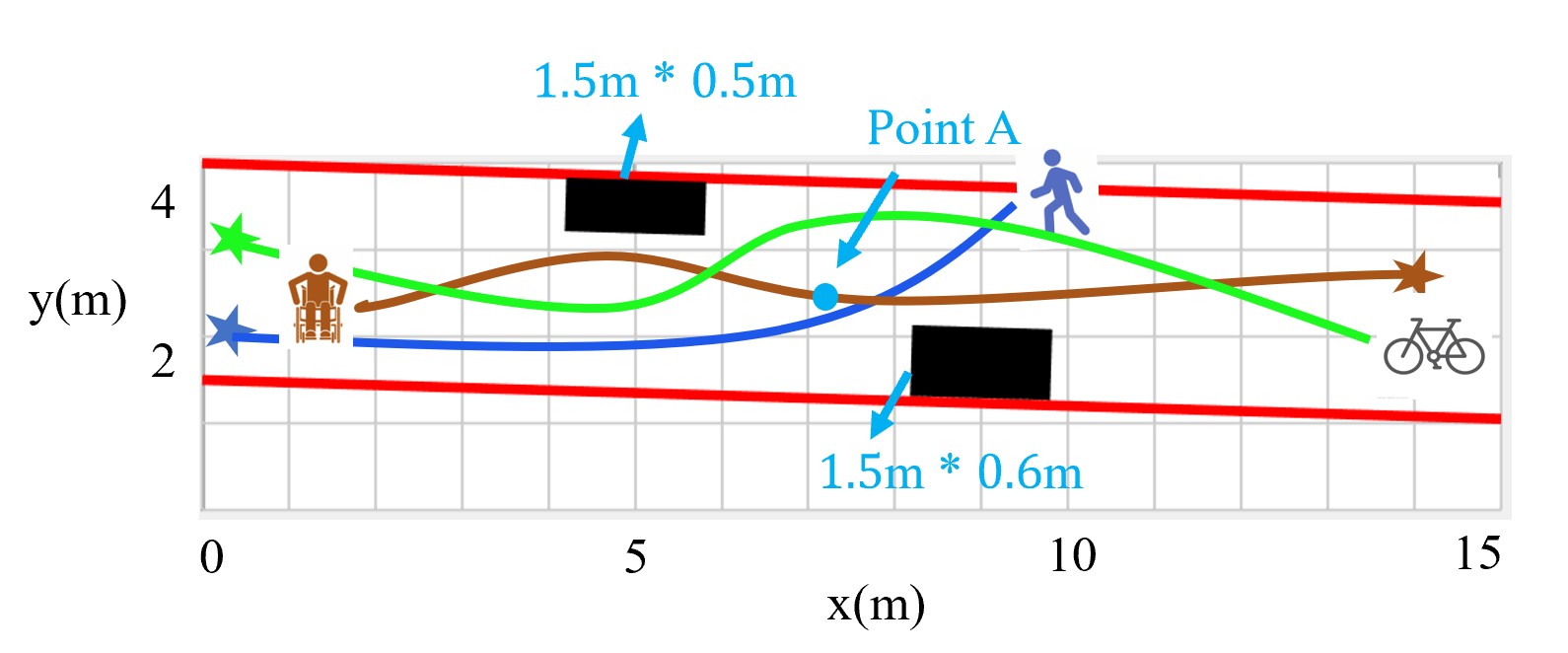}
\label{fig:draw1.}}
\caption{The experiment scenario of narrow corridor.}
\label{fig:scenario1.}
\end{figure}

\begin{figure}[!htbp]
\centering
\subfigure[The finishing time of different methods.]{\includegraphics[width=1\hsize]{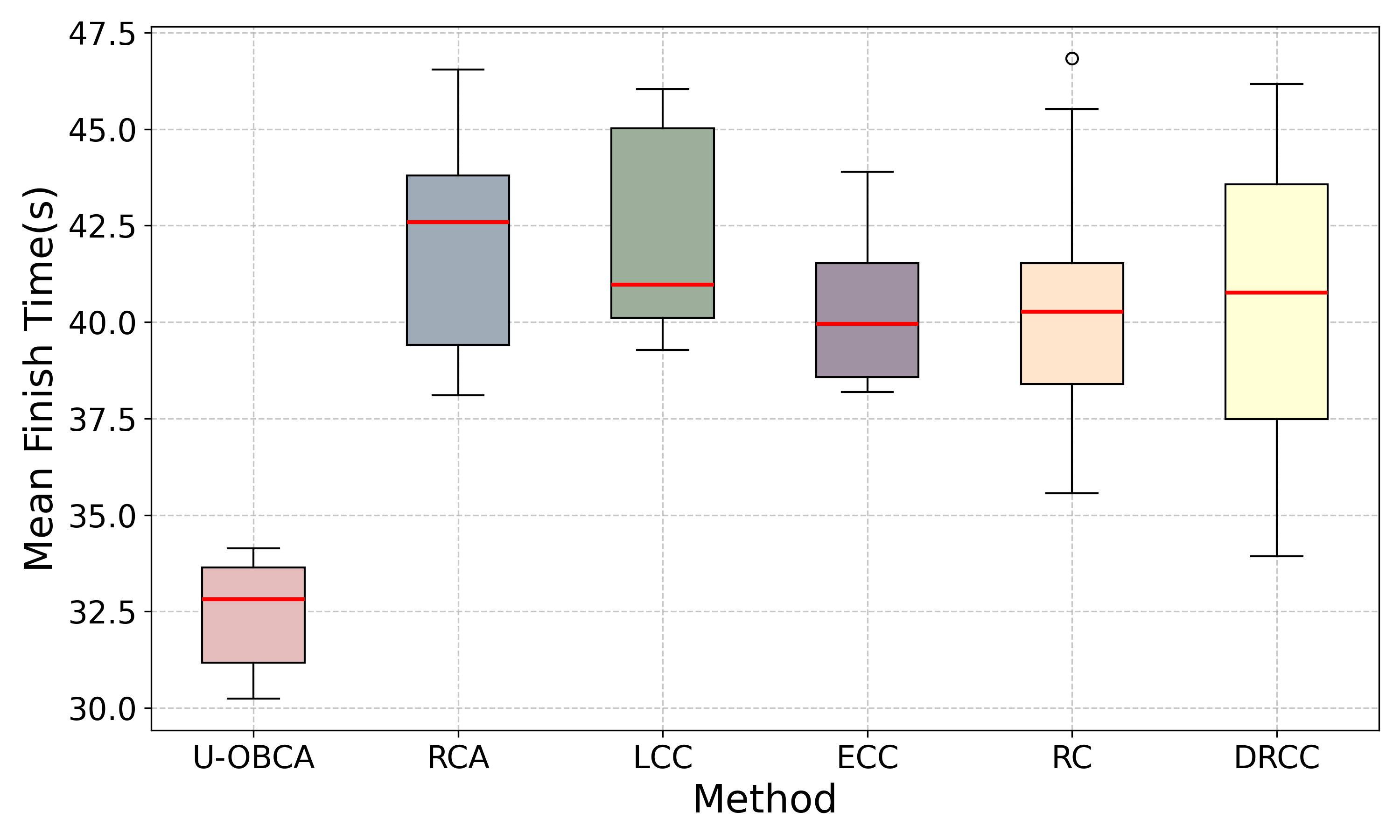}
\label{fig:compare1_FT.}}
\subfigure[The minium distance to obstacles of different methods.]
{\includegraphics[width=1\hsize]{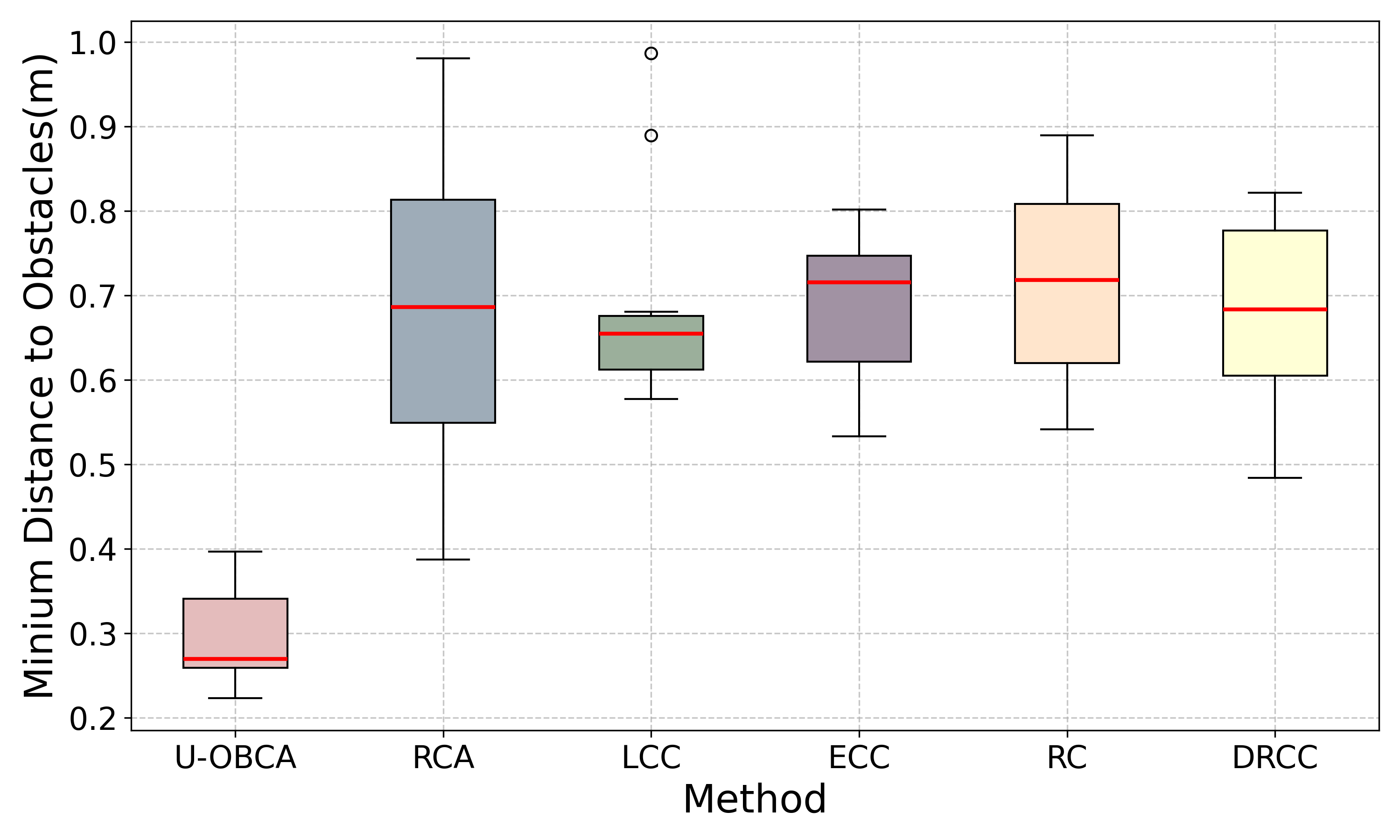}
\label{fig:compare1_MD.}}
\caption{Comparison of the navigation finishing time and the minium distance to obstacles between methods.}
\label{fig:comparison}
\end{figure}

\begin{figure}[!htbp]
\centering
\subfigure[Real-world scenario of the parking experiment.]{\includegraphics[width=1\hsize]{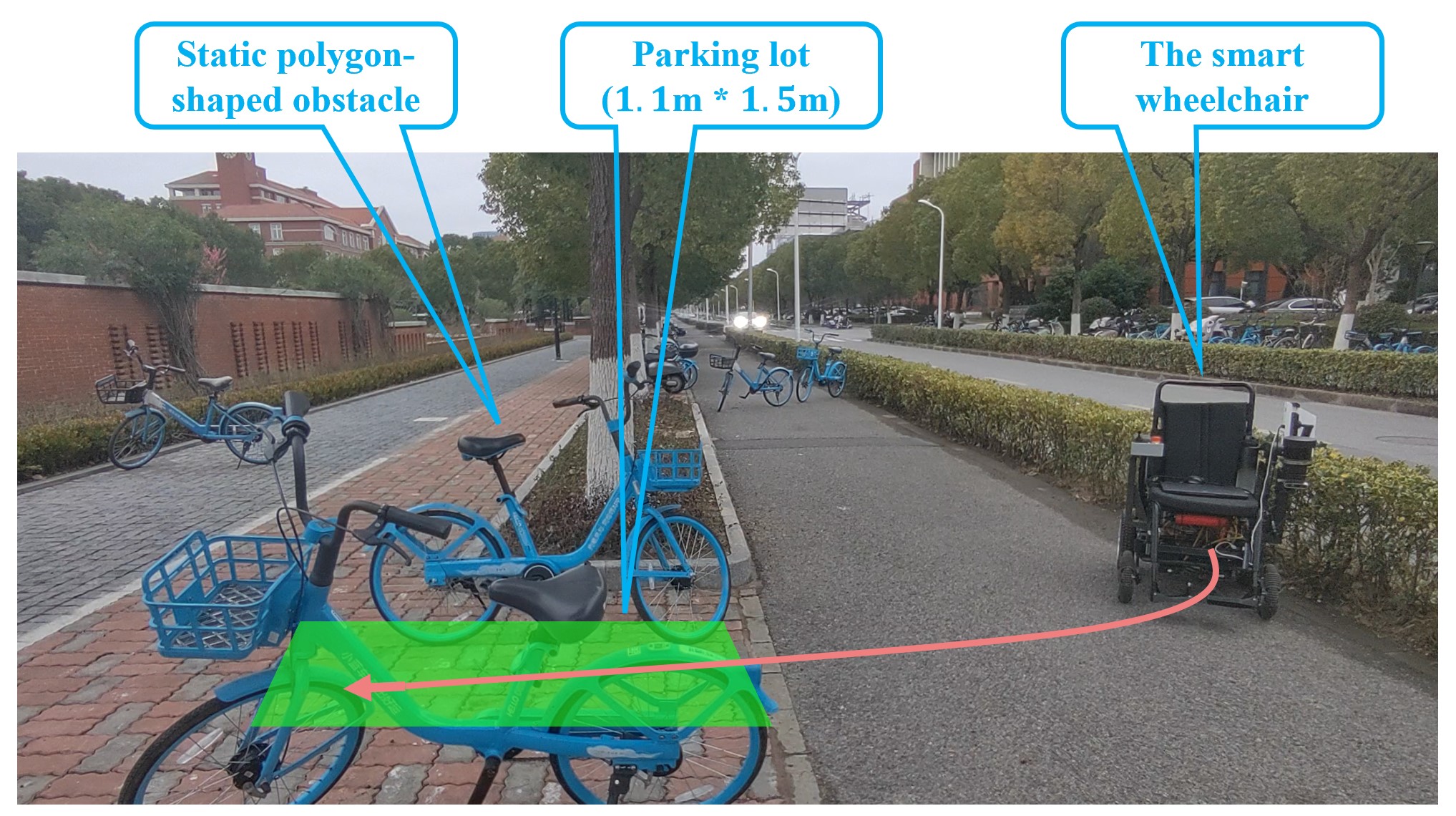}
\label{fig:real2.}}
\subfigure[The approximate trajectory and target point of wheelchair.]
{\includegraphics[width=1\hsize]{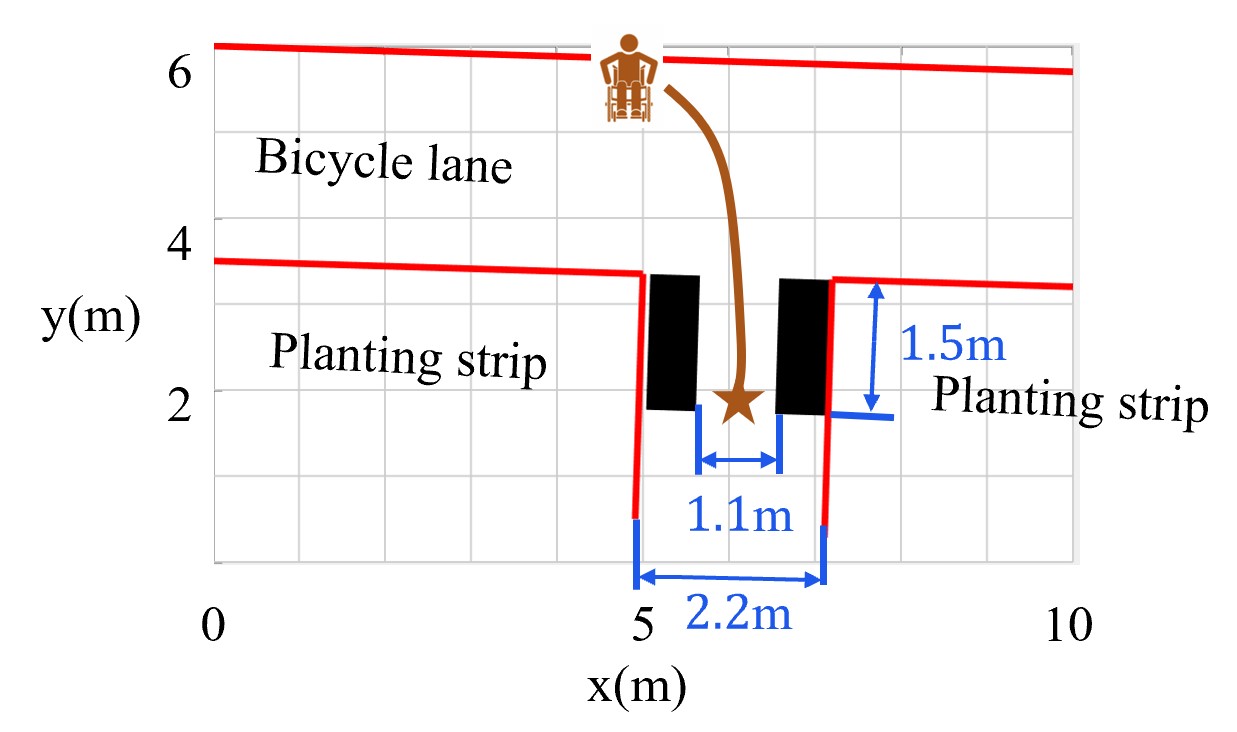}
\label{fig:draw2.}}
\subfigure[The front view of the parking lot.]{\includegraphics[width=0.8\hsize]{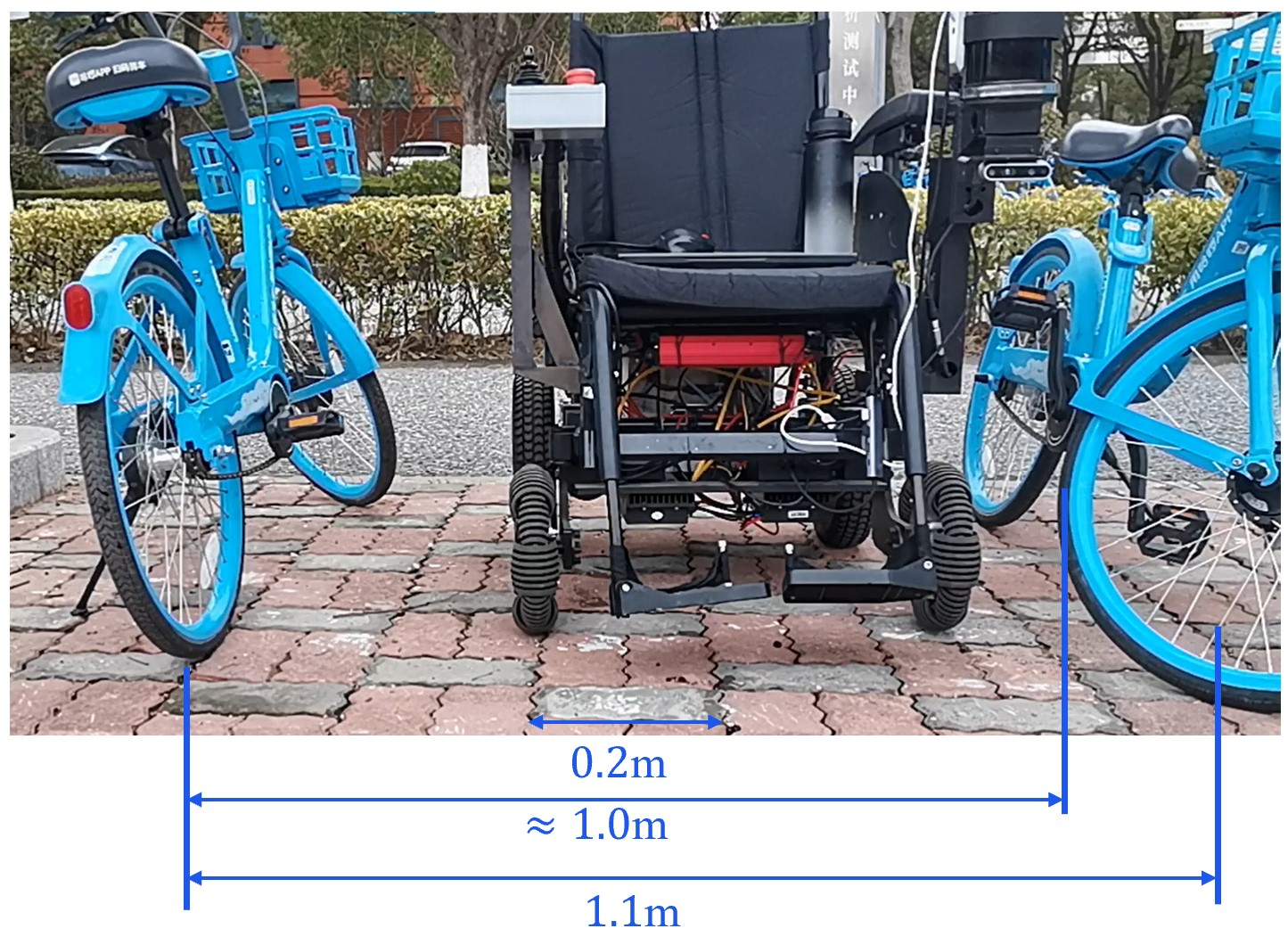}
\label{fig:real3.}}
\caption{The experiment scenario of the parking experiment.}
\label{fig:scenario2.}
\end{figure}

\begin{table*}[!htbp]
\caption{Comparison of different methods in real world narrow corridor navigation scenarios}
\vspace{-10pt}
\label{table: Comparison of different methods in real world narrow corridor scenarios}
\rowcolors{2}{white}{gray!30}
\begin{center}
\begin{tabular}{c|c|c|c|c|c}
\rowcolor{blue!10}
\textbf{Methods}  & \textbf{Finishing Time (s)} & \textbf{Mean Cost} & \textbf{Mean Computation Time (ms)} & \textbf{Mean Distance (m)} & \textbf{Min Distance (m)}   \\
\hline
U-OBCA                          & \textbf{32.44} & \textbf{267.08} & 557.98          & 1.37         & 0.30 \\
RCA \cite{RCA}                  & 42.23          & 324.32          & \textbf{427.75} & 1.73          & 0.69   \\
LCC \cite{zhu2019chance}        & 42.16          & 337.09          & 435.62          & 1.70          & 0.69  \\
ECC \cite{Castillo2020}         & 40.33          & 302.67          & 467.88          & 1.72          & 0.68  \\
RC \cite{jasour2023convex}      & 40.43          & 311.16          & 493.28          & 1.95 & 0.72 \\
DRCC \cite{ryu2024integrating}  & 40.35          & 296.08          & 478.02          & 1.83          & 0.67
\end{tabular}
\end{center}
\end{table*}

\subsection{Navigation in a Narrow Corridor}\label{sec:nav_corridor_exp}
\subsubsection{The Experiment Scenario}
The experimental field is a bicycle lane with a width of $2.5$ m, which serves as a narrow corridor. An electric scooter is parked on the road and is viewed as a static obstacle with a nominal size of $0.6$ m$\times 1.5$ m, and a bicycle is parked and viewed as another static obstacle with a nominal size of $0.5$ m $\times$ $1.5$ m; neither of these static obstacles is pre-built into the navigation map. In the experiment, the smart wheelchair is required to navigate toward the goal while avoiding collisions with two static obstacles, a pedestrian, and a moving bicycle. The experimental scenario is illustrated in Fig.~\ref{fig:scenario1.}.

\subsubsection{The Experiment Settings}
We set the sampling period $\Delta t = 0.2$s and choose the number of prediction steps as $N = 12$. Moreover, the risk levels $\alpha_1,\alpha_2,\alpha_3$ are set to $0.002, 0.002$ and $0.006$, respectively. We further set the Wasserstein radius $\theta_w = \theta_{w_1} = \theta_{w_2} = \theta_{w_3} = 0.001$ and adopt the estimated uncertainty statistics in Sec.\ref{sec:uncertainty calibration} in the experiments.
We also use IPOPT \cite{IPOPT} for the numerical solver, and we set the solver as ``mumps" \cite{mumps} in CasADi \cite{casadi}.
The state and control inputs of wheelchair are kept the same as those in Sec.~\ref{sec:sim_corridor_nav} (see \eqref{eq: nominal vehicle kinematics corridor scenario}).
Moreover, to achieve a better navigation performance, we introduce a global path planned by A$^\star$ into the cost function $J$ as a reference trajectory.
In particular, the reference trajectory shall have the same number of steps and the time interval as the predicted local trajectory. And consequently, the modified $J$ take the form 
\begin{equation*}
\label{eq: cost function real world}
\begin{aligned}
J(\bar{s}, u)=  \sum_{k=1}^{N}\|\bar{s}^v_k-\bar{s}^r_k\|_Q^2 + \sum_{k=0}^{N-1}\|u_k\|_R^2,
\end{aligned}
\end{equation*}
where $\bar{s}^r_k$ is the state of the $k$-th point in reference trajectory, $Q$ and $R$ are positive definite weighting matrices. Here $Q=\diag(0.5, 0.5, 0.7)$, and $R=\diag(0.1, 0.1)$.
To this end, the following steps are used to turn the global path planned by A$^\star$ algorithm into a reference trajectory for navigation:
\begin{enumerate}
    \item Orientation setting: For each path point, the orientation is set to point toward the position of the subsequent path point.
    \item Path length computing: For each path point, we compute its Euclidean distance to the next point, then we accumulate the distance to obtain the path length from the start point to every path point. 
    \item Path resampling: We then resample the path with a fixed interval distance of $0.1$m, and timestamp the first $N=12$ resampled points with $\Delta t = 0.2$s time interval. These points are then used as the reference trajectory $\bar{s}^r_k$ in $J$.
    \item Pose interpolation: Since some of the resampled trajectory points do not necessarily coincide with the global path points, such pose interpolation is still needed. In particular, for the trajectory points that locates between two global path points, we perform linear interpolation to obtain its position and Spherical Linear intERPolation (SLERP) \cite{SLERP} to obtain its orientation.
\end{enumerate}

\subsubsection{The Experiment Results}\label{sec:nav_corr_result}
We choose RCA \cite{RCA}, LCC \cite{zhu2019chance}, ECC \cite{Castillo2020}, RC \cite{jasour2023convex}, and DRCC \cite{ryu2024integrating} as the comparison baselines, just as it is in Sec.~\ref{sec:sim_corridor_nav}.
To guarantee a fair comparison, all of the above methods run under the same time-horizon length, weighting matrices, risk level and noise distributions. 
Moreover, we repeat the same experiment $10$ times for each method, and compare the Success Rate, Mean Finishing Time, Mean Cost Value, Minium and Mean Distance to Obstacles and Mean Computation Time.
All methods successfully accomplish the navigation task each time and the results are illustrated in Table.~\ref{table: Comparison of different methods in real world narrow corridor scenarios}.

From Table.~\ref{table: Comparison of different methods in real world narrow corridor scenarios}, we conclude that our proposed U-OBCA has the lowest minimum distance to obstacles yet still guarantees a full success rate. This indicates that it has the lowest safety over-conservatism. Therefore, it is not surprising that U-OBCA has the lowest finishing time and the lowest mean cost value. Nevertheless, U-OBCA has the highest computation time, which is the price to pay since we introduce dual variables into the problem. 
In addition, we notice that the computation time of real-world experiment is much higher than those in the simulations. This is because the localization and tracking module is necessary for real-world experiments and they both run on main computation unit, which cause extra burden (especially the localization module) and slow the planning speed.
Moreover, to further illustrate the performance advantage of our method, the box plots of the navigations' finishing time as well as the minimum distance to the obstacles across the methods are depicted in Fig.~\ref{fig:comparison}.

\subsection{Parking Experiment}
\subsubsection{The Experiment Scenario}
The experimental field is a $2.2$m-wide bicycle parking space between the planting strips, where two bicycles are already parked, leaving an $1.1$m-wide free space between them. 
In view of the bicycles' front wheel (as shown in Fig.~\ref{fig:real3.}), the minium width of the free space is about $1.0$m. In the experiment, two static rectangular obstacles are introduced to represent the parked bicycles, forming the boundaries of a $1.1$ m $\times$ $1.5$ m parking space. Each static rectangular obstacle has the size of $0.5$ m $\times$ $1.5$ m. 
The objective is to park our $0.6$ m $\times$ $1.1$ m smart wheelchair into the parking space while avoiding collisions with the static obstacles. 
The experimental scenario is shown in Fig.~\ref{fig:scenario2.}.

\subsubsection{The Experiment Settings}
All the experiment settings are the same as those in Sec.\ref{sec:nav_corridor_exp}.

\subsubsection{The Experiment Results}
We choose the same comparison baselines as those in Sec.~\ref{sec:nav_corridor_exp}. All the methods also run under the same receding horizon setting, weighting matrices, risk level and noise distributions so as to have a fair comparison.
Similar to Sec.~\ref{sec:nav_corridor_exp}, we repeat the experiment $10$ times for each methods. U-OBCA exhibits full success rate in the parking task, while none of the baselines successfully complete the task. 
This is reasonable because all other methods take elliptical approximation for the polygon-shaped static obstacles, which make the tiny parking space unreachable for the wheelchair. 
We hence only list the mean finishing time, minimum distance to obstacles and mean computation time of U-OBCA in Table.~\ref{table: Comparison of different methods in real world reverse parking scenarios} and do not depict the box plots of the finishing time and the minimum distance to obstacles as we do in Sec.\ref{sec:nav_corridor_exp}. 

\begin{table}[!htbp]
\caption{U-OBCA experiment results in real world parking scenarios}
\vspace{-10pt}
\label{table: Comparison of different methods in real world reverse parking scenarios}
\begin{center}
\begin{tabular}{c|c|c}
\rowcolor{blue!10}
 \textbf{Finishing Time (s)} & \textbf{Mean Computation Time (ms)} & \textbf{Min distance (m)}   \\
\hline
 7.96& 400.13& 0.17
\end{tabular}
\end{center}
\end{table}

\section{Conclusion}
In this paper, we extend the Optimization-Based Collision Avoidance (OBCA) method to Uncertainty-aware OBCA (U-OBCA) for vehicle trajectory planning in
uncertain narrow environments. More precisely, we first use the chance constraints to tightly bound the collision risks between the polygons. Moreover, we prove that the chance constraints can be further relaxed into deterministic nonlinear constraints, which can be efficiently solved. The numerical simulation and the real-world experiment results show the proposed method avoids the over-conservatism in trajectory planning and improves the navigation efficiency. 

\appendix

\subsection{Computation of the expectation and the covariance matrix in Theorem~\ref{theorem: round sufficient condition}}
\label{app: expectation and covariance in round condition}

To compute the expectation and covariance of $\bm{p}^j_k=[\delta \bm x^j_k,\delta \bm y^j_k]^T$, recall that $\bm{p}^j_k=\bar{A}^v R^{-1}(\bm\theta^v_k)\delta\bm t^j_k+\bar{b}^v$, we obtain
\begin{equation*}
\label{eq: detailed p^j_k}
\begin{aligned}
R^{-1}(\bm\theta^v_k)\delta\bm t^j_k
\!=\!
\begin{bmatrix}
\;\;\delta\bm x^j_k\cos\bm\theta^v_k\!+\!\delta\bm y^j_k\sin\bm\theta^v_k\\
-\delta\bm x^j_k\sin\bm\theta^v_k\!+\!\delta\bm y^j_k\cos\bm\theta^v_k
\end{bmatrix}
\!=:\! \begin{bmatrix}\bm{\eta}_{j,k,1}\\ \bm{\eta}_{j,k,2}\end{bmatrix}.
\end{aligned}
\end{equation*}
Hence,
\begin{equation}
\label{eq: E, cov p^j_k}
\begin{aligned}
\mE[\bm{p}^j_k]
&= \bar{A}^v\mE\Big[\begin{bmatrix}\bm{\eta}_{j,k,1}& \bm{\eta}_{j,k,2}\end{bmatrix}^T\Big]+\bar{b}^v,\\
\Sigma_{\bm{p}^j_k}
&= \bar{A}^v\ \Sigma\!\left(\begin{bmatrix}\bm{\eta}_{j,k,1}& \bm{\eta}_{j,k,2}\end{bmatrix}^T\right)\ \bar{A}^{v,T}.
\end{aligned}
\end{equation}
Note $\delta\bm x^j_k=\bm x_k^v-\bm x_k^{o_j}=\bar{x}_k^v+\bm w_k^v-\bar{x}_k^{o_j}-\bm w_k^{o_j}=\delta\bar{x}_k^j+\delta\bm w_{x,k}^j$. Similarly, 
$\delta\bm y^j_k=\delta\bar{y}^j_k+\delta\bm{w}_{y,k}^j$, and recall that
$\bm\theta^v_k=\bar{\theta}^v_k+\bm{w}_{\theta,k}^v$.
By Assumption \ref{ass: known_sin_cos}, it holds that $\mE[\sin(\bm w_{\theta,k}^v)]=0$, $\mE[\delta\bm w_{x,k}^j]=\mE[\bm w_{x,k}^v]-\mE[\bm w_{x,k}^{o_j}]=0$ and $\mE[\delta\bm w_{y,k}^j]=\mE[\bm w_{y,k}^v]-\mE[\bm w_{y,k}^{o_j}]=0$. 
Moreover, by Assumption \ref{ass: zero_mean_and_known_covariance}, the noises in positions and orientations are independent, it follows that
\begin{equation}
    \label{eq: E eta_j,k,1}
    \begin{aligned}
    &\mE[\bm{\eta}_{j,k,1}]= \mE[\delta\bm x^j_k\cos\bm\theta^v_k\!+\!\delta\bm y^j_k\sin\bm\theta^v_k] \\
    &= \mE[(\delta\bar{x}_k^j+\delta\bm w_{x,k}^)\cos\bm \theta_k^v)]+\mE[(\delta\bar{y}_k^j+\delta\bm w_{y,k}^j)\sin\bm \theta_k^v]\\
    &=\delta\bar{x}_k^j\mE[\cos\bm\theta_k^v]+\underbrace{\mE[\delta\bm w_{x,k}^j]}_{=0}\mE[\cos\bm\theta_k^v]\\
    &\quad +\delta\bar{y}_k^j\mE[\sin\bm\theta_k^v]+\underbrace{\mE[\delta\bm w_{y,k}^j]}_{=0}\mE[\sin\bm\theta_k^v]\\
    &=\delta\bar{x}_k^j\mE[\cos\bm\theta_k^v]+\delta\bar{y}_k^j\mE[\sin\bm\theta_k^v].
    \end{aligned}
\end{equation}
It further holds that
\begin{equation}
\label{eq: E eta_j,k,1 more}
    \begin{aligned}
   &\mE[\bm{\eta}_{j,k,1}]= \delta\bar{x}_k^j\mE[\cos(\bar{\theta}_{k}^v+\bm w_{\theta,k}^v)]+\delta\bar{y}_k^j\mE[\sin(\bar{\theta}_{k}^v+\bm w_{\theta,k}^v)]\nonumber\\
   &=\delta\bar{x}_k^j\cos\bar{\theta}_{k}^v\mE[\cos\bm w_{\theta,k}^v]-\delta\bar{x}_k^j\cos\bar{\theta}_{k}^v\underbrace{\mE[\sin\bm w_{\theta,k}^v]}_{=0}\nonumber\\
   &\quad +\delta\bar{y}_k^j\sin\bar{\theta}_{k}^v\mE[\cos\bm w_{\theta,k}^v]+\delta\bar{y}_k^j\cos\bar{\theta}_{k}^v\underbrace{\mE[\sin\bm w_{\theta,k}^v]}_{=0}\nonumber\\
    &=(\delta\bar{x}_k^j\cos\bar{\theta}_{k}^v+\delta\bar{y}_k^j\sin\bar{\theta}_{k}^v)\mE[\cos\bm w_{\theta,k}^v], 
    \end{aligned}
\end{equation}
and similarly, it holds that
\begin{equation}
\label{eq: E eta_j,k,2}
\begin{aligned}
\mE[\bm{\eta}_{j,k,2}]
\!=\! (-\delta\bar{x}^j_k \sin\bar\theta^v_k \!+\! \delta\bar{y}^j_k \cos\bar\theta^v_k )\mE[\cos\bm{w}_{\theta,k}^v].
\end{aligned}
\end{equation}
Now consider the $6$-dimensional random vector
\begin{equation}
\label{eq: w_s,j,k}
\bm{w}_{s,j,k}:=
\begin{bmatrix}
\cos\bm{w}_{\theta,k}^v\\
\sin\bm{w}_{\theta,k}^v\\
\delta\bm{w}_{x,k}^j\cos\bm{w}_{\theta,k}^v\\
\delta\bm{w}_{y,k}^j\cos\bm{w}_{\theta,k}^v\\
\delta\bm{w}_{x,k}^j\sin\bm{w}_{\theta,k}^v\\
\delta\bm{w}_{y,k}^j\sin\bm{w}_{\theta,k}^v
\end{bmatrix}
\end{equation}
whose covariance $\Sigma_{\bm{w}_{s,j,k}}:=\mathrm{Cov}(\bm{w}_{s,j,k})$ is known a priori since it can be computed off-line from the definition by Assumption~\ref{ass: known_sin_cos}.
On the other hand, note that $\bm{\eta}_{j,k,1}$ and $\bm{\eta}_{j,k,2}$ can be written:
\begin{equation}
\label{eq: rewrite eta}
\begin{aligned}   
    &\bm{\eta}_{j,k,1}=V_{\bm{\eta}_{j,k,1}}^T \bm{w}_{s,j,k},\\
    & \bm{\eta}_{j,k,2}=V_{\bm{\eta}_{j,k,2}}^T\bm{w}_{s,j,k},
\end{aligned}
\end{equation}
where
\begin{equation}
\label{eq: V eta}
\begin{aligned}
V_{\bm{\eta}_{j,k,1}}
=\big[&\delta\bar{x}^j_k\cos\bar\theta^v_k+\delta\bar{y}^j_k\sin\bar\theta^v_k,\\
-&\delta\bar{x}^j_k\sin\bar\theta^v_k+\delta\bar{y}^j_k\cos\bar\theta^v_k,\\
&\cos\bar\theta^v_k,\ \sin\bar{\theta}^v_k,-\sin\bar{\theta}^v_k,\ \cos\bar{\theta}^v_k \big]^T,\\
V_{\bm{\eta}_{j,k,2}}
=\big[&-\delta\bar{x}^j_k\sin\bar\theta^v_k+\delta\bar{y}^j_k\cos\bar{\theta}^v_k,\\
&-\delta\bar{x}^j_k\cos\bar{\theta}^v_k -\delta\bar{y}^j_k\sin\bar{\theta}^v_k,\\
&-\sin\bar{\theta}^v_k,\ \cos\bar{\theta}^v_k,\ -\cos\bar{\theta}^v_k,\ -\sin\bar\theta^v_k\ \big]^T.
\end{aligned}
\end{equation}
Hence
\begin{equation}
\label{eq: sigma eta}
\begin{aligned}
\sigma^2(\bm{\eta}_{j,k,1}) &= V_{\bm{\eta}_{j,k,1}}^{\!T}\Sigma_{\bm{w}_{s,j,k}}V_{\bm{\eta}_{j,k,1}},\\
\sigma^2(\bm{\eta}_{j,k,2}) &= V_{\bm{\eta}_{j,k,2}}^{\!T}\Sigma_{\bm{w}_{s,j,k}}V_{\bm{\eta}_{j,k,2}},\\
\mathrm{Cov}(\bm{\eta}_{j,k,1},\bm{\eta}_{j,k,2})
&= V_{\bm{\eta}_{j,k,1}}^{\!T}\Sigma_{\bm{w}_{s,j,k}}V_{\bm{\eta}_{j,k,2}},
\end{aligned}
\end{equation}
and finally substitute \eqref{eq: E eta_j,k,1}–\eqref{eq: sigma eta} into \eqref{eq: E, cov p^j_k}.

\subsection{Computation of the expectation and the covariance matrix in Theorem~\ref{theorem: polygon obstacle deterministic nonlinear constraints}} 
\label{{app: expectation and covariance in polygon condition}}
In this section, we compute the expectation and covariance matrix in Theorem~\ref{theorem: polygon obstacle deterministic nonlinear constraints} in detail. Similar to Appendix~\ref{app: expectation and covariance in round condition}, we first decompose the variables in to the expression of the vehicle's and the obstacles' poses $(\bm{x}^v_k, \bm{y}^v_k, \bm{\theta}^v_k)$,  $(\bm{x}^{o_j}_k\bm{y}^{o_j}_k, \bm{\theta}^{o_j}_k)$, and then compute the expectations and the covariances based on the expectations and the covariances of the noises' functions. Here we only give the final expression for brevity. More precisely, 
\begin{equation*}
\label{eq: E cov q^j1_k, q^j2_k}
\begin{aligned}
\mE[\bm{q}^{j,1}_k] &= \bar{A}^{o_j}\mE\Big[\begin{bmatrix}
\quad \cos\delta\bm{\theta}^j_k & \sin\delta\bm{\theta}^j_k 
\end{bmatrix}^T \Big],\\
\mE[\bm{q}^{j,2}_k] &= \bar{A}^{o_j}\mE\Big[\begin{bmatrix}
-\sin\delta\bm{\theta}^j_k & \cos\delta\bm{\theta}^j_k 
\end{bmatrix}^T \Big],\\
\Sigma_{\bm{q}^{j,1}_k}
&= \bar{A}^{o_j}\ \Sigma\!\left(\begin{bmatrix}
\quad \cos\delta\bm{\theta}^j_k & \sin\delta\bm{\theta}^j_k 
\end{bmatrix}^T\right)\ \bar{A}^{o_j,T},\\
\Sigma_{\bm{q}^{j,2}_k}
&= \bar{A}^{o_j}\ \Sigma\!\left(\begin{bmatrix}
-\sin\delta\bm{\theta}^j_k & \cos\delta\bm{\theta}^j_k 
\end{bmatrix}^T\right)\ \bar{A}^{o_j,T},
\end{aligned}
\end{equation*}
where similarly to Appendix \ref{app: expectation and covariance in round condition}, recall that $\delta\bm\theta^j_k=\delta\bar\theta^j_k+\delta\bm w^j_{\theta,k}$, we have
\begin{equation*}
\label{eq: E cov cos delta theta sin delta theta}
\begin{aligned}
&\mE[\cos\delta\bm{\theta}^j_k] \approx \cos\delta\bar{\theta}^j_k\mE[\cos\delta\bm{w}^j_{\theta, k}],\\
&\mE[\sin\delta\bm{\theta}^j_k] \approx \sin\delta\bar{\theta}^j_k\mE[\cos\delta\bm{w}^j_{\theta, k}],\\
&\sigma^2(\cos\delta\bm{\theta}^j_k) \!=\! [\cos\delta\bar{\theta}^j_k, \!-\!\sin\delta\bar{\theta}^j_k
]\Sigma_{\delta\bm{w}^j_{\theta, k}}[\cos\delta\bar{\theta}^j_k, \!-\!\sin\delta\bar{\theta}^j_k
]^T, \\
&\sigma^2(\sin\delta\bm{\theta}^j_k) \!=\! [\sin\delta\bar{\theta}^j_k, \;\;\cos\delta\bar{\theta}^j_k
]\Sigma_{\delta\bm{w}^j_{\theta, k}}[\sin\delta\bar{\theta}^j_k, \;\;\cos\delta\bar{\theta}^j_k
]^T, \\
&\mathrm{Cov}(\cos\delta\bm{\theta}^j_k, \sin\delta\bm{\theta}^j_k) \\
&\qquad \qquad \;\;\; \!=\! [\cos\delta\bar{\theta}^j_k, \!-\!\sin\delta\bar{\theta}^j_k
]\Sigma_{\delta\bm{w}^j_{\theta, k}}[\sin\delta\bar{\theta}^j_k, \;\;\cos\delta\bar{\theta}^j_k
]^T, 
\end{aligned}
\end{equation*}
where $\Sigma_{\delta\bm{w}^j_{\theta, k}} \in \mR^{2\times2}$ is the covariance matrix of the vector $\begin{bmatrix} \cos\delta\bm{w}^j_{\theta, k} & \sin\delta\bm{w}^j_{\theta, k}
\end{bmatrix}^T$. 

Next, we compute the expectation of $\bm{r}^j_k$ and $\bm{b}^v_k$. Recall that $\bm{b}^v_k=\bar{A}^v R^{-1}(\bm\theta^v_k)\bm t^v_k+\bar{b}^v$, we obtain
\begin{equation*}
\label{eq: detailed b_v^k}
\begin{aligned}
R^{-1}(\bm\theta^v_k)\bm t^v_k
\!=\!
\begin{bmatrix}
\;\;\bm x^v_k\cos\bm\theta^v_k\!+\!\bm y^v_k\sin\bm\theta^v_k\\
-\bm x^v_k\sin\bm\theta^v_k\!+\!\bm y^v_k\cos\bm\theta^v_k
\end{bmatrix}
\!=:\! \begin{bmatrix}\bm{\zeta}_{v,k,1}\\ \bm{\zeta}_{v,k,2}\end{bmatrix}.
\end{aligned}
\end{equation*}
Hence
\begin{equation*}
\label{eq: E b_v}
\begin{aligned}
\mE[\bm{b}^v_k] & = \mE\big[\bar{A}^vR^{-1}(\bm{\theta}^v_k) \bm{t}^v_k + \bar{b}^v\big]\\
& = \bar{A}^v\mE\big[R^{-1}(\bm{\theta}^v_k) \bm{t}^v_k \big] + \bar{b}^v\\
& = \bar{A}^v\mE\Big[\begin{bmatrix}\bm{\zeta}_{v,k,1}& \bm{\zeta}_{v,k,2}\end{bmatrix}^T\Big]+\bar{b}^v.
\end{aligned}
\end{equation*}
Then, the computation of $\mE[\bm{\zeta}_{v,k,1}]$, $\mE[\bm{\zeta}_{v,k,2}]$ is similar to~\eqref{eq: E eta_j,k,1}-\eqref{eq: E eta_j,k,2}, here we directly give the final expression as follows
\begin{equation*}
\label{eq: E, zeta_j,k}
\begin{aligned}
&\mE[\bm{\zeta}_{v,k,1}]
\!=\! \;\; \bar{x}^v_k \cos\bar\theta^v_k \mE[\cos\bm{w}_{\theta,k}^v] \!+\! \bar{y}^v_k \sin\bar\theta^v_k \mE[\cos\bm{w}_{\theta,k}^v].\\
&\mE[\bm{\zeta}_{v,k,2}]
\!=\! -\bar{x}^v_k \sin\bar\theta^v_k \mE[\cos\bm{w}_{\theta,k}^v] \!+\! \bar{y}^v_k \cos\bar\theta^v_k \mE[\cos\bm{w}_{\theta,k}^v].\\
\end{aligned}
\end{equation*}
To compute $\mE[\bm{r}^j_k]$, we first denote
\begin{equation*}
    \label{eq: kappa}
    \begin{aligned}
    & \bm{\kappa}_{j,k,1} \!:=\! \; \delta\bm{x}^j_k \cos\bm{\theta}^{o_j}_k \!+\! \delta\bm{y}^j_k\sin\bm{\theta}^{o_j}_k \!+\! \bar{b}^v_1\cos\delta\bm{\theta}^j_k \!-\! \bar{b}^v_2\sin\delta\bm{\theta}^j_k,\\ 
    &\bm{\kappa}_{j,k,2} \!:=\! \!-\!\delta \bm{x}^j_k \sin\bm{\theta}^{o_j}_k \!+\! \delta \bm{y}^j_k\cos\bm{\theta}^{o_j}_k \!+\! \bar{b}^v_1\sin\delta\bm{\theta}^j_k \!+\! \bar{b}^v_2\cos\delta\bm{\theta}^j_k,
    \end{aligned}
\end{equation*}
where  $\bar{b}^v_1$, $\bar{b}^v_2$ represent the $1^{\rm st}, 2^{\rm nd}$ element in $\bar{b}^v$, respectively. Then, recall that $\bm{r}^j_k = \bar{A}^{o_j} R_g(\delta \bm{\theta}^j_k)\bm{b}^v_k \!-\! \bm{b}^{o_j}_k$, $R_g(\delta \bm{\theta}^j_k) = \begin{bmatrix}
R(\delta \bm{\theta}^j_k) &  \bm{0}_{2 \times 2}
\end{bmatrix}$ and $\bar{A}^v = \begin{bmatrix}
1 & 0 & -1 & 0\\ 0 & 1 & 0 & -1\\
\end{bmatrix}^T = \begin{bmatrix}
\mathbf{I}_{2 \times 2} \\
-\mathbf{I}_{2 \times 2} \\
\end{bmatrix}$, we obtain
\begin{equation*}
\label{eq: r^j_k detailed}
\begin{aligned}
\bm{r}^j_k
=& \bar{A}^{o_j}
\begin{bmatrix} R(\delta \bm{\theta}^j_k) & \bm{0}_{2\times2} \end{bmatrix}
\Big(
\begin{bmatrix}\mathbf{I}_{2\times2}\\ -\mathbf{I}_{2\times2}\end{bmatrix}
R^{-1}(\bm{\theta}^v_k)\bm{t}^v_k + \bar{b}^v
\Big)\\
&-\Big(\bar{A}^{o_j} R^{-1}(\bm{\theta}^{o_j}_k)\bm{t}^{o_j}_k+\bar{b}^{o_j}\Big)\\
=& \bar{A}^{o_j}
R(\delta \bm{\theta}^j_k) R^{-1}(\bm{\theta}^v_k)\bm{t}^v_k - \bar{A}^{o_j} R^{-1}(\bm{\theta}^{o_j}_k)\bm{t}^{o_j}_k\\
&+\bar{A}^{o_j} R_g(\delta \bm{\theta}^j_k)\bar{b}^v
-\bar{b}^{o_j}\\
=& \bar{A}^{o_j} R^{-1}(\bm{\theta}^{o_j}_k)\big(\bm{t}^v_k-\bm{t}^{o_j}_k\big)
+\bar{A}^{o_j} R_g(\delta \bm{\theta}^j_k)\bar{b}^v
-\bar{b}^{o_j}\\
=& \bar{A}^{o_j} \begin{bmatrix}
\bm{\kappa}_{j,k,1}&
\bm{\kappa}_{j,k,2}
\end{bmatrix}^T-\bar{b}^{o_j}.
\end{aligned}
\end{equation*}
Hence
\begin{equation*}
\label{E r_j}
\begin{aligned}
\mE[\bm{r}^j_k] &= \mE\Big[\bar{A}^{o_j} \begin{bmatrix}
\bm{\kappa}_{j,k,1}&
\bm{\kappa}_{j,k,2}
\end{bmatrix}^T-\bar{b}^{o_j}\Big]\\
&= \bar{A}^{o_j}\mE\Big[\begin{bmatrix}
\bm{\kappa}_{j,k,1}&
\bm{\kappa}_{j,k,2}
\end{bmatrix}^T\Big] -\bar{b}^{o_j},
\end{aligned}
\end{equation*}
and
\begin{equation*}
\label{eq: E kappa}
\begin{aligned}
\mE[\bm{\kappa}_{j,k,1}] &=  (\;\delta\bar{x}^j_k\cos\bar{\theta}^{o_j}_k \!+\! \delta\bar{y}^j_k \sin\bar{\theta}^{o_j}_k)\mE[\cos\bm{w}_{\bm{\theta},k}^{o_j}],\\
& + (\bar{b}^v_1\cos\delta\bar{\theta}^j_k - \bar{b}^v_2\sin\delta\bar{\theta}^j_k)\mE[\cos\delta\bm{w}^j_{\theta, k}],\\
\mE[\bm{\kappa}_{j,k,2}] &= (\!-\!\delta\bar{x}^j_k\sin\bar{\theta}^{o_j}_k \!+\! \delta\bar{y}^j_k \cos\bar{\theta}^{o_j}_k)\mE[\cos\bm{w}_{\bm{\theta},k}^{o_j}],\!\\
& + (\bar{b}^v_1\sin\delta\bar{\theta}^j_k + \bar{b}^v_2\cos\delta\bar{\theta}^j_k)\mE[\cos\delta\bm{w}^j_{\theta, k}].\\
\end{aligned}
\end{equation*}

To compute the covariance matrix $\Sigma_{\bm{\psi}_{j,k,3}}$, recall that $\bm{\psi}_{j,k,3} := \bm{r}^{j,T}_k\lambda^j_k -\bm{b}^{v,T}_k\xi^j_k$, we obtain
\begin{equation*}
\label{eq: Sigma psi_jk3}
    \Sigma_{\bm{\psi}_{j,k,3}} = \begin{bmatrix}-\xi^{j,T}_k & \lambda^{j,T}_k \end{bmatrix}  \Sigma_{[\bm{b}^v_k; \bm{r}^j_k]} \begin{bmatrix}-\xi^j_k \\ \;\; \lambda^j_k \end{bmatrix}.
\end{equation*}
Next, we compute $\Sigma_{[\bm{b}^v_k; \bm{r}^j_k]}$.
More precisely, recall that $\bm{b}^v_k=\bar{A}^v \begin{bmatrix} \bm{\zeta}_{v,k,1}& \bm{\zeta}_{v,k,2} \end{bmatrix}^T+\bar{b}^v$ and $\bm{r}^j_k = \bar{A}^{o_j} \begin{bmatrix}
\bm{\kappa}_{j,k,1}&
\bm{\kappa}_{j,k,2}
\end{bmatrix}^T-\bar{b}^{o_j}$, hence
\begin{equation*}
    \label{eq: cov}
    \begin{aligned}
    \Sigma_{[\bm{b}^v_k; \bm{r}^j_k]} &= \begin{bmatrix} \Sigma_{\bm b^v_k} & \Sigma_{\bm b^v_k,\bm r^j_k}\\ \Sigma_{\bm r^j_k,\bm b^v_k} & \Sigma_{\bm r^j_k} \end{bmatrix} \\
    &= \begin{bmatrix} \bar{A}^v\Sigma_{\bm\zeta_{v,k}}\bar{A}^{v,T} & \bar{A}^v\Sigma_{\bm\zeta_{v,k},\bm\kappa_{j,k}}\bar{A}^{o_j,T}\\ \bar{A}^{o_j}\Sigma_{\bm\kappa_{j,k},\bm\zeta_{v,k}}\bar{A}^{v,T} & \bar{A}^{o_j}\Sigma_{\bm\kappa_{j,k}}\bar{A}^{o_j,T}\end{bmatrix},
    \end{aligned}
\end{equation*}
where $\bm\zeta_{v,k}, \bm\kappa_{j,k}$ are short-hand notation for $\begin{bmatrix} \bm{\zeta}_{v,k,1} & \bm{\zeta}_{v,k,2} \end{bmatrix}^T$, $\begin{bmatrix} \bm{\kappa}_{j,k,1} & \bm{\kappa}_{j,k,2} \end{bmatrix}^T$, respectively. Moreover,
\begin{equation*}
    \label{eq: cov zeta kappa detailed}
    \begin{aligned}
    \Sigma_{\bm\zeta_{v,k}} &= \begin{bmatrix} \sigma^2(\bm{\zeta}_{v,k,1}) & \mathrm{Cov}(\bm{\zeta}_{v,k,1},\bm{\zeta}_{v,k,2})\\
    \mathrm{Cov}(\bm{\zeta}_{v,k,2},\bm{\zeta}_{v,k,1}) & \sigma^2(\bm{\zeta}_{v,k,2}) \end{bmatrix}, \\
     \Sigma_{\bm\kappa_{j,k}} &= \begin{bmatrix} \sigma^2(\bm{\kappa}_{j,k,1}) & \mathrm{Cov}(\bm{\kappa}_{j,k,1},\bm{\kappa}_{j,k,2}) \\ 
    \mathrm{Cov}(\bm{\kappa}_{j,k,2},\bm{\kappa}_{j,k,1}) & \sigma^2(\bm{\kappa}_{j,k,2}) \end{bmatrix},\\
    \Sigma_{\bm\zeta_{v,k},\bm\kappa_{j,k}} &= \Sigma_{\bm\kappa_{j,k},\bm\zeta_{v,k}}^T \\
    &=\begin{bmatrix} \mathrm{Cov}(\bm{\zeta}_{v,k,1},\bm{\kappa}_{j,k,1}) & \mathrm{Cov}(\bm{\zeta}_{v,k,1},\bm{\kappa}_{j,k,2})\\ \mathrm{Cov}(\bm{\zeta}_{v,k,2},\bm{\kappa}_{j,k,1}) & \mathrm{Cov}(\bm{\zeta}_{v,k,2},\bm{\kappa}_{j,k,2}) \end{bmatrix}.\\
    \end{aligned}
\end{equation*}
Then, similar to~\eqref{eq: w_s,j,k}, consider the $14$-dimensional random vector
\begin{equation*}
\label{eq: w_psi,j,k}
\begin{aligned}
\bm{W}_{\psi,j,k}&:=
\begin{bmatrix}
\cos\bm{w}^{o_j}_{\theta,k} & \sin\bm{w}^{o_j}_{\theta,k}\\
\delta\bm w^j_{x,k}\cos\bm{w}^{o_j}_{\theta,k} & \delta\bm w^j_{x,k}\sin\bm{w}^{o_j}_{\theta,k}\\
\delta\bm w^j_{y,k}\cos\bm{w}^{o_j}_{\theta,k} & \delta\bm w^j_{y,k}\sin\bm{w}^{o_j}_{\theta,k}\\
\cos\delta\bm w^j_{\theta,k} & \sin\delta\bm w^j_{\theta,k}\\
\cos\bm{w}^{v}_{\theta,k}   & \sin\bm{w}^{v}_{\theta,k}\\
\bm w^v_{x,k}\cos\bm{w}^{v}_{\theta,k} & \bm w^v_{x,k}\sin\bm{w}^{v}_{\theta,k}\\
\bm w^v_{y,k}\cos\bm{w}^{v}_{\theta,k} & \bm w^v_{y,k}\sin\bm{w}^{v}_{\theta,k}
\end{bmatrix} \in \mR^{7\times2}, \\
\bm{w}_{\psi,j,k} &:= \mathrm{vec}(\bm{W}_{\psi,j,k}^T) \in \mR^{14\times1},
\end{aligned}
\end{equation*}
whose covariance $\Sigma_{\bm{w}_{\psi,j,k}}:=\mathrm{Cov}(\bm{w}_{\psi,j,k})$ is known a priori since it can be computed off-line by Assumption~\ref{ass: known_sin_cos}.
On the other hand, note that the first $8$ dimensions are related to $\bm{\kappa}_{j,k,1}$ and $\bm{\kappa}_{j,k,2}$, and the last $6$ dimensions are related to $\bm{\zeta}_{v,k,1}$ and $\bm{\zeta}_{v,k,2}$, and we can rewrite them as

\begin{equation*}
\label{eq: rewrite zeta kappa}
\begin{aligned}   
    &\bm{\zeta}_{v,k,1}=V_{\bm{\zeta}_{v,k,1}}^T \bm{w}_{\psi,j,k}, \quad \bm{\zeta}_{v,k,2}=V_{\bm{\zeta}_{v,k,2}}^T\bm{w}_{\psi,j,k},\\
    &\bm{\kappa}_{j,k,1}=V_{\bm{\kappa}_{j,k,1}}^T \bm{w}_{\psi,j,k}, \quad \bm{\kappa}_{j,k,2}=V_{\bm{\kappa}_{j,k,2}}^T\bm{w}_{\psi,j,k}.
\end{aligned}
\end{equation*}
Then, the covariance of the vector $[\bm\zeta_{v,k}, \bm\kappa_{j,k}]$ shall take the form
\begin{equation*}
\label{eq: cov zeta kappa}
\begin{aligned}
\sigma^2(\bm{\zeta}_{v,k,1}) &= V_{\bm{\zeta}_{v,k,1}}^T \Sigma_{\bm{w}_{\psi,j,k}} V_{\bm{\zeta}_{v,k,1}}, \\\sigma^2(\bm{\zeta}_{v,k,2}) &= V_{\bm{\zeta}_{v,k,2}}^T \Sigma_{\bm{w}_{\psi,j,k}} V_{\bm{\zeta}_{v,k,2}},\\
\sigma^2(\bm{\kappa}_{j,k,1}) &= V_{\bm{\kappa}_{j,k,1}}^T \Sigma_{\bm{w}_{\psi,j,k}} V_{\bm{\kappa}_{j,k,1}}, \\
\sigma^2(\bm{\kappa}_{j,k,2}) &= V_{\bm{\kappa}_{j,k,2}}^T \Sigma_{\bm{w}_{\psi,j,k}} V_{\bm{\kappa}_{j,k,2}},\\
\sigma(\bm{\zeta}_{v,k,1}, \bm{\zeta}_{v,k,2}) &= V_{\bm{\zeta}_{v,k,1}}^T \Sigma_{\bm{w}_{\psi,j,k}} V_{\bm{\zeta}_{v,k,2}},\\
\sigma(\bm{\kappa}_{j,k,1}, \bm{\kappa}_{j,k,2}) &= V_{\bm{\kappa}_{j,k,1}}^T \Sigma_{\bm{w}_{\psi,j,k}} V_{\bm{\kappa}_{j,k,2}},\\
\sigma(\bm{\zeta}_{v,k,1}, \bm{\kappa}_{j,k,1}) &= V_{\bm{\zeta}_{v,k,1}}^T \Sigma_{\bm{w}_{\psi,j,k}} V_{\bm{\kappa}_{j,k,1}},\\
\sigma(\bm{\zeta}_{v,k,1}, \bm{\kappa}_{j,k,2}) &= V_{\bm{\zeta}_{v,k,1}}^T \Sigma_{\bm{w}_{\psi,j,k}} V_{\bm{\kappa}_{j,k,2}},\\
\sigma(\bm{\zeta}_{v,k,2}, \bm{\kappa}_{j,k,1}) &= V_{\bm{\zeta}_{v,k,2}}^T \Sigma_{\bm{w}_{\psi,j,k}} V_{\bm{\kappa}_{j,k,1}},\\
\sigma(\bm{\zeta}_{v,k,2}, \bm{\kappa}_{j,k,2}) &= V_{\bm{\zeta}_{v,k,2}}^T \Sigma_{\bm{w}_{\psi,j,k}} V_{\bm{\kappa}_{j,k,2}},\\
\end{aligned}
\end{equation*}
and 
\begin{equation*}
\label{eq: V_kappa}
\begin{aligned}
V_{\bm{\zeta}_{v,k,1}} = [&\bm{0}_{1 \times 8}, \bar{x}^v_k\cos\bar{\theta}^v_k + \bar{y}^v_k\sin\bar{\theta}^v_k, \\
&-\bar{x}^v_k\sin\bar{\theta}^v_k + \bar{y}^v_k\cos\bar{\theta}^v_k,\\
&\cos\bar{\theta}^v_k,  \sin\bar{\theta}^v_k, \!-\! \sin\bar{\theta}^v_k,  \cos\bar{\theta}^v_k, ]^T,\\
V_{\bm{\zeta}_{v,k,2}} = [&\bm{0}_{1 \times 8}, -\bar{x}^v_k\sin\bar{\theta}^v_k + \bar{y}^v_k\cos\bar{\theta}^v_k, \\
&-\bar{x}^v_k\cos\bar{\theta}^v_k - \bar{y}^v_k\sin\bar{\theta}^v_k, \\
&-\sin\bar{\theta}^v_k,  \cos\bar{\theta}^v_k, \!-\! \cos\bar{\theta}^v_k, \!-\! \sin\bar{\theta}^v_k]^T,\\
\end{aligned}
\end{equation*}

\begin{equation*}
\label{V_s_j3 V_t_j3}
\begin{aligned}
V_{\bm{\kappa}_{j,k,1}} =[\;\;&\delta\bar{x}^j_k\cos\bar{\theta}^{o_j}_k + \delta\bar{y}^j_k\sin\bar{\theta}^{o_j}_k, \\
-&\delta\bar{x}^j_k\sin\bar{\theta}^{o_j}_k + \delta\bar{y}^j_k\cos\bar{\theta}^{o_j}_k, \\
&\cos\bar{\theta}^{o_j}_k, \sin\bar{\theta}^{o_j}_k, - \sin\bar{\theta}^{o_j}_k,  \cos\bar{\theta}^{o_j}_k,\\
&\bar{b}^v_1\cos\delta\bar{\theta}^j_k - \bar{b}^v_2\sin\delta\bar{\theta}^j_k, \\
-&\bar{b}^v_1\sin\delta\bar{\theta}^j_k - \bar{b}^v_2\cos\delta\bar{\theta}^j_k, \bm{0}_{1 \times 6}]^T,\\
V_{\bm{\kappa}_{j,k,2}} = [-&\delta\bar{x}^j_k\sin\bar{\theta}^{o_j}_k + \delta\bar{y}^j_k\cos\bar{\theta}^{o_j}_k, \\
-&\delta\bar{x}^j_k\cos\bar{\theta}^{o_j}_k - \delta\bar{y}^j_k\sin\bar{\theta}^{o_j}_k,\\
-&\sin\bar{\theta}^{o_j}_k,  \cos\bar{\theta}^{o_j}_k, - \cos\bar{\theta}^{o_j}_k, - \sin\bar{\theta}^{o_j}_k, \\
&\bar{b}^v_1\sin\delta\bar{\theta}^j_k + \bar{b}^v_2\cos\delta\bar{\theta}^j_k, \\
&\bar{b}^v_1\cos\delta\bar{\theta}^{o_j}_k - \bar{b}^v_2\sin\delta\bar{\theta}^j_k, \bm{0}_{1 \times 6}]^T.\\
\end{aligned}
\end{equation*}
In practice, we compute the covariance matrix by replacing the nominal state value with the initial guess of the nominal state, this will accelerate the computation when solving the optimization problem.
\begingroup
\sloppy\emergencystretch=3em
\bibliographystyle{IEEEtran}
\bibliography{reference}
\endgroup

\end{document}